\def\eqref#1{equation~\ref{#1}}
\def\1{\bm{1}}
\DeclareMathAlphabet{\mathsfit}{\encodingdefault}{\sfdefault}{m}{sl}
\SetMathAlphabet{\mathsfit}{bold}{\encodingdefault}{\sfdefault}{bx}{n}
\newcommand{\embV}{\mathbf{V}}
\newcommand{\embh}{\mathbf{h}}
\newcommand{\embH}{\mathbf{H}}
\newcommand{\embm}{\mathbf{m}}
\newcommand{\embg}{\mathbf{g}}
\newcommand{\embX}{\mathbf{X}}
\newcommand{\embx}{\mathbf{x}}
\newcommand{\embv}{\mathbf{v}}
\newcommand{\hisH}{\overline{\mathbf{H}}}
\newcommand{\hish}{\overline{\mathbf{h}}}
\newcommand{\hisV}{\overline{\mathbf{V}}}
\newcommand{\hisv}{\overline{\mathbf{v}}}
\newcommand{\vecx}{\mathbf{x}}
\newcommand{\vecy}{\mathbf{y}}
\newcommand{\vecd}{\mathbf{d}}
\newcommand{\temH}{\widehat{\embH}}
\newcommand{\temh}{\widehat{\embh}}
\newcommand{\temV}{\widehat{\embV}}
\newcommand{\temv}{\widehat{\embv}}
\newcommand{\rmd}{\,\mathrm{d}}
\newcommand{\update}{u}
\newcommand{\aggregate}{\oplus}
\newcommand{\loss}{\mathcal{L}}
\newcommand{\inbatch}{\mathcal{V}_{\mathcal{B}}}
\newcommand{\neighbor}[1]{\mathcal{N}(#1)}
\newcommand{\acc}[2]{#1\textcolor{black!70!white}{\scriptsize{$\pm$#2}}}
\newcommand{\mr}[2]{\multirow{#1}{*}{#2}}
\newcommand{\mc}[3]{\multicolumn{#1}{#2}{#3}}
\newcommand{\udfsection}[1]{\noindent\textbf{#1}\, }
\newtheorem{assumption}{Assumption}
\newtheorem{lemma}{Lemma}
\newtheorem{theorem}{Theorem}
\newif\ifproof\prooftrue
\title{LMC: Fast Training of GNNs via subgraph-wise sampling with Provable Convergence}
\author{
Zhihao Shi$\,\,$\textsuperscript{1} ,
Xize Liang$\,\,$\textsuperscript{1}, 
Jie Wang\thanks{Corresponding author: jiewangx@ustc.edu.cn}$\,\,$\textsuperscript{1},\\
\textsuperscript{1} University of Science and Technology of China
}
\begin{document}

\maketitle

\begin{abstract}

The message passing-based graph neural networks (GNNs) have achieved great success in many real-world applications.
However, training GNNs on large-scale graphs suffers from the well-known {\it neighbor explosion} problem, i.e., the exponentially increasing dependencies of nodes with the number of message passing layers.
Subgraph-wise sampling methods---a promising class of mini-batch training techniques---discard messages outside the mini-batches in backward passes to avoid the neighbor explosion problem at the expense of gradient estimation accuracy.
This poses significant challenges to their convergence analysis and convergence speeds, which seriously limits their reliable real-world applications.
To address this challenge, we propose a novel subgraph-wise sampling method with a convergence guarantee, namely \textbf{L}ocal \textbf{M}essage \textbf{C}ompensation (LMC).
To the best of our knowledge, LMC is the {\it first} subgraph-wise sampling method with provable convergence.
The key idea of LMC is to retrieve the discarded messages in backward passes based on a message passing formulation of backward passes.
By efficient and effective compensations for the discarded messages in both forward and backward passes, LMC computes accurate mini-batch gradients and thus accelerates convergence.
We further show that LMC converges to first-order stationary points of GNNs.
Experiments on large-scale benchmark tasks demonstrate that LMC significantly outperforms state-of-the-art subgraph-wise sampling methods in terms of efficiency.

\end{abstract}


\vspace{-4mm}

\section{Introduction}

\vspace{-2mm}

Graph neural networks (GNNs) are powerful frameworks that generate node embeddings for graphs via the iterative message passing (MP) scheme \citep{grl}. At each MP layer, GNNs aggregate messages from each node's neighborhood and then update node embeddings based on aggregation results. Such a scheme has achieved great success in many real-world applications involving graph-structured data, such as search engines \citep{gnn_recommendation}, recommendation systems \citep{gnn_social}, materials engineering \citep{gnn_material}, and molecular property prediction \citep{gnn_mol1, gnn_mol2}.

However, the iterative MP scheme poses challenges to training GNNs on large-scale graphs. One commonly-seen approach to scale deep models to arbitrarily large-scale data with limited GPU memory is to approximate full-batch gradients by mini-batch gradients. Nevertheless, for the graph-structured data, the computational costs for computing the loss across a mini-batch of nodes and the corresponding mini-batch gradients are expensive due to the well-known {\it neighbor explosion} problem. Specifically, the embedding of a node at the $k$-th MP layer recursively depends on the embeddings of its neighbors at the $(k-1)$-th MP layer. Thus, the complexity grows exponentially with the number of MP layers.

To deal with the neighbor explosion problem, recent works propose various sampling techniques to reduce the number of nodes involved in message passing \citep{dlg}. For example, node-wise \citep{graphsage, vrgcn} and layer-wise \citep{fastgcn, ladies, adapt} sampling methods recursively sample neighbors over MP layers to estimate node embeddings and corresponding mini-batch gradients. Unlike the recursive fashion, subgraph-wise sampling methods \citep{cluster_gcn, graphsaint, gas, shadow_gnn} adopt a cheap and simple one-shot sampling fashion, i.e., sampling the same subgraph constructed based on a mini-batch for different MP layers. By discarding messages outside the mini-batches, subgraph-wise sampling methods restrict message passing to the mini-batches such that the complexity grows linearly with the number of MP layers. Moreover, subgraph-wise sampling methods are applicable to a wide range of GNN architectures by directly running GNNs on the subgraphs constructed by the sampled mini-batches \citep{gas}. Because of these advantages, subgraph-wise sampling methods have recently drawn increasing attention.

Despite the empirical success of subgraph-wise sampling methods, discarding messages outside the mini-batch sacrifices the gradient estimation accuracy, which poses significant challenges to their convergence behaviors. First, recent works \citep{vrgcn, mvs} demonstrate that the inaccurate mini-batch gradients seriously hurt the convergence speeds of GNNs.
Second, in Section \ref{sec:smallbatch size}, we demonstrate that many subgraph-wise sampling methods are difficult to resemble full-batch performance under small batch sizes, which we usually use to avoid running out of GPU memory in practice. These issues seriously limit the real-world applications of GNNs.

In this paper, we propose a novel subgraph-wise sampling method with a convergence guarantee, namely \textbf{L}ocal \textbf{M}essage \textbf{C}ompensation (LMC), which uses efficient and effective compensations to correct the biases of mini-batch gradients and thus accelerates convergence.
To the best of our knowledge, LMC is the {\it first} subgraph-wise sampling method with provable convergence.
Specifically, we first propose unbiased mini-batch gradients for the one-shot sampling fashion, which helps decompose the gradient computation errors into two components: the bias from the discarded messages and the variance of the unbiased mini-batch gradients.
Second, based on a message passing formulation of backward passes, we retrieve the messages discarded by existing subgraph-wise sampling methods during the approximation to the unbiased mini-batch gradients.
Finally, we propose efficient and effective compensations for the discarded messages with a combination of incomplete up-to-date messages and messages generated from historical information in previous iterations, avoiding the exponentially growing time and memory consumption.
An appealing feature of the resulting mechanism is that it can effectively correct the biases of mini-batch gradients, leading to accurate gradient estimation and the speed-up of convergence.
We further show that LMC converges to first-order stationary points of GNNs.
Notably, the convergence of LMC is based on the interactions between mini-batch nodes and their $1$-hop neighbors, without the recursive expansion of neighborhoods to aggregate information far away from the mini-batches. Experiments on large-scale benchmark tasks demonstrate that LMC significantly outperforms state-of-the-art subgraph-wise sampling methods in terms of efficiency. Moreover, under small batch sizes, LMC outperforms the baselines and resembles the prediction performance of full-batch methods.

\vspace{-5mm}

\section{Related Work}

\vspace{-4mm}

In this section, we discuss some works related to our proposed method.

\udfsection{Subgraph-wise Sampling Methods.} Subgraph-wise sampling methods sample a mini-batch and then construct the same subgraph based on it for different MP layers \citep{dlg}.
For example, Cluster-GCN \citep{cluster_gcn} and GraphSAINT \citep{graphsaint} construct the subgraph induced by a sampled mini-batch. They encourage connections between the sampled nodes by graph clustering methods (e.g., METIS \citep{metis1} and Graclus \citep{graclus}), edge, node, or random-walk-based samplers.
GNNAutoScale (GAS) \citep{gas} and MVS-GNN \citep{mvs} use historical embeddings to generate messages outside a sampled subgraph, maintaining the expressiveness of the original GNNs.

\udfsection{Recursive Graph Sampling Methods.}
Both node-wise and layer-wise sampling methods recursively sample neighbors over MP layers and then construct different computation graphs for each MP layer.
Node-wise sampling methods \citep{graphsage, vrgcn} aggregate messages from a small subset of sampled neighborhoods at each MP layer to decrease the bases in the exponentially increasing  dependencies.
To avoid the exponentially growing computation, layer-wise sampling methods \citep{fastgcn, ladies, adapt} independently sample nodes for each MP layer and then use importance sampling to reduce variance, resulting in a constant sample size in each MP layer.

\udfsection{Pre-Processing Methods.} Another line for scalable graph neural networks is to develop pre-processing Methods. They aggregate the raw input features and then take the pre-processing features as input into subsequent models. As the aggregation has no parameters, they can use stochastic gradient descent to train the subsequent models without the neighbor explosion problem.
While they are efficient in training and inference, they are not applicable to powerful GNNs with a trainable aggregation process.

\udfsection{Historical Values as an Affordable Approximation.} The historical values are affordable approximations of the exact values in practice. However, they suffer from frequent data transfers to/from the GPU and the staleness problem. For example, in node-wise sampling, VR-GCN \citep{vrgcn} uses historical embeddings to reduce the variance from neighbor sampling \citep{graphsage}.
GAS \citep{gas} proposes a concurrent mini-batch execution to transfer the active historical embeddings to and from the GPU, leading to comparable runtime with the standard full-batch approach.
GraphFM-IB and GraphFM-OB \citep{graphfm} apply a momentum step on historical embeddings for node-wise and subgraph-wise sampling methods with historical embeddings, respectively, to alleviate the staleness problem.
Both LMC and GraphFM-OB use the node embeddings in the mini-batch to alleviate the staleness problem of the node embeddings outside the mini-batch.
We discuss the main differences between LMC and GraphFM-OB in Appendix \ref{sec:diff_graphfm}.

\vspace{-4mm}

\section{Preliminaries}
\vspace{-3mm}
We introduce notations and graph neural networks in Sections \ref{sec:notations} and \ref{sec:convgnn}, respectively.

\vspace{-3mm}

\subsection{Notations}\label{sec:notations}

\vspace{-2mm}

A graph {\small $\mathcal{G}=(\mathcal{V}, \mathcal{E})$} is defined by a set of nodes {\small$\mathcal{V}=\{v_1,v_2,\dots,v_n\}$} and a set of edges {\small $\mathcal{E}$} among these nodes. The set of nodes consists of labeled nodes {\small $\mathcal{V}_{L}$} and unlabeled nodes {\small $\mathcal{V}_{U}:=\mathcal{V} \setminus \mathcal{V}_{L}$}. Let {\small $(v_i,v_j)\in\mathcal{E}$} denote an edge going from node {\small $v_i\in\mathcal{V}$} to node {\small $v_j\in\mathcal{V}$}, {\small $\neighbor{v_i}=\{v_j\in\mathcal{V}| (v_i,v_j)\in\mathcal{E}\}$} denote the neighborhood of node {\small $v_i$}, and {\small $\overline{\mathcal{N}}(v_i)$} denote {\small$\neighbor{v_i} \cup \{v_i\}$}. We assume that {\small $\mathcal{G}$} is undirected, i.e., {\small $v_j \in \neighbor{v_i} \Leftrightarrow v_i \in \neighbor{v_j}$}. Let {\small $\neighbor{\mathcal{S}} = \{v\in\mathcal{V}| (v_i,v_j)\in\mathcal{E},v_i\in\mathcal{S}\}$} denote the neighborhoods of a set of nodes {\small $\mathcal{S}$} and {\small $\overline{\mathcal{N}}(\mathcal{S})$} denote {\small $\neighbor{\mathcal{S}} \cup \mathcal{S}$}. For a positive integer {\small $L$}, {\small$[L]$} denotes {\small$\{1,\ldots,L\}$}. Let the boldface character {\small $\embx_{i} \in \mathbb{R}^{d_x}$} denote the feature of node {\small $v_i$} with dimension {\small $d_x$}. Let {\small $\embh_i\in\mathbb{R}^d$} be the {\small $d$}-dimensional embedding of the node {\small $v_i$}. Let {\small $\embX = (\embx_1,\embx_2,\dots,\embx_n) \in \mathbb{R}^{d_x \times n}$} and {\small $\embH  = (\embh_1,\embh_2,\dots,\embh_n) \in \mathbb{R}^{d \times n}$}. We also denote the embeddings of a set of nodes {\small $\mathcal{S}=\{v_{i_k}\}_{k=1}^{|\mathcal{S}|}$} by {\small $\embH_{\mathcal{S}}  = (\embh_{i_k})_{k=1}^{|\mathcal{S}|} \in \mathbb{R}^{d \times |\mathcal{S}|}$}. For a {\small $p \times q$} matrix {\small $\mathbf{A}\in\mathbb{R}^{p\times q}$}, {\small $\Vec{\mathbf{A}} \in \mathbb{R}^{pq}$} denotes the vectorization of {\small $\mathbf{A}$}, i.e., {\small $\mathbf{A}_{ij} = \Vec{\mathbf{A}}_{i+(j-1)p}$}. We denote the {\small$j$}-th columns of {\small$\mathbf{A}$} by {\small $\mathbf{A}_{j}$}.

\vspace{-3mm}

\subsection{Graph Neural Networks}\label{sec:convgnn}
\vspace{-2mm}
For the semi-supervised node-level prediction, Graph Neural Networks (GNNs) aim to learn node embeddings {\small$\embH$} with parameters {\small$\Theta$} by minimizing the objective function {\small$\mathcal{L}=\frac{1}{|\mathcal{V}_L|}\sum_{i\in\mathcal{V}_L}\ell_{w}(\embh_i, y_i)$} such that {\small$\embH = \mathcal{GNN}(\embX, \mathcal{E};\Theta)$}, where {\small$\ell_{w}$} is the composition of an output layer with parameters {\small$w$} and a loss function.

GNNs follow the message passing framework in which vector messages are exchanged between nodes and updated using neural networks. An {\small$L$}-layer GNN performs {\small$L$} message passing iterations with different parameters {\small$\Theta=(\theta^{l})_{l=1}^L$} to generate the final node embeddings {\small$\embH=\embH^{L}$} as
\begin{align}
    \embH^{l}=f_{\theta^{l}}(\embH^{l-1};\embX),\,\,l\in[L], \label{eqn:transformation_conv}
\end{align}
where {\small$\embH^{0}=\embX$} and {\small$f_{\theta^l}$} is the message passing function of the {\small$l$}-th layer with parameters {\small$\theta^{l}$}.

The message passing function {\small$f_{\theta^{l}}$} follows an \textit{aggregation} and \textit{update} scheme, i.e.,
\begin{align}
    \embh_i^l  =\update_{\theta^{l}}\left(\embh_i^{l-1}, \embm^{l-1}_{\neighbor{v_i}}  ,\embx_i\right); \quad \embm^{l-1}_{\neighbor{v_i}}  =\aggregate_{\theta^{l}}\left( \left\{g_{\theta^{l}}(\embh^{l-1}_j) \mid v_j\in\neighbor{v_i}\right\}\right),\,\,l\in[L],\label{eqn:mpeq_update}
\end{align}

\vspace{-1mm}

where {\small$g_{\theta^{l}}$} is the function generating \textit{individual messages} for each neighbor of {\small$v_i$} in the {\small$l$}-th message passing iteration, {\small$\aggregate_{\theta^{l}}$} is the aggregation function mapping a set of messages to the final message {\small$\embm^{l-1}_{\neighbor{v_i}}$}, and {\small$\update_{\theta^{l}}$} is the update function that combines previous node embedding {\small$\embh_i^{l-1}$}, message {\small$\embm^{l-1}_{\neighbor{v_i}}$}, and features {\small$\embx_i$} to update node embeddings.

\vspace{-4mm}

\section{Message Passing in Backward Passes}

\vspace{-3mm}

In Section \ref{sec:grad}, we introduce the gradients of GNNs and formulate the backward passes as message passing. Then we propose backward SGD, which is an SGD variant, in Section \ref{sec:naive_sgd}.

\vspace{-3mm}

\subsection{Backward Passes and Message passing Formulation}\label{sec:grad}

\vspace{-2mm}

The gradient {\small$\nabla_{w} \mathcal{L}$} is easy to compute and we hence introduce the chain rule to compute {\small$\nabla_{\Theta} L$} in this section, where {\small$\Theta=(\theta^{l})_{l=1}^{L}$}. Let {\small$\embV^{l}\triangleq \nabla_{\embH^{l}} \mathcal{L}$} for {\small$l\in[L]$} be auxiliary variables. It is easy to compute {\small$\vec{\embV}^{L} = \nabla_{\vec{\embH}^{L}} \mathcal{L} = \nabla_{\vec{\embH}} \mathcal{L}$}. By the chain rule, we iteratively compute {\small$\embV^{l}$} based on {\small$\embV^{l+1}$} as
\begin{align}
    \vec{\embV}^{l}=\vec{\phi}_{\theta^{l+1}}(\embV^{l+1})\triangleq(\nabla_{\vec{\embH}^{l}}\vec{f}_{\theta^{l+1}}) \vec{\embV}^{l+1} \label{eqn:auxiliary_recursion}
\end{align}
and
\begin{align}
    \embV^{l}=\phi_{\theta^{l+1}} \circ \cdots \circ
    \phi_{\theta^{L}}(\embV^{L}). \label{eqn:auxiliary_compute}
\end{align}
Then, we compute the gradient {\small$\nabla_{\theta^{l}} \mathcal{L}= (\nabla_{\theta^l} \vec{f}_{\theta^l})\vec{\embV}^{l},\,l\in[L]$} by using autograd packages for vector-Jacobian product.

We formulate backward passes, i.e., the processes of iterating Equation (\ref{eqn:auxiliary_recursion}), as message passing. To see this, we need to notice that Equation (\ref{eqn:auxiliary_recursion}) is equivalent to 
\begin{align}
    \embV^{l}_i=\sum_{v_j\in\neighbor{v_i}}\left(\nabla_{\embh^{l}_i} \update_{\theta^{l+1}}(\embh^{l}_j, \embm^{l}_{\neighbor{v_j}}  ,\embx_j)\right)\embV^{l+1}_j,\,\, i\in[n], \label{eqn:mpeq_auxiliary}
\end{align}
where {\small$\embV_k^l$} is the {\small$k$}-th column of {\small$\embV^l$} and {\small$\embm^{l}_{\neighbor{v_j}}$} is a function of {\small$\embh^{l}_i$} defined in Equation (\ref{eqn:mpeq_update}). Equation (\ref{eqn:mpeq_auxiliary}) uses {\small$\left(\nabla_{\embh^{l}_i} \update_{\theta^{l+1}}(\embh^{l}_j, \embm^{l}_{\neighbor{v_j}}  ,\embx_j)\right) \embV_j^{l+1}$}, sum aggregation, and the identity mapping as the generation function, the aggregation function, and the update function, respectively.

\vspace{-3mm}

\subsection{Backward SGD}  \label{sec:naive_sgd}

\vspace{-2mm}

In this section, we develop an SGD variant---backward SGD, which provides unbiased gradient estimations based on the message passing formulation of backward passes. Backward SGD is the basis of our proposed subgraph-wise sampling method, i.e., LMC, in Section \ref{sec:compensation}.

Given a sampled mini-batch {\small$\mathcal{V}_{\mathcal{B}}$}, suppose that we have computed exact node embeddings {\small$(\embH^{l}_{\mathcal{V}_{\mathcal{B}}})_{l=1}^L$} and auxiliary variables {\small$(\embV^{l}_{\mathcal{V}_{\mathcal{B}}})_{l=1}^L$} of nodes in {\small$\mathcal{V}_{\mathcal{B}}$}. To simplify the analysis, we assume that {\small$\inbatch$} is uniformly sampled from {\small$\mathcal{V}$} and the corresponding set of labeled nodes {\small $\mathcal{V}_{L_{\mathcal{B}}} := \inbatch \cap \mathcal{V}_{L}$} is uniformly sampled from {\small $\mathcal{V}_{L}$}. When the sampling is not uniform, we use the normalization technique \citep{graphsaint} to enforce the assumption (please see Appendix \ref{sec:normalization}).

First, backward SGD computes the mini-batch gradient {\small$\mathbf{g}_{w}(\mathcal{V}_\mathcal{B})$} for parameters {\small${w}$} by the derivative of mini-batch loss {\small$\mathcal{L}_{\mathcal{V}_\mathcal{B}}=\frac{1}{|\mathcal{V}_{L_{\mathcal{B}}}|}\sum_{v_j\in\mathcal{V}_{L_\mathcal{B}}}\ell_{w}(\embh_j,y_j)$} as
\begin{align}
    \mathbf{g}_{w}(\mathcal{V}_\mathcal{B})=\frac{1}{|\mathcal{V}_{L_{\mathcal{B}}}|}\sum_{v_j\in\mathcal{V}_{L_\mathcal{B}}} \nabla_{w} \ell_{w}(\embh_j,y_j).\label{eqn:mini-batch_grad_w}
\end{align}

Then, backward SGD computes the mini-batch gradient {\small$\mathbf{g}_{\theta^{l}}(\mathcal{V}_\mathcal{B})$} for parameters {\small$\theta^{l}$} as
\begin{align}
    \mathbf{g}_{\theta^{l}}(\mathcal{V}_\mathcal{B})=\frac{|\mathcal{V}|}{|\mathcal{V}_\mathcal{B}|}\sum_{v_j\in\mathcal{V}_\mathcal{B}}\left(\nabla_{\theta^{l}}u_{\theta^l}(\embh^{l-1}_j, \embm_{\neighbor{v_j}}^{l-1}, \embx_j)\right)\embV^{l}_j,\,\, l\in[L]. \label{eqn:mini-batch_grad_theta}
\end{align}

\vspace{-3mm}

Note that the mini-batch gradients {\small$\mathbf{g}_{\theta^{l}}(\mathcal{V}_\mathcal{B})$} for different {\small$l\in[L]$} are based on the same mini-batch {\small$\inbatch$}, which facilitates designing subgraph-wise sampling methods based on backward SGD. Another appealing feature of backward SGD is that the mini-batch gradients {\small$\mathbf{g}_{w}(\mathcal{V}_\mathcal{B})$} and {\small$\mathbf{g}_{\theta^{l}}(\mathcal{V}_\mathcal{B}),\,l\in[L]$} are unbiased, as shown in the following theorem. Please see Appendix \ref{appendix:proof_thm1} for the detailed proof.

\begin{theorem}\label{thm:unbiased}
    Suppose that a mini-batch {\small$\inbatch$} is uniformly sampled from {\small$\mathcal{V}$} and the corresponding labeled nodes {\small$\mathcal{V}_{L_{\mathcal{B}}} = \inbatch \cap \mathcal{V}_{L}$} is uniformly sampled from {\small$\mathcal{V}_{L}$}. Then the mini-batch gradients {\small$ \mathbf{g}_w(\inbatch)$} and {\small$\mathbf{g}_{\theta^{l}}(\inbatch),\,l\in[L]$} in Equations (\ref{eqn:mini-batch_grad_w}) and (\ref{eqn:mini-batch_grad_theta}) are unbiased.
\end{theorem}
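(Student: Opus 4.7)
The plan is to prove both unbiasedness statements by computing expectations of the mini-batch estimators under uniform sampling and matching them term-by-term to the full-batch expressions for $\nabla_w \mathcal{L}$ and $\nabla_{\theta^l}\mathcal{L}$. The essential tool is linearity of expectation together with the elementary fact that, under uniform sampling of a subset of fixed size $b$ from a universe of size $n$, each element is included with probability $b/n$. Throughout, I will treat the node embeddings $\embH^{l}$ and auxiliary variables $\embV^{l}$ as fixed, since by assumption the backward SGD computes them exactly; randomness enters only through the choice of $\inbatch$.

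First I would handle $\mathbf{g}_{w}(\inbatch)$. Writing out the full-batch gradient gives $\nabla_{w}\mathcal{L} = \tfrac{1}{|\mathcal{V}_{L}|}\sum_{v_i \in \mathcal{V}_{L}} \nabla_{w}\ell_{w}(\embh_{i},y_{i})$. Since $\mathcal{V}_{L_{\mathcal{B}}}$ is assumed to be a uniform sample from $\mathcal{V}_{L}$ of fixed cardinality, for every $v_j \in \mathcal{V}_{L}$ we have $\mathbb{P}[v_j \in \mathcal{V}_{L_{\mathcal{B}}}] = |\mathcal{V}_{L_{\mathcal{B}}}|/|\mathcal{V}_{L}|$. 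Introducing indicators $\mathbf{1}[v_j \in \mathcal{V}_{L_{\mathcal{B}}}]$ to rewrite the sum over $\mathcal{V}_{L_{\mathcal{B}}}$ as a sum over $\mathcal{V}_{L}$ and applying linearity of expectation yields the desired equality. The same argument (with $\mathcal{V}_{\mathcal{B}}$ in place of $\mathcal{V}_{L_{\mathcal{B}}}$ and the prefactor $|\mathcal{V}|/|\mathcal{V}_{\mathcal{B}}|$ absorbing the inclusion probability $|\mathcal{V}_{\mathcal{B}}|/|\mathcal{V}|$) will take care of $\mathbf{g}_{\theta^{l}}(\inbatch)$, once the full-batch target has been identified.

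For the second part, I need to first reconcile the vectorized chain rule $\nabla_{\theta^{l}}\mathcal{L} = (\nabla_{\theta^{l}}\vec{f}_{\theta^{l}})\vec{\embV}^{l}$ with a node-wise sum of the form appearing in Equation (\ref{eqn:mini-batch_grad_theta}). The key observation is that $\vec{f}_{\theta^{l}}$ stacks the per-node updates $\embh^{l}_{j} = u_{\theta^{l}}(\embh^{l-1}_{j},\embm^{l-1}_{\neighbor{v_j}},\embx_{j})$ across $j\in[n]$, so the Jacobian $\nabla_{\theta^{l}}\vec{f}_{\theta^{l}}$ decomposes as a concatenation of per-node Jacobians $\nabla_{\theta^{l}} u_{\theta^{l}}(\embh^{l-1}_{j},\embm^{l-1}_{\neighbor{v_j}},\embx_{j})$ contracted against the corresponding columns $\embV^{l}_{j}$ of $\embV^{l}$. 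Pulling this decomposition into the chain rule gives $\nabla_{\theta^{l}}\mathcal{L} = \sum_{v_j \in \mathcal{V}}(\nabla_{\theta^{l}} u_{\theta^{l}}(\embh^{l-1}_{j},\embm^{l-1}_{\neighbor{v_j}},\embx_{j}))\embV^{l}_{j}$. Then applying the same indicator trick to $\mathbf{g}_{\theta^{l}}(\inbatch)$ completes the proof.

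I expect the main obstacle to be the second step, specifically making it rigorous that the vectorized Jacobian-vector product factors exactly as a node-wise sum in which $\embV^{l}_{j}$ contracts only with the $j$-th block of $\nabla_{\theta^{l}}\vec{f}_{\theta^{l}}$. This relies on the fact that $\theta^{l}$ enters $\embh^{l}_{j}$ only through the per-node update, and that $\embh^{l-1}_{j}$ and $\embm^{l-1}_{\neighbor{v_j}}$ are held fixed when computing the direct derivative with respect to $\theta^{l}$ (indirect dependencies through earlier layers are absorbed into $\embV^{l}$ via Equation (\ref{eqn:auxiliary_recursion})). Once this algebraic decomposition is in place, the rest of the argument is a short calculation. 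I would also briefly remark that the assumption of uniform sampling is without loss of generality: the non-uniform case is reduced to the uniform one by the standard GraphSAINT-style reweighting referenced in Appendix \ref{sec:normalization}.
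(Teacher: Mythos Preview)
Your proposal is correct and follows essentially the same approach as the paper's proof: both arguments rest on linearity of expectation under uniform sampling, with the prefactors $1/|\mathcal{V}_{L_{\mathcal{B}}}|$ and $|\mathcal{V}|/|\mathcal{V}_{\mathcal{B}}|$ canceling the inclusion probabilities to recover the full-batch sums. Your version is in fact somewhat more careful than the paper's, which simply asserts the identity $\nabla_{\theta^{l}}\mathcal{L} = \sum_{v_j \in \mathcal{V}}(\nabla_{\theta^{l}} u_{\theta^{l}}(\embh^{l-1}_{j},\embm^{l-1}_{\neighbor{v_j}},\embx_{j}))\embV^{l}_{j}$ without spelling out the block-Jacobian decomposition you highlight as the main obstacle.
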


\vspace{-4.5mm}

\section{Local Message Compensation}\label{sec:compensation}

\vspace{-3mm}

The exact mini-batch gradients {\small$\mathbf{g}_{w}(\mathcal{V}_\mathcal{B})$} and {\small$\mathbf{g}_{\theta^l}(\mathcal{V}_\mathcal{B}),\,l\in[L]$} computed by backward SGD depend on exact embeddings and auxiliary variables of nodes in the mini-batch {\small$\mathcal{V}_{\mathcal{B}}$} rather than the whole graph. However, backward SGD is not scalable, as the exact {\small$(\embH^l_{\inbatch})_{l=1}^L$} and {\small$(\embV^l_{\inbatch})_{l=1}^L$} are expensive to compute due to the {\it neighbor explosion} problem.

In this section, to deal with the neighbor explosion problem, we develop a novel and scalable subgraph-wise sampling method for GNNs, namely \textbf{L}ocal \textbf{M}essage \textbf{C}ompensation (LMC). LMC first efficiently estimates {\small$(\embH^l_{\inbatch})_{l=1}^L$} and {\small$(\embV^l_{\inbatch})_{l=1}^L$} by convex combinations of the {\it incomplete up-to-date values} and the {\it historical values}, and then computes the mini-batch gradients as shown in Equations (\ref{eqn:mini-batch_grad_w}) and (\ref{eqn:mini-batch_grad_theta}). We show that LMC converges to first-order stationary points of GNNs in Section \ref{sec:theoretical}. In Algorithm \ref{alg:lmc} and Section \ref{sec:theoretical}, we denote a value in the $l$-th layer at the {\small$k$}-th iteration by {\small$(\cdot)^{l,k}$}, but elsewhere we omit the superscript {\small$k$} and denote it by {\small$(\cdot)^l$}.

\begin{figure*}[t]
\begin{subfigure}{0.3\textwidth}
  \includegraphics[width=140pt]{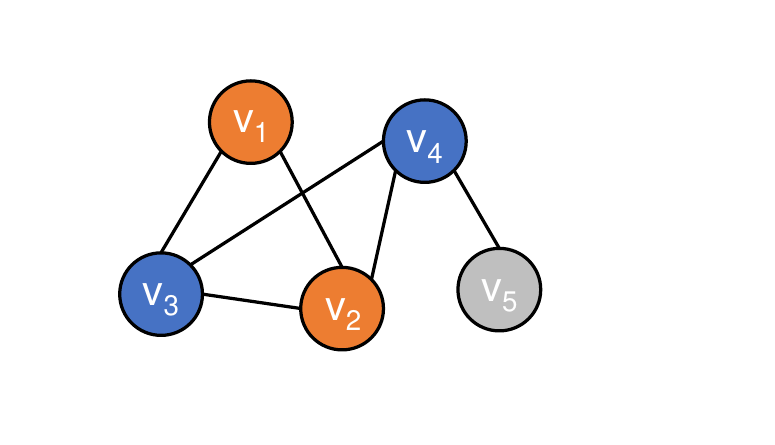}
  \caption{Original graph}\label{subfig:graph}
\end{subfigure}\hfil
\begin{subfigure}{0.3\textwidth}
  \includegraphics[width=140pt]{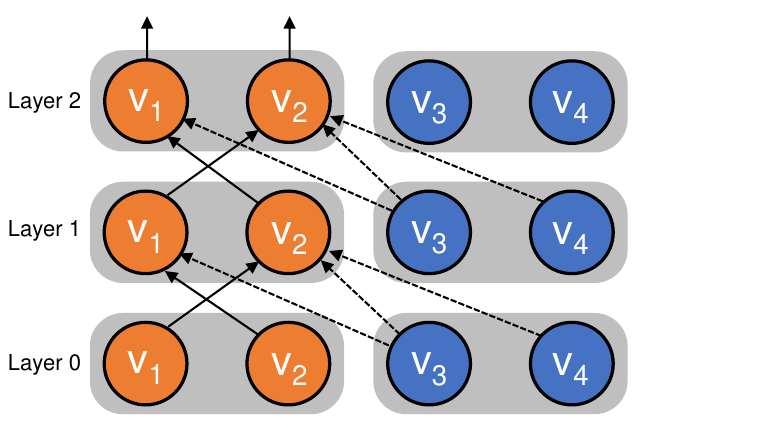}
    \caption{Forward passes of GAS}\label{subfig:gas_forward}
\end{subfigure}\hfil 
\begin{subfigure}{0.33\textwidth}
  \includegraphics[width=140pt]{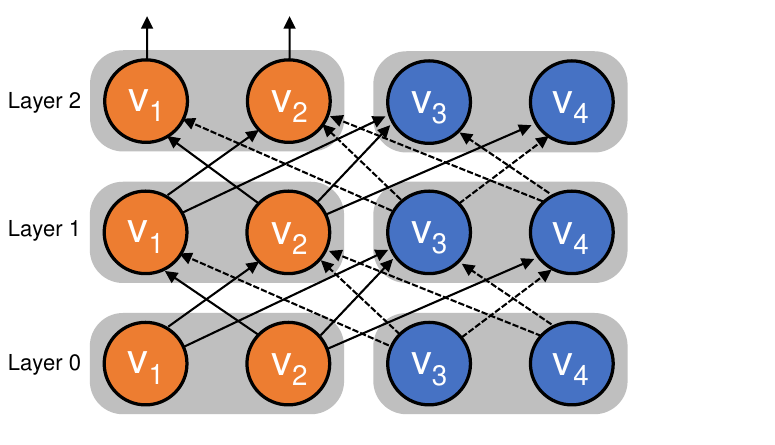}
  \caption{Forward passes of LMC}\label{subfig:lmc_forward}
\end{subfigure}\hfil
\begin{subfigure}{0.33\textwidth}
  \includegraphics[width=140pt]{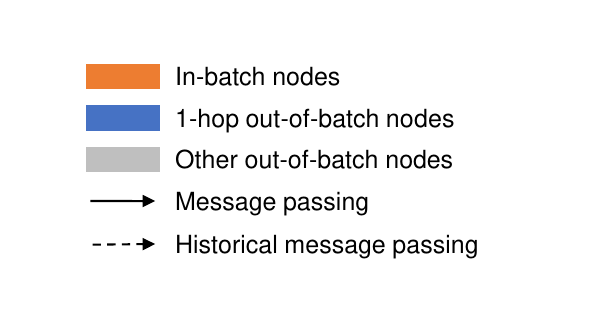}
\end{subfigure}\hfil
\begin{subfigure}{0.33\textwidth}
  \includegraphics[width=140pt]{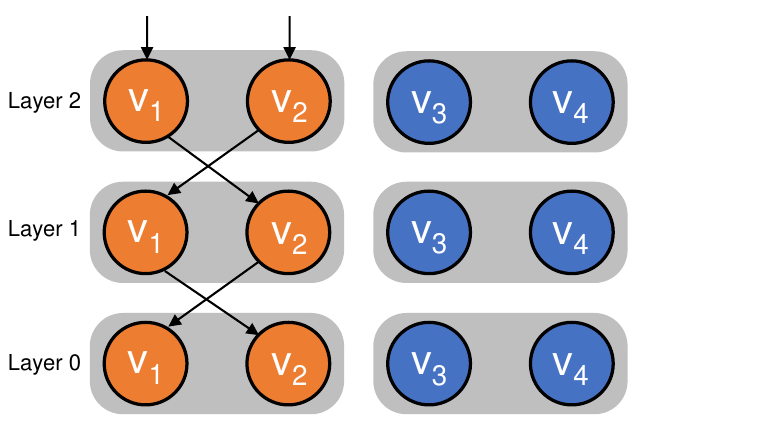}
    \caption{Backward passes of GAS}\label{subfig:gas_backward}
\end{subfigure}\hfil
\begin{subfigure}{0.33\textwidth}
  \includegraphics[width=140pt]{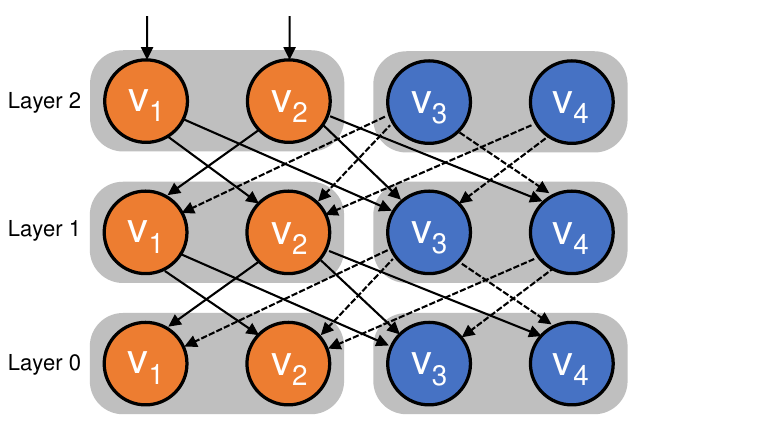}
    \caption{Backward passes of LMC}\label{subfig:lmc_backward}
\end{subfigure}\hfil
\caption{
Comparison of LMC with GNNAutoScale (GAS) \citep{gas}. (a) shows the original graph with in-batch nodes, 1-hop out-of-batch nodes, and other out-of-batch nodes in orange, blue, and grey, respectively. (b) and (d) show the computation graphs of forward passes and backward passes of GAS, respectively. (c) and (e) show the computation graphs of forward passes and backward passes of LMC, respectively.} \label{fig:computation}
\end{figure*}

In each training iteration, we sample a mini-batch of nodes {\small$\inbatch$} and propose to approximate values, i.e., node embeddings and auxiliary variables, outside {\small$\inbatch$} by convex combinations of {\it historical values}, denoted {\small$\hisH^l_{\mathcal{V} \setminus \inbatch}$} and {\small$\hisV^l_{\mathcal{V} \setminus \inbatch}$}, and {\it incomplete up-to-date values}, denoted {\small$\widetilde{\embH}^{l}_{\mathcal{V} \setminus \inbatch}$} and {\small$\widetilde{\embV}^{l}_{\mathcal{V} \setminus \inbatch}$}.

In forward passes, we initialize the {\it temporary embeddings} for {\small$l=0$} as {\small$\widehat{\embH}^0 = \embX$} and update historical embeddings of nodes in {\small$\inbatch$}, i.e., {\small$\hisH^{l}_{\inbatch}$}, in the order of {\small$l=1,2,\ldots, L$}. Specifically, in the {\small$l$}-th layer, we first update the historical embedding of each node {\small$v_i \in \inbatch$} as
\begin{align}
    &\overline{\embh}_i^l = \update_{\theta^l}(\overline{\embh}_i^{l-1}, \overline{\embm}_{\neighbor{v_i}}^{l-1}, \embx_i);\nonumber\\
    &\overline{\embm}_{\neighbor{v_i}}^{l-1} = \aggregate_{\theta^l}\left( \left\{ g_{\theta^l}(\overline{\embh}_j^{l-1}) \mid v_j \in \neighbor{v_i} \cap \inbatch \right\} \cup \left\{ g_{\theta^l}(\widehat{\embh}_j^{l-1}) \mid v_j \in \neighbor{v_i} \setminus \inbatch \right\} \right). \label{eqn:mini_mpeq_update}
\end{align}
Then, we compute the temporary embedding of each neighbor {\small$v_i \in \neighbor{\inbatch} \setminus \inbatch$} as
\begin{align}
    &\widehat{\embh}_i^{l} = (1-\beta_{i}) \overline{\embh}_i^{l} + \beta_{i} \widetilde{\embh}_i^{l},\label{eqn:temp_compute}
\end{align}
where {\small$\beta_{i}\in [0,1]$} is the convex combination coefficient  for node $v_i$, and
\begin{align}
    &\widetilde{\embh}_i^l = \update_{\theta^l}(\widehat{\embh}_i^{l-1}, \overline{\embm}_{\neighbor{v_i}}^{l-1}, \embx_i);\nonumber\\ 
    &\overline{\embm}_{\neighbor{v_i}}^{l-1} = \aggregate_{\theta^l} \left(\left\{ g_{\theta^l}(\overline{\embh}_j^{l-1}) \mid v_j\in\neighbor{v_i} \cap \inbatch \right\} \cup \left\{g_{\theta^l}(\widehat{\embh}_j^{l-1}) \mid v_j \in \neighbor{\inbatch}\cap\neighbor{v_i} \setminus \inbatch \right\} \right).
    \label{eqn:subexact_compute}
\end{align}

We call {\small$\mathbf{C}_f^{l} \triangleq \aggregate_{\theta^{l}}\left(\left\{ g_{\theta^{l}}(\widehat{\embh}_j^{l-1}) \mid v_j\in\neighbor{v_i}\setminus \inbatch \right\}\right)$} the {\it local message compensation} in the {\small$l$}-th layer in forward passes. For {\small$l\in[L]$}, {\small$\overline{\embh}_i^l$} is an approximation to {\small$\embh_i^l$} computed by Equation (\ref{eqn:mpeq_update}). Notice that the total size of Equations (\ref{eqn:mini_mpeq_update})--(\ref{eqn:subexact_compute}) is linear with {\small$| \neighbor{\inbatch}|$} rather than the size of the whole graph.
Suppose that the maximum neighborhood size is {\small$n_{\max}$} and the number of layers is {\small$L$}, then the time complexity in forward passes is {\small$\mathcal{O}( L(n_{\max}|\inbatch|d+|\inbatch| d^2) )$}. 

In backward passes, we initialize the {\it temporary auxiliary variables} for {\small$l=L$} as {\small$\widehat{\embV}^L = \nabla_{\embH} \loss$} and update historical auxiliary variables of nodes in {\small$\inbatch$}, i.e., {\small$\overline{\embV}^l_{\inbatch}$}, in the order of {\small$l=L-1,\ldots, 1$}. Specifically, in the {\small$l$}-th layer, we first update the historical auxiliary variable of each {\small$v_i\in\inbatch$} as
\begin{align}
    \hisV^{l}_i=&\sum_{v_j\in\neighbor{v_i} \cap \inbatch}\left(\nabla_{\embh^{l}_i} \update_{\theta^{l+1}}(\overline{\embh}^{l}_j, \overline{\embm}^{l}_{\neighbor{v_j}}  ,\embx_j)\right)\overline{\embV}^{l+1}_j\nonumber\\
    &+ \sum_{v_j \in \neighbor{v_i} \setminus \inbatch } \left(\nabla_{\embh^{l}_i} \update_{\theta^{l+1}}(\widehat{\embh}^{l}_j, \overline{\embm}^{l}_{\neighbor{v_j}}  ,\embx_j)\right)\widehat{\embV}^{l+1}_j, \label{eqn:mini_mpeq_auxiliary}
\end{align}
where {\small$\overline{\embh}_j^l$}, {\small$\overline{\embm}^l_{\neighbor{v_j}}$}, and {\small$\widehat{\embh}_j^l$} are computed as shown in Equations (\ref{eqn:mini_mpeq_update})--(\ref{eqn:subexact_compute}). Then, we compute the temporary auxiliary variable of each neighbor {\small$v_i\in\neighbor{\inbatch} \setminus \inbatch$} as
\begin{align}
    \widehat{\embV}_i^l = (1-\beta_{i}) \overline{\embV}_i^l + \beta_{i} \widetilde{\embV}_i^l, \label{eqn:temp_compute_auxiliary}
\end{align}
where {\small$\beta_{i}$} is the convex combination coefficient used in Equation (\ref{eqn:temp_compute}), and
\begin{align}
    \widetilde{\embV}^{l}_i=&\sum_{v_j\in\neighbor{v_i} \cap \inbatch}\left(\nabla_{\embh^{l}_i} \update_{\theta^{l+1}}(\overline{\embh}^{l}_j, \overline{\embm}^{l}_{\neighbor{v_j}}  ,\embx_j)\right)\overline{\embV}^{l+1}_j\nonumber\\
    &+ \sum_{v_j \in \neighbor{\inbatch}\cap\neighbor{v_i} \setminus \inbatch } \left(\nabla_{\embh^{l}_i} \update_{\theta^{l+1}}(\widehat{\embh}^{l}_j, \overline{\embm}^{l}_{\neighbor{v_j}}  ,\embx_j)\right)\widehat{\embV}^{l+1}_j. \label{eqn:subexact_compute_auxiliary}
\end{align}
We call {\small$\mathbf{C}_b^{l} \triangleq \sum_{v_j \in \neighbor{v_i} \setminus \inbatch } \left(\nabla_{\embh^{l}_i} \update_{\theta^{l+1}}(\widehat{\embh}^{l}_j, \overline{\embm}^{l}_{\neighbor{v_j}}  ,\embx_j)\right)\widehat{\embV}^{l+1}_j$} the {\it local message compensation} in the {\small$l$}-th layer in backward passes. For {\small$l\in[L]$}, {\small$\overline{\embV}^l_i$} is an approximation to {\small$\embV^{l}_i$} computed by Equation (\ref{eqn:auxiliary_recursion}).
Similar to forward passes, the time complexity in backward passes is {\small$\mathcal{O}( L(n_{\max}|\inbatch|d+|\inbatch| d^2) )$}, where {\small$n_{\max}$} is the maximum neighborhood size and {\small$L$} is the number of layers.

\begin{wrapfigure}{r}{0.55\textwidth}
\vspace{-9mm}
\begin{minipage}[t]{\linewidth}
    \begin{algorithm}[H]
    \caption{Local Message Compensation}
    \label{alg:lmc}
    \begin{algorithmic}[1]
        \State {\bfseries Input:} 
        The learning rate {\small$\eta$} and the convex combination coefficients {\small$(\beta_i)_{i=1}^n$}.
        \State Partition {\small$\mathcal{V}$} into {\small$B$} parts {\small$(\mathcal{V}_{b})_{b=1}^B$} 
        \For{{\small$k = 1, \dots, N$}}
            \State Randomly sample {\small$\mathcal{V}_{b_k}$} from {\small$(\mathcal{V}_b)_{b=1}^B$}
            \State Initialize {\small$\overline{\embH}^{0,k}=\widehat{\embH}^{0,k}=\embX$}
            \For{{\small$l=1,\dots,L$}}
                \State Update {\small$\overline{\embH}^{l,k}_{\mathcal{V}_{b_k}}$} \Comment{(\ref{eqn:mini_mpeq_update})}
                \State Compute {\small$\widehat{\embH}^{l,k}_{\neighbor{\mathcal{V}_{b_k}} \setminus \mathcal{V}_{b_k}}$} \Comment{(\ref{eqn:temp_compute}) and (\ref{eqn:subexact_compute})}
            \EndFor
            \State Initialize {\small$\overline{\embV}^{L,k} = \widehat{\embV}^{L,k} = \nabla_{\embH^L}\loss$}
            \For{{\small$l=L-1,\dots,1$}}
                \State Update {\small$\overline{\embV}^{l,k}_{\mathcal{V}_{b_k}}$} \Comment{(\ref{eqn:mini_mpeq_auxiliary})}
                \State Compute {\small$\widehat{\embV}^{l,k}_{\neighbor{\mathcal{V}_{b_k}} \setminus \mathcal{V}_{b_k}}$} \Comment{(\ref{eqn:temp_compute_auxiliary}) and (\ref{eqn:subexact_compute_auxiliary})}
            \EndFor
            \State Compute {\small$\widetilde{\mathbf{g}}_w^k$} and {\small$\widetilde{\mathbf{g}}_{\theta^{l}}^k,\,l\in[L]$} \Comment{(\ref{eqn:mini-batch_grad_w}) and (\ref{eqn:mini-batch_grad_theta})}
            \State Update parameters by\\
            \quad\quad\quad {\small$w^k = w^{k-1} - \eta \widetilde{\mathbf{g}}_w^k$}\\
            \quad\quad\quad {\small$\theta^{l,k} = \theta^{l,k-1} - \eta\widetilde{\mathbf{g}}_{\theta^{l}}^k,\,l\in[L]$}
        \EndFor
    \end{algorithmic}
\end{algorithm}
\end{minipage}
\vspace{-9mm}
\end{wrapfigure}

LMC additionally stores the historical node embeddings {\small$\overline{\embH}^l$} and auxiliary variables {\small$\overline{\embV}^l$} for {\small$l\in[L]$}. As pointed out in \citep{gas}, we can store the majority of historical values in RAM or hard drive storage rather than GPU memory. Thus, the active historical values in forward and backward passes employ {\small$\mathcal{O}(n_{\max} L|\inbatch| d)$ and {\small$\mathcal{O}(n_{\max} L|\inbatch| d)$}} GPU memory, respectively (see Appendix \ref{appendix:complexity}). As the time and memory complexity are independent of the size of the whole graph, i.e., {\small$|\mathcal{V}|$}, LMC is scalable. We summarize the computational complexity in Appendix \ref{appendix:complexity}.

Figure \ref{fig:computation} shows the message passing mechanisms of GAS \citep{gas} and LMC. Compared with GAS, LMC proposes compensation messages between in-batch nodes and their 1-hop neighbors simultaneously in forward and backward passes. This corrects the biases of mini-batch gradients and thus accelerates convergence.

\par Algorithm \ref{alg:lmc} summarizes LMC. Unlike above, we add a superscript {\small$k$} for each value to indicate that it is the value at the {\small$k$}-th iteration. At preprocessing step, we partition {\small$\mathcal{V}$} into {\small$B$} parts {\small$(\mathcal{V}_b)_{b=1}^B$}. At the {\small$k$}-th training step, LMC first randomly samples a subgraph constructed by {\small$\mathcal{V}_{b_k}$}. Notice that we sample more subgraphs to build a large graph in experiments whose convergence analysis is consistent with that of sampling a single subgraph.
Then, LMC updates the stored historical node embeddings {\small$\overline{\embH}^{l,k}_{\mathcal{V}_{b_k}}$} in the order of {\small$l=1,\ldots,L$} by Equations (\ref{eqn:mini_mpeq_update})--(\ref{eqn:subexact_compute}), and the stored historical auxiliary variables {\small$\overline{\embV}^{l,k}_{\mathcal{V}_{b_k}}$} in the order of {\small$l=L-1,\ldots,1$} by Equations (\ref{eqn:mini_mpeq_auxiliary})--(\ref{eqn:subexact_compute_auxiliary}). By the randomly updating, the historical values get close to the exact up-to-date values. Finally, for {\small$l\in[L]$} and {\small$v_j\in\mathcal{V}_{b_k}$}, by replacing {\small$\embh^{l,k}_j$}, {\small$\embm^{l,k}_{\neighbor{v_j}}$} and {\small$\embV^{l,k}_j$} in Equations (\ref{eqn:mini-batch_grad_w}) and (\ref{eqn:mini-batch_grad_theta}) with {\small$\overline{\embh}^{l,k}_j$}, {\small$\overline{\embm}^{l,k}_{\neighbor{v_j}}$}, and {\small$\overline{\embV}^{l,k}_j$}, respectively, LMC computes mini-batch gradients {\small$\widetilde{\mathbf{g}}_{w},\widetilde{\mathbf{g}}_{\theta^1},\ldots,\widetilde{\mathbf{g}}_{\theta^L}$} to update parameters {\small$w,\theta^1,\ldots,\theta^L$}.

\vspace{-4mm}

\section{Theoretical Analysis}\label{sec:theoretical}

\vspace{-3mm}

In this section, we provide the theoretical analysis of LMC.
Theorem \ref{thm:grad_error} shows that the biases of mini-batch gradients computed by LMC can tend to an arbitrarily small value by setting a proper learning rate and convex combination coefficients.
Then, Theorem \ref{thm:convergence} shows that LMC converges to first-order stationary points of GNNs.
We provide detailed proofs of the theorems in Appendix \ref{appendix:detailed_proofs}. 
In the theoretical analysis, we suppose that the following assumptions hold in this paper.
\begin{assumption}\label{assmp:proof} 
    Assume that (1) at the {\small$k$}-th iteration, a batch of nodes {\small$\mathcal{V}_{\mathcal{B}}^k$} is uniformly sampled from {\small$\mathcal{V}$} and the corresponding labeled node set {\small$\mathcal{V}_{L_{\mathcal{B}}}^{k}=\mathcal{V}_{\mathcal{B}}^k \cap \mathcal{V}_L$} is uniformly sampled from {\small$\mathcal{V}_L$}, (2) functions {\small$f_{\theta^{l}}$}, {\small$\phi_{\theta^{l}}$}, {\small$\nabla_{w}\mathcal{L}$}, {\small$\nabla_{\theta^{l}}\mathcal{L}$}, {\small$\nabla_w\ell_{w}$}, and {\small$\nabla_{\theta^l} u_{\theta^l}$} are {\small$\gamma$}-Lipschitz with {\small$\gamma>1$}, {\small$\forall\, l\in[L]$}, (3) norms {\small$\|\embH^{l,k}\|_F$}, {\small$\|\hisH^{l,k}\|_F$}, {\small$\|\temH^{l,k}\|_F$}, {\small$\|\widetilde{\embH}^{l,k}\|_F$}, {\small$\|\embV^{l,k}\|_F$}, {\small$\|\hisV^{l,k}\|_F$}, {\small$\|\temV^{l,k}\|_F$}, {\small$\|\widetilde{\embV}^{l,k}\|_F$}, {\small$\|\nabla_w\loss\|_2$}, {\small$\|\nabla_{\theta^l}\loss\|_2$}, {\small$\|\widetilde{\mathbf{g}}_{\theta^l}\|_2$}, and {\small$\|\widetilde{\mathbf{g}}_{w}\|_2$} are bounded by {\small$G>1$}, {\small$\forall\, l\in[L],\,k\in\mathbb{N}^*$}.
\end{assumption}

\begin{theorem}\label{thm:grad_error}
    Suppose that Assumption \ref{assmp:proof} holds, then with {\small$\eta = \mathcal{O}(\varepsilon^2)$} and {\small$\beta_{i}=\mathcal{O}(\varepsilon^2)$}, {\small$i\in[n]$}, there exists {\small$C>0$} and {\small$\rho\in(0,1)$} such that
    \begin{align*}
        &\mathbb{E}[\|\widetilde{\mathbf{g}}_w(w^k) - \nabla_w\loss(w^k)\|_2]\leq C\varepsilon + C\rho^{\frac{k-1}{2}} + {\rm Var}(\mathbf{g}_w(w^k))^{\frac{1}{2}},\,\,\forall\,k\in\mathbb{N}^*,\\
        &\mathbb{E}[\|\widetilde{\mathbf{g}}_{\theta^l}(\theta^{l,k}) - \nabla_{\theta^l}\loss(\theta^{l,k})\|_2]\leq C\varepsilon + C\rho^{\frac{k-1}{2}}+ {\rm Var}(\mathbf{g}_{\theta^l}(\theta^{l,k}))^{\frac{1}{2}},\,\,\forall\,l\in[L],\,k\in\mathbb{N}^*.
    \end{align*}
\end{theorem}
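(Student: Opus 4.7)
The plan is to split
$$\widetilde{\mathbf{g}}_w(w^k) - \nabla_w\loss(w^k) = \bigl(\widetilde{\mathbf{g}}_w(w^k) - \mathbf{g}_w(w^k)\bigr) + \bigl(\mathbf{g}_w(w^k) - \nabla_w\loss(w^k)\bigr),$$
where $\mathbf{g}_w(w^k)$ is the unbiased mini-batch gradient of Equation~(\ref{eqn:mini-batch_grad_w}) evaluated at the \emph{exact} embeddings and auxiliary variables corresponding to the current parameters and the sampled mini-batch $\mathcal{V}_{b_k}$. By Theorem~\ref{thm:unbiased} the second piece is mean-zero, and Jensen's inequality gives $\mathbb{E}\|\mathbf{g}_w(w^k)-\nabla_w\loss(w^k)\|_2 \le \mathrm{Var}(\mathbf{g}_w(w^k))^{1/2}$; the same reasoning handles $\widetilde{\mathbf{g}}_{\theta^l}$. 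It therefore suffices to show $\mathbb{E}\bigl[\|\widetilde{\mathbf{g}}_\cdot(w^k) - \mathbf{g}_\cdot(w^k)\|_2\bigr]\le C\varepsilon + C\rho^{(k-1)/2}$.

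Using Lipschitzness of $\nabla_w \ell_w$ and $\nabla_{\theta^l} u_{\theta^l}$ together with the boundedness in Assumption~\ref{assmp:proof}, the approximation error $\|\widetilde{\mathbf{g}}-\mathbf{g}\|_2$ is at most a linear combination of the layerwise deviations $\|\hisH^{l,k}-\embH^{l,k}\|_F$, $\|\hisV^{l,k}-\embV^{l,k}\|_F$ and the aggregated message errors. Each message error decomposes through $\neighbor{v_j}\cap\inbatch$ (controlled by the historical-embedding error) and $\neighbor{v_j}\setminus\inbatch$ (controlled by $\|\temH^{l-1,k}-\embH^{l-1,k}\|_F$). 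The convex-combination definition $\temh_i^l = (1-\beta_i)\hish_i^l + \beta_i\widetilde{\embh}_i^l$ then gives
$$\|\temH^{l-1,k}-\embH^{l-1,k}\|_F \le \|\hisH^{l-1,k}-\embH^{l-1,k}\|_F + \beta_{\max}\cdot\mathcal{O}(G),$$
so with $\beta_i=\mathcal{O}(\varepsilon^2)$ the $\beta$-term contributes $\mathcal{O}(\varepsilon^2)$ to the overall bias.

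The core of the argument is a coupled recursion for the potential
$$\Phi^k := \sum_{l=1}^{L}\Bigl(\mathbb{E}\|\hisH^{l,k}-\embH^{l,k}\|_F + \mathbb{E}\|\hisV^{l,k}-\embV^{l,k}\|_F\Bigr).$$
For a node outside $\mathcal{V}_{b_k}$ its historical value stays frozen while the exact value drifts by $\mathcal{O}(\eta G)$ through $w^k-w^{k-1}=-\eta\widetilde{\mathbf{g}}$ and the analogous updates of $\theta^{l,k}$; for a node in $\mathcal{V}_{b_k}$ the new historical value equals $u_{\theta^{l,k}}$ applied to previous-layer historical/temporary values, so Lipschitzness propagates the previous-layer contribution plus the $\beta$-bias above. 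Because each fixed node is sampled with probability $1/B$ per iteration, the expected refreshing yields a recursion of the form $\Phi^k \le \rho\,\Phi^{k-1} + C_1\eta + C_2\beta_{\max}$ for some $\rho\in(0,1)$ depending on $B$, $L$ and $\gamma$. Unrolling with $\eta,\beta_{\max}=\mathcal{O}(\varepsilon^2)$ gives $\Phi^k \le C\rho^{k-1}\Phi^0 + C\varepsilon^2/(1-\rho)$, and combining with the Lipschitz reduction via $\sqrt{a+b}\le\sqrt{a}+\sqrt{b}$ produces the claimed $C\varepsilon + C\rho^{(k-1)/2}$ rate.

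The main obstacle will be establishing the contraction constant $\rho<1$. A naive layer-by-layer Lipschitz bound over $L$ layers blows up by $\gamma^L$ with no contraction; the contraction can only come from sampling refreshing a $1/B$ fraction of historical entries per iteration. The difficulty is that this refreshing is entangled across layers, since the $l$-th-layer historical update consumes the already-updated $(l{-}1)$-th-layer historical values \emph{and} messages drawn through $\temH$, which itself mixes historical and $\beta$-biased current information. A further subtlety is that the backward auxiliary-variable recursion (Equations~\ref{eqn:mini_mpeq_auxiliary}--\ref{eqn:subexact_compute_auxiliary}) runs in the opposite direction to the forward one and couples back to $\hisH$ through $\nabla_{\embh_i^l}u_{\theta^{l+1}}$. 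The potential $\Phi^k$ must therefore be weighted carefully across layers so that the forward and backward error recursions close simultaneously into a single geometric contraction; this layer-weighting, competing against the $\gamma^L$-type Lipschitz blow-up, is where I expect the delicate bookkeeping to lie.
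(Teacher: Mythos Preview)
Your high-level plan matches the paper's: the same bias--variance split, the same Lipschitz reduction of $\|\widetilde{\mathbf{g}}-\mathbf{g}\|_2$ to the layerwise errors $\|\hisH^{l,k}-\embH^{l,k}\|_F$, $\|\hisV^{l,k}-\embV^{l,k}\|_F$, and the same handling of the $\beta$-contribution via $\|\temH-\embH\|_F\le\|\hisH-\embH\|_F+\mathcal{O}(\beta G)$. The drift $\|\embH^{l,k+1}-\embH^{l,k}\|_F=\mathcal{O}(\eta)$ that you invoke for non-sampled nodes is also established as a separate lemma.

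Where you diverge is exactly at the contraction, and the obstacle you flag at the end is real for the approach you sketch but is avoided by the paper through a simpler device. You try to close a recursion for a \emph{single} potential $\Phi^k=\sum_l(\ldots)$ and then worry (rightly) that you need a delicate layer weighting so that the $\gamma^L$ Lipschitz blow-up does not kill the sampling contraction. The paper never attempts this. It works with the squared errors $(d_h^{l,k})^2:=\mathbb{E}\|\hisH^{l,k}-\embH^{l,k}\|_F^2$ \emph{layer by layer} and obtains, for each fixed $l$,
\[
(d_h^{l+1,k+1})^2 \;\le\; \rho\,(d_h^{l+1,k})^2 \;+\; C\eta \;+\; C'(d_h^{l,k+1})^2 \;+\; C''\varepsilon^2,\qquad \rho=\tfrac{n-S}{n},
\]
where the contraction factor $\rho$ depends \emph{only} on the sampling fraction, not on $\gamma$ or $L$. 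The induction is then in $l$, anchored at the boundary: since $\embH^0=\embX$ is exact, the $(d_h^{0,\cdot})^2$ term vanishes, so the layer-$1$ recursion is the clean scalar inequality $(d_h^{1,k+1})^2\le\rho(d_h^{1,k})^2+C\eta$, which unrolls to $C\eta/(1-\rho)+\rho^{k}G$. This bound is then fed as an additive forcing term into the layer-$2$ recursion, and so on up to $L$; the backward errors $d_v^{l,k}$ are handled analogously starting from $\embV^L=\nabla_{\embH}\loss$. All the $\gamma$- and $L$-dependent factors (including anything of size $\gamma^{L}$) are absorbed into the additive constants $C_{*}'$, never into $\rho$. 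So no weighted potential is needed: the inductive ordering of the layers, anchored at exact boundary data on both ends, is what makes the argument close. Your formulation that ``$\rho$ depends on $B$, $L$ and $\gamma$'' is therefore not the way the paper proceeds, and trying to force $\gamma^L$ into a contraction factor is the unnecessary difficulty you created for yourself.
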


\begin{theorem}\label{thm:convergence}
    Suppose that Assumption \ref{assmp:proof} holds. Besides, assume that the optimal value {\small$\loss^*=\inf_{w,\Theta}\loss(w,\Theta)$} is bounded by {\small$G$}. Then, with {\small$\eta=\mathcal{O}(\varepsilon^4)$}, {\small$\beta_{i}=\mathcal{O}(\varepsilon^4)$}, {\small$i\in[n]$}, and {\small$N=\mathcal{O}(\varepsilon^{-6})$}, LMC ensures to find an {\small$\varepsilon$}-stationary solution such that $\mathbb{E}[\|\nabla_{w,\Theta}\loss(w^R,\Theta^R)\|_2] \leq \varepsilon$ after running for $N$ iterations, where $R$ is uniformly selected from $[N]$ and $\Theta^R=(\theta^{l,R})_{l=1}^L$.
\end{theorem}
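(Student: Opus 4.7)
The plan is to combine the per-iteration gradient-error bound of Theorem \ref{thm:grad_error} with a standard biased-SGD descent analysis, and then pick $\eta$, $\beta_i$, and $N$ so that every error source shrinks at rate $\varepsilon^2$.

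First, I would establish that $\loss(w,\Theta)$ is $L_\loss$-smooth in the stacked parameters $\Phi:=(w,\Theta)$, with $L_\loss$ depending polynomially on $\gamma$, $G$, and $L$. This is a routine composition argument using Assumption \ref{assmp:proof}(2)--(3): the objective is built from $L$ Lipschitz message-passing layers $f_{\theta^l}$ followed by $\nabla_w\ell_w$ and $\nabla_{\theta^l}u_{\theta^l}$, all acting on uniformly bounded quantities. With $\widetilde{\mathbf g}^k$ stacking $\widetilde{\mathbf g}_w^k$ and the $\widetilde{\mathbf g}_{\theta^l}^k$, the descent lemma then yields
\begin{align*}
\loss(\Phi^{k+1})\le\loss(\Phi^k)-\eta\,\langle\nabla\loss(\Phi^k),\widetilde{\mathbf g}^k\rangle+\tfrac{L_\loss\eta^2}{2}\|\widetilde{\mathbf g}^k\|_2^2.
\end{align*}

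Second, I would take total expectation and split $\widetilde{\mathbf g}^k=\nabla\loss(\Phi^k)+b^k+\xi^k$, where $b^k:=\mathbb{E}[\widetilde{\mathbf g}^k\mid\mathcal F_k]-\nabla\loss(\Phi^k)$ is the conditional bias and $\xi^k$ the zero-conditional-mean noise. Since $\mathbb{E}[\langle\nabla\loss(\Phi^k),\xi^k\rangle\mid\mathcal F_k]=0$, the cross term collapses to $\mathbb{E}\|\nabla\loss(\Phi^k)\|_2^2+\mathbb{E}\langle\nabla\loss(\Phi^k),b^k\rangle$. Using Cauchy-Schwarz with $\|\nabla\loss\|_2\le\bar G$ and the bias-only part of Theorem \ref{thm:grad_error} (its proof controls the approximation error $\mathbb{E}\|\widetilde{\mathbf g}^k-\mathbf g^k\|_2$, which upper-bounds $\mathbb{E}\|b^k\|_2$, by $C\varepsilon_0+C\rho^{(k-1)/2}$), and bounding $\|\widetilde{\mathbf g}^k\|_2^2\le(L+1)G^2$ via Assumption \ref{assmp:proof}(3), I obtain, for each $k$,
\begin{align*}
\eta\,\mathbb{E}\|\nabla\loss(\Phi^k)\|_2^2\le \mathbb{E}[\loss(\Phi^k)-\loss(\Phi^{k+1})]+\bar G\eta\bigl(C\varepsilon_0+C\rho^{(k-1)/2}\bigr)+\tfrac{L_\loss(L+1)G^2}{2}\eta^2.
\end{align*}
Summing from $k=1$ to $N$ and telescoping, using $\loss(\Phi^1)-\loss^\star\le 2G$ and $\sum_{k\ge 1}\rho^{(k-1)/2}\le 1/(1-\sqrt\rho)$, and dividing by $\eta N$ gives
\begin{align*}
\frac{1}{N}\sum_{k=1}^N\mathbb{E}\|\nabla\loss(\Phi^k)\|_2^2 \le \frac{2G}{\eta N}+\bar G C\varepsilon_0+\frac{\bar G C}{N(1-\sqrt\rho)}+\tfrac{L_\loss(L+1)G^2}{2}\eta.
\end{align*}
Setting $\eta,\beta_i=\Theta(\varepsilon^4)$ so that Theorem \ref{thm:grad_error} runs with its internal parameter $\varepsilon_0=\Theta(\varepsilon^2)$, and $N=\Theta(\varepsilon^{-6})$, drives every term to $\mathcal O(\varepsilon^2)$. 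Since $R$ is uniform on $[N]$, $\mathbb{E}\|\nabla\loss(\Phi^R)\|_2^2$ equals the left-hand side, and Jensen's inequality upgrades this to $\mathbb{E}\|\nabla\loss(\Phi^R)\|_2\le\varepsilon$ after absorbing constants into $\varepsilon$.

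The most delicate step will be the separation of bias from variance in Theorem \ref{thm:grad_error}. The stated bound $\mathbb{E}\|\widetilde{\mathbf g}^k-\nabla\loss(\Phi^k)\|_2\le C\varepsilon_0+C\rho^{(k-1)/2}+\sqrt V$ lumps in the unbiased-sampling noise $\sqrt V$, and if treated as a bias term it would never vanish and would block convergence. The resolution is that only $\|b^k\|_2$ -- equivalently the purely-approximation part of that bound -- enters the cross term in the descent, while the sampling noise $\xi^k$ contributes nothing through conditional expectation and is absorbed harmlessly into the $\eta^2\|\widetilde{\mathbf g}^k\|_2^2$ piece via Assumption \ref{assmp:proof}(3). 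Keeping this bookkeeping clean -- invoking only the bias-only portion implicit in the proof of Theorem \ref{thm:grad_error} -- is the main technical hurdle; the remainder reduces to the algebraic balancing of $\eta$, $\beta_i$, and $N$ already displayed above.
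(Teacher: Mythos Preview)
Your proposal is correct and follows essentially the same route as the paper. The paper packages the descent-lemma argument into a standalone sufficient-conditions lemma (Lemma~\ref{prop:suff}) and then verifies its hypotheses via Lemmas~\ref{prop:cond3} and~\ref{prop:cond4}, whereas you carry out the same computation inline; your conditional bias--noise split $\widetilde{\mathbf g}^k=\nabla\loss+b^k+\xi^k$ is exactly the mechanism the paper uses when it replaces $\mathbb{E}[\langle\nabla\loss,\widetilde{\mathbf g}-\nabla\loss\rangle]$ by $\mathbb{E}[\langle\nabla\loss,\widetilde{\mathbf g}-\mathbf g\rangle]$ via unbiasedness of $\mathbf g$. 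One minor simplification available to you: Assumption~\ref{assmp:proof}(2) already assumes $\nabla_w\loss$ and $\nabla_{\theta^l}\loss$ are $\gamma$-Lipschitz, so the smoothness of $\loss$ need not be derived from the composition structure.
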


\vspace{-3mm}

\section{Experiments} \label{sec:exp}

\vspace{-3mm}

We introduce experimental settings in Section \ref{sec:setting}. We then evaluate the convergence and efficiency of LMC in Sections \ref{sec:scalable} and \ref{sec:smallbatch size}. Finally, we conduct ablation studies about the proposed compensations in Section \ref{sec:ablation}. We run all experiments on a single GeForce RTX 2080 Ti (11 GB).

\vspace{-3mm}

\subsection{Experimental Settings} \label{sec:setting}

\vspace{-2mm}

\udfsection{Datasets.} Some recent works \citep{ogb} have indicated that many frequently-used graph datasets are too small compared with graphs in real-world applications.
Therefore, we evaluate LMC on four large datasets, PPI, REDDIT , FLICKR\citep{graphsage}, and Ogbn-arxiv \citep{ogb}.
These datasets contain thousands or millions of nodes/edges and have been widely used in previous works \citep{gas, graphsaint, graphsage, cluster_gcn, vrgcn, fastgcn}.
For more details, please refer to Appendix \ref{sec:dataset}.

\udfsection{Baselines and Implementation Details.} In terms of prediction performance, our baselines include node-wise sampling methods (GraphSAGE \citep{graphsage} and VR-GCN \citep{vrgcn}), layer-wise sampling method (FASTGCN \citep{fastgcn} and LADIES \citep{ladies}), subgraph-wise sampling methods (CLUSTER-GCN  \citep{cluster_gcn}, GRAPHSAINT  \citep{graphsaint}, FM \citep{graphfm}, and GAS \citep{gas}), and a precomputing method (SIGN \citep{sign}).
By noticing that GAS and FM achieve the state-of-the-art prediction performance (Table \ref{tab:largegraph}) among the baselines, we further compare the efficiency of LMC with GAS, FM, and CLUSTER-GCN, another subgraph-wise sampling method using METIS partition. We implement LMC, FM, and CLUSTER-GCN based on the codes and toolkits of GAS \citep{gas} to ensure a fair comparison. For other implementation details, please refer to Appendix \ref{sec:implementation}.

\udfsection{Hyperparameters.} To ensure a fair comparison, we follow the data splits, training pipeline, and most hyperparameters in \citep{gas} except for the additional hyperparameters in LMC such as $\beta_i$. We use the grid search to find the best $\beta_i$  (see Appendix \ref{sec:selection_beta} for more details).

\vspace{-3mm}

\subsection{LMC is Fast without Sacrificing Accuracy}\label{sec:scalable}

\vspace{-2mm}

Table \ref{tab:largegraph} reports the prediction performance of LMC and the baselines. We report the mean and the standard deviation by running each experiment five times for GAS, FM, and LMC.
LMC, FM, and GAS all resemble full-batch performance on all datasets while other baselines may fail, especially on the FLICKR dataset.
Moreover, LMC, FM, and GAS with deep GNNs, i.e., GCNII \citep{gcnii} outperform other baselines on all datasets.

\begin{wraptable}{r}{7.0cm}
      \caption{
      \textbf{Prediction performance on large graph datasets.}
        OOM denotes the out-of-memory issue. Bold font indicates the best result and underline indicates the second best result.
        }\label{tab:largegraph}
    \resizebox{1.0\linewidth}{!}{
    \begin{tabular}{llcccc}
    \toprule
    \mc{2}{l}{\footnotesize{\textbf{\#\,nodes}}} & \footnotesize{230K} & \footnotesize{57K} & \footnotesize{89K} & \footnotesize{169K} \\[-0.1cm]
    \mc{2}{l}{\footnotesize{\textbf{\#\,edges}}} & \footnotesize{11.6M} & \footnotesize{794K} & \footnotesize{450K} & \footnotesize{1.2M} \\[-0.05cm]
    \mc{2}{l}{\mr{2}{\textbf{Method}}} & \mr{2}{\textsc{Reddit}} & \mr{2}{\textsc{PPI}} & \mr{2}{\textsc{FLICKR}} & \texttt{ogbn-} \\
    & & & & & \texttt{arxiv} \\
    \midrule
    \mc{2}{l}{\textsc{GraphSAGE}}   & 95.40          & 61.20          & 50.10          & 71.49          \\
    \mc{2}{l}{\textsc{VR-GCN}}      & 94.50          & 85.60          & ---            & ---            \\
    \mc{2}{l}{\textsc{FastGCN}}     & 93.70          & ---            & 50.40          & ---            \\
    \mc{2}{l}{\textsc{LADIES}}      & 92.80          & ---            & ---            & ---            \\
    \mc{2}{l}{\textsc{Cluster-GCN}} & 96.60          & \underline{99.36}          & 48.10          & ---            \\
    \mc{2}{l}{\textsc{GraphSAINT}}  & 97.00          & {\bf 99.50}          & 51.10          & ---            \\
    \mc{2}{l}{\textsc{SIGN}}        & \underline{96.80}          & 97.00          & 51.40          & ---            \\
    \midrule
    \mr{2}{\rotatebox{90}{\small{\,GD}}}
    & ~~\textsc{GCN}   & 95.43 & 97.58 & 53.73 & 71.64 \\
    & ~~\textsc{GCNII} & OOM   & OOM   & 55.28 & {\bf 72.83} \\
    \midrule
    \mr{2}{\rotatebox{90}{\small{GAS}}}
    & ~~\textsc{GCN}                 & \acc{95.35}{0.01} & \acc{98.91}{0.03} & \acc{53.44}{0.11} & \acc{71.54}{0.19}\\
    & ~~\textsc{GCNII}               & \acc{96.73}{0.04} & \acc{\underline{99.36}}{0.02} & \acc{\bf 55.42}{0.27} & \acc{72.50}{0.28}\\
    \midrule
    \mr{2}{\rotatebox{90}{\small{FM}}}
    & ~~\textsc{GCN}                 & \acc{95.27}{0.03} & \acc{98.91}{0.01} & \acc{53.48}{0.17} & \acc{71.49}{0.33}\\
    & ~~\textsc{GCNII}               & \acc{96.52}{0.06} & \acc{99.34}{0.03} & \acc{54.68}{0.27} & \acc{72.54}{0.27}\\
    \midrule
    \mr{2}{\rotatebox{90}{\small{\textbf{LMC}}}}
    & ~~\textsc{GCN}                 & \acc{95.44}{0.02} & \acc{98.87}{0.04} & \acc{53.80}{0.14} & \acc{71.44}{0.23}\\
    & ~~\textsc{GCNII}               & \acc{\bf 96.88}{0.03} & \acc{99.32}{0.01}    & \acc{\underline{55.36}}{0.49}    & \acc{\underline{72.76}}{0.22}\\
    \bottomrule
  \end{tabular}
    }
\vspace{-4mm}
\end{wraptable}

As LMC, FM, and GAS share the similar prediction performance, we additionally compare the convergence speed of LMC, FM, GAS, and CLUSTER-GCN, another subgraph-wise sampling method using METIS partition, in Figure \ref{fig:runtime} and Table \ref{tab:memory}. We use a sliding window to smooth the convergence curve in Figure \ref{fig:runtime} as the accuracy on test data is unstable.
The solid curves correspond to the mean, and the shaded regions correspond to values within plus or minus one standard deviation of the mean.
Table \ref{tab:memory} reports the number of epochs, the runtime to reach the full-batch accuracy in Table \ref{tab:largegraph}, and the GPU memory.
As shown in Table \ref{tab:memory} and Figure \ref{subfig:train_loss_runtime}, LMC is significantly faster than GAS, especially with a speed-up of 2x on REDDIT.
Notably, the test accuracy of LMC is more stable than GAS, and thus the smooth test accuracy of LMC outperforms GAS in Figure \ref{subfig:test_acc_runtime}.
Although GAS finally resembles full-batch performance in Table \ref{tab:largegraph} by selecting the best performance on the valid data,
it may fail to resemble under small batch sizes due to its unstable process (see Section \ref{sec:smallbatch size}).
Another appealing feature of LMC is that LMC shares comparable GPU memory costs with GAS, and thus LMC avoids the neighbor explosion problem.
FM is slower than other methods, as they additionally update historical embeddings in the storage for the nodes outside the mini-batches. Please see Appendix \ref{sec:exp_epochtime} for the comparison in terms of training time per epoch.

\begin{figure}[b]
\centering 
\begin{subfigure}{1.0\linewidth}
  \includegraphics[width=\linewidth]{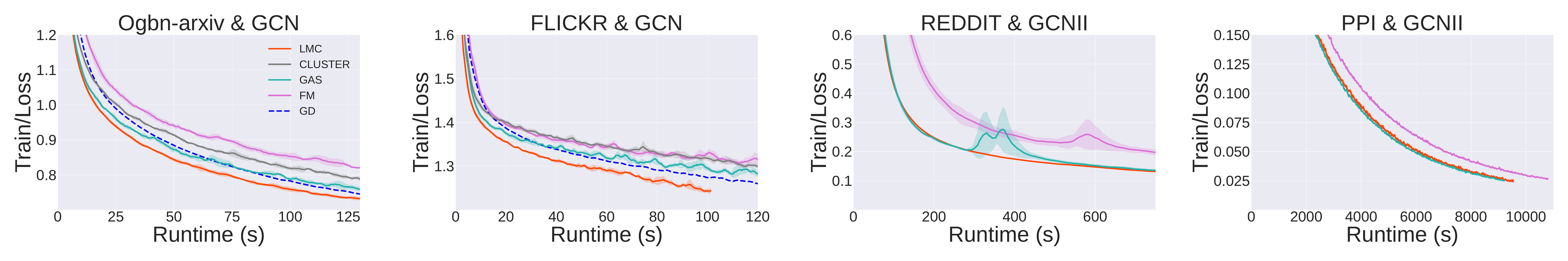}
  \caption{Training loss}\label{subfig:train_loss_runtime}
\end{subfigure}\hfil 
\begin{subfigure}{1.0\linewidth}
  \includegraphics[width=\linewidth]{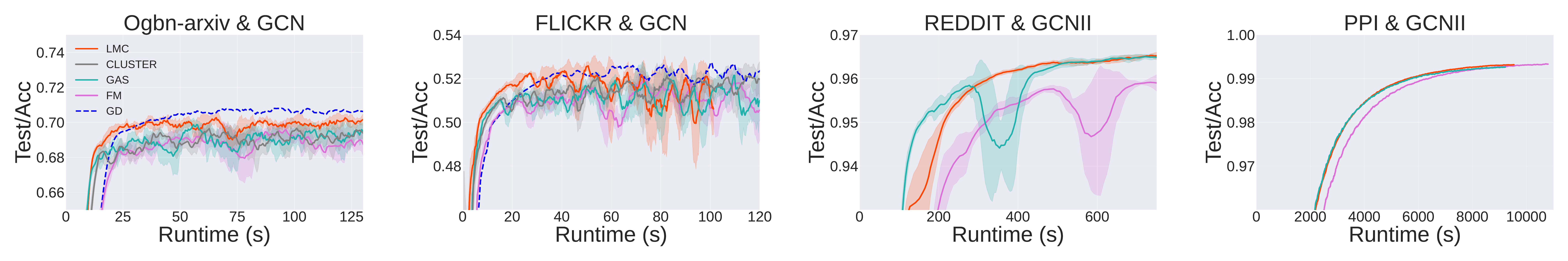}
    \caption{Testing accuracy}\label{subfig:test_acc_runtime}
\end{subfigure}\hfil
\caption{
Testing accuracy and training loss w.r.t. runtimes (s).} \label{fig:runtime}
\vspace{-4mm}
\end{figure}

\vspace{-3mm}

\begin{table}[h]\centering
      \caption{%
      Efficiency of CLUSTER-GCN, GAS, FM, and LMC.
      }\label{tab:memory}
    \setlength{\tabcolsep}{1.9mm}
    \resizebox{\linewidth}{!}{
    \begin{tabular}{l|cccc|cccc|cccc}
    \toprule
    \mr{2}{\textbf{Dataset} \& \textbf{GNN}} & \mc{4}{c}{\textbf{Epochs}} & \mc{4}{c}{\textbf{Runtime} (s)} & \mc{4}{c}{\textbf{Memory} (MB)} \\
    & {\small CLUSTER} & {\small GAS} & {\small FM} & {\small LMC} &  {\small CLUSTER} & {\small GAS} & {\small FM} & {\small LMC}  & {\small CLUSTER} & {\small GAS} & {\small FM}& {\small LMC} \\
    \midrule
    Ogbn-arxiv \& GCN   & 211.0 & 176.0 & 152.4 & {\bf 124.4} &108& 79 & 115 & {\bf 55} & {\bf 424} & 452 & 460& 557\\
    FLICKR \& GCN       &379.2& 389.4 & 400.0 & {\bf 334.2} &127& 117 & 181 & {\bf85}  & {\bf 310} & 375 & 380 & 376 \\
    REDDIT \& GCN       & 239.0 & 372.4 & 400.0 & {\bf 166.8} &516& 790 & 2269 & {\bf 381} & {\bf 1193} & 1508 & 1644 & 1829 \\
    PPI \& GCN          &428.0& 293.6 & {\bf 286.4} & 290.2 &359& {\bf 179} & 224 & {\bf 179} & {\bf 212} & 214 & 218 & 267 \\
    \midrule
    Ogbn-arxiv \& GCNII & --- & 234.8  & 373.6 & {\bf 197.4} & --- & 218  & 381 & {\bf 178} & --- & {\bf 453} & 454 & 568 \\
    FLICKR \& GCNII     & --- & {\bf 352} & 400.0 & 356 & --- & {\bf 465} & 576 & 475 & --- & {\bf 396} & 402 & 468 \\
    \bottomrule
  \end{tabular}
    }
\end{table}

\vspace{-3mm}

To further illustrate the convergence of LMC, we compare the errors of mini-batch gradients computed by CLUSTER, GAS, and LMC. At epoch training step, we record the relative errors {\small$\|\widetilde{\mathbf{g}}_{\theta^{l}} - \nabla_{\theta^{l}} \loss\|_2\,/\,\| \nabla_{\theta^{l}} \loss\|_2$}, where $\nabla_{\theta^{l}} \loss$ is the full-batch gradient for the parameters $\theta^{l}$ at the $l$-th MP layer and the $\widetilde{\mathbf{g}}_{\theta^{l}}$ is a mini-batch gradient.
To avoid the randomness of the full-batch gradient $\nabla_{\theta^{l}} \loss$, we set the dropout rate as zero.
We report average relative errors during training in Figure \ref{fig:error}.
LMC enjoys the smallest estimated errors in the experiments.

\begin{figure}[h]
\centering 
\begin{subfigure}{0.33\linewidth}
  \includegraphics[width=\linewidth]{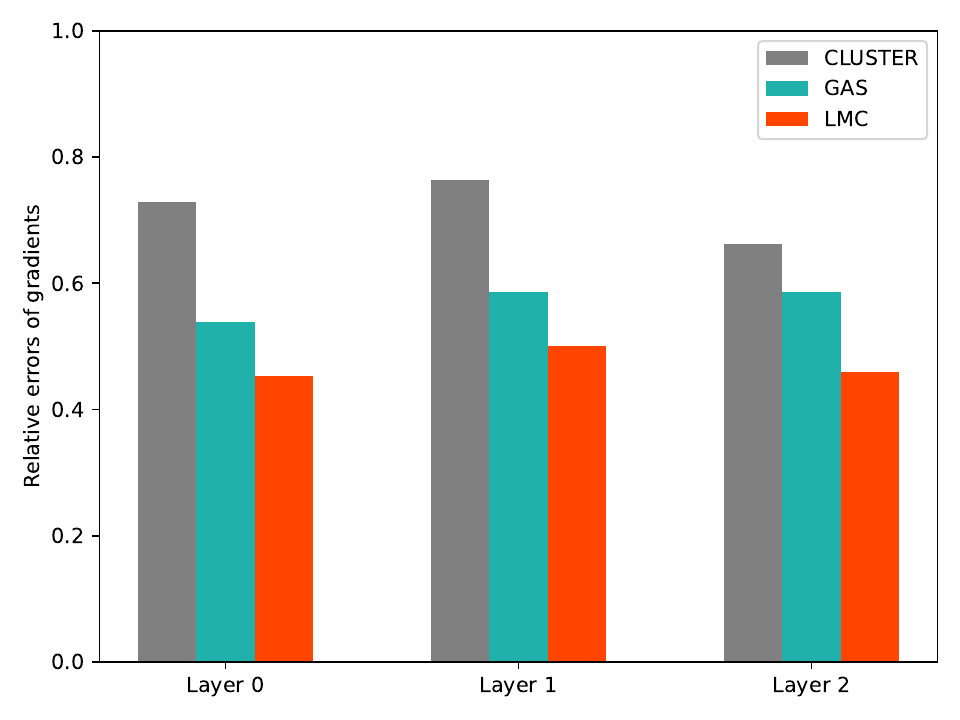}
  \caption{Ogbn-arxiv}\label{subfig:error_arxiv}
\end{subfigure}\hfil 
\begin{subfigure}{0.33\linewidth}
  \includegraphics[width=\linewidth]{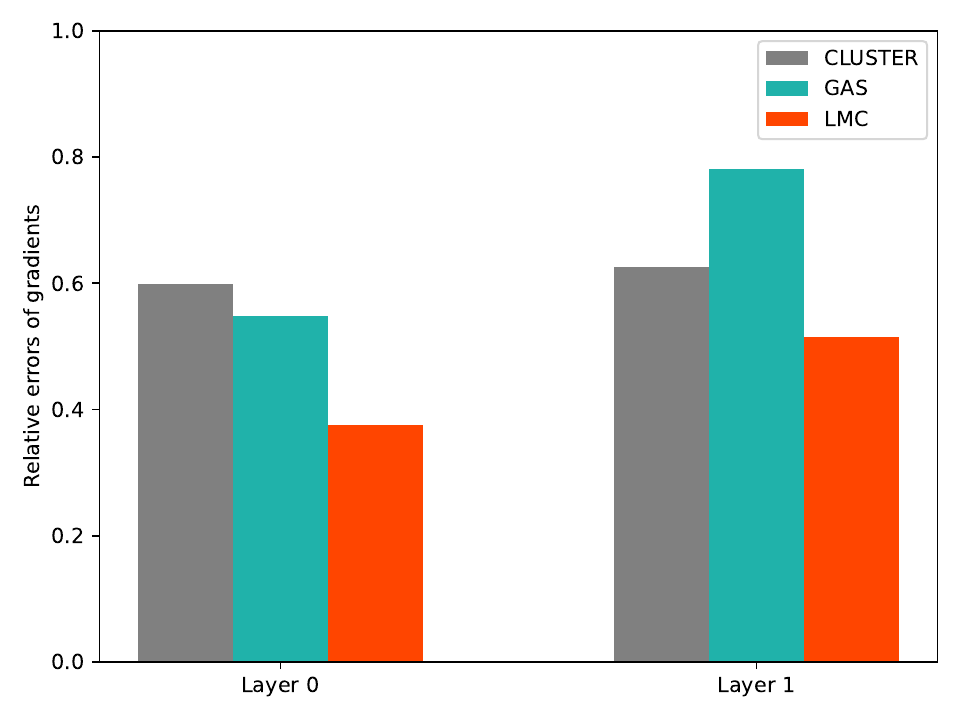}
    \caption{FLICKR}\label{subfig:error_flickr}
\end{subfigure}\hfil
\begin{subfigure}{0.33\linewidth}
  \includegraphics[width=\linewidth]{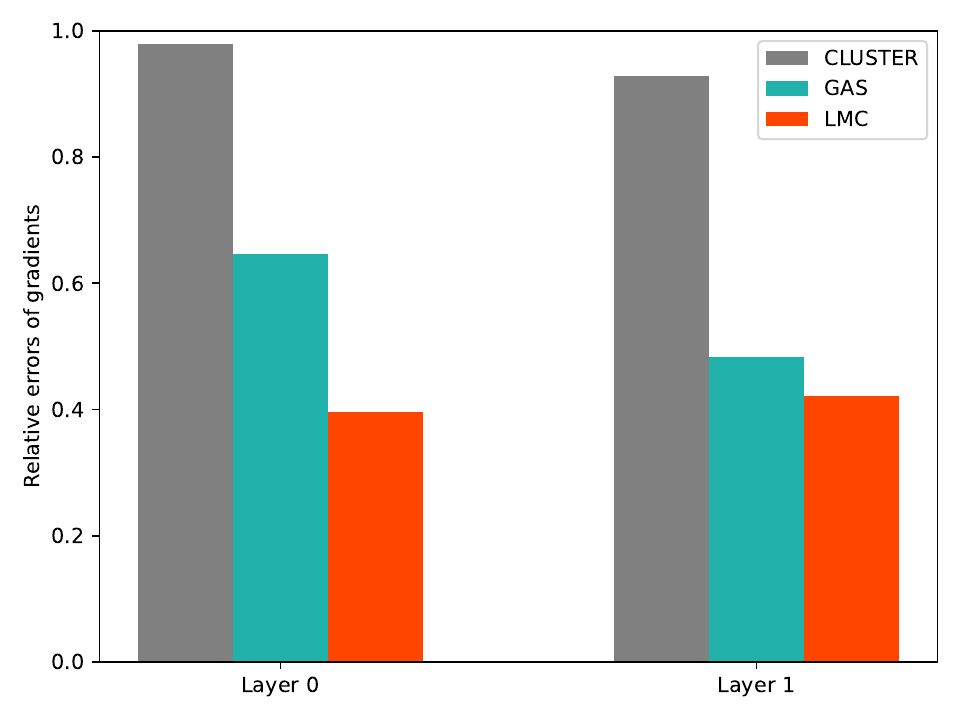}
    \caption{REDDIT}\label{subfig:error_reddit}
\end{subfigure}\hfil
\caption{The average relative estimated errors of mini-batch gradients computed by CLUSTER, GAS, and LMC for GCN models.} \label{fig:error}
\vspace{-6mm}
\end{figure}

\subsection{LMC is Robust in terms of Batch Sizes}\label{sec:smallbatch size}

\begin{wraptable}{r}{6.0cm}
\vspace{-14mm}
  \caption{Performance under different batch sizes on the Ogbn-arxiv dataset.
  }\label{tab:batch size}
  \setlength{\tabcolsep}{3pt}
  \begin{tabular}{crrrr}
    \toprule
    \mr{2}{Batch size} & \mc{2}{c}{GCN} & \mc{2}{c}{GCNII} \\
     & GAS & LMC & GAS & LMC \\
    \midrule
    1 & 70.56 & \textbf{71.65} & 71.34 & \textbf{72.11} \\
    2 & 71.11 & \textbf{71.89} & 72.25 & \textbf{72.55} \\
    5 & \textbf{71.99} & 71.84 & 72.23 & \textbf{72.87}   \\
    10& 71.60 & \textbf{72.14} & \textbf{72.82} & 72.80 \\
    \bottomrule
  \end{tabular}
\vspace{-4mm}
\end{wraptable}

\vspace{-2mm}
An appealing feature of mini-batch training methods is that they can avoid the out-of-memory issue by decreasing the batch size. Thus, we evaluate the prediction performance of LMC on Ogbn-arxiv datasets with different batch sizes (numbers of clusters).
We conduct experiments under different sizes of sampled clusters per mini-batch.
We run each experiment with the same epoch 
and search learning rates in the same set. We report the best prediction accuracy in Table \ref{tab:batch size}.
LMC outperforms GAS under small batch sizes (batch size = $1$ or $2$) and achieve comparable performance with GAS (batch size = $5$ or $10$).

\vspace{-4mm}

\subsection{Ablation}\label{sec:ablation}
\vspace{-2mm}

The improvement of LMC is due to two parts: the compensation in forward passes {\small$\mathbf{C}_f^l$} and the compensation in back passes {\small$\mathbf{C}_b^l$}.
Compared with GAS, the compensation in forward passes {\small$\mathbf{C}_f^l$} additionally combines the incomplete up-to-date messages.
Figure \ref{fig:ablation} shows the convergence curves of LMC using both {\small$\mathbf{C}_f^l$} and {\small$\mathbf{C}_b^l$} (denoted by {\small$\mathbf{C}_f$}\&{\small$\mathbf{C}_b$}), LMC using only {\small$\mathbf{C}_f^l$} (denoted by {\small$\mathbf{C}_f$}), and GAS on the Ogbn-arxiv dataset.
Under small batch sizes, the improvement mainly is due to {\small$\mathbf{C}_b^l$} and the incomplete up-to-date messages in forward passes may hurt the performance. This is because the mini-batch and the union of their neighbors are hard to contain most neighbors of out-of-batch nodes when the batch size is small.
Thus, the compensation in back passes {\small$\mathbf{C}_b^l$} is the most important component by correcting the bias of the mini-batch gradients.
Under large batch sizes, the improvement is due to {\small$\mathbf{C}_f^l$}, as the large batch sizes decrease the discarded messages and improve the accuracy of the mini-batch gradients (see Table \ref{tab:memory_consumption} in Appendix).
Notably, {\small$\mathbf{C}_b^l$} still slightly improves the performance. We provide more ablation studies about {\small$\beta_i$} in Appendix \ref{sec:ablation_beta}.

\vspace{-4mm}

\begin{figure}[h]
\centering 
\begin{subfigure}{0.5\linewidth}
  \includegraphics[width=\linewidth]{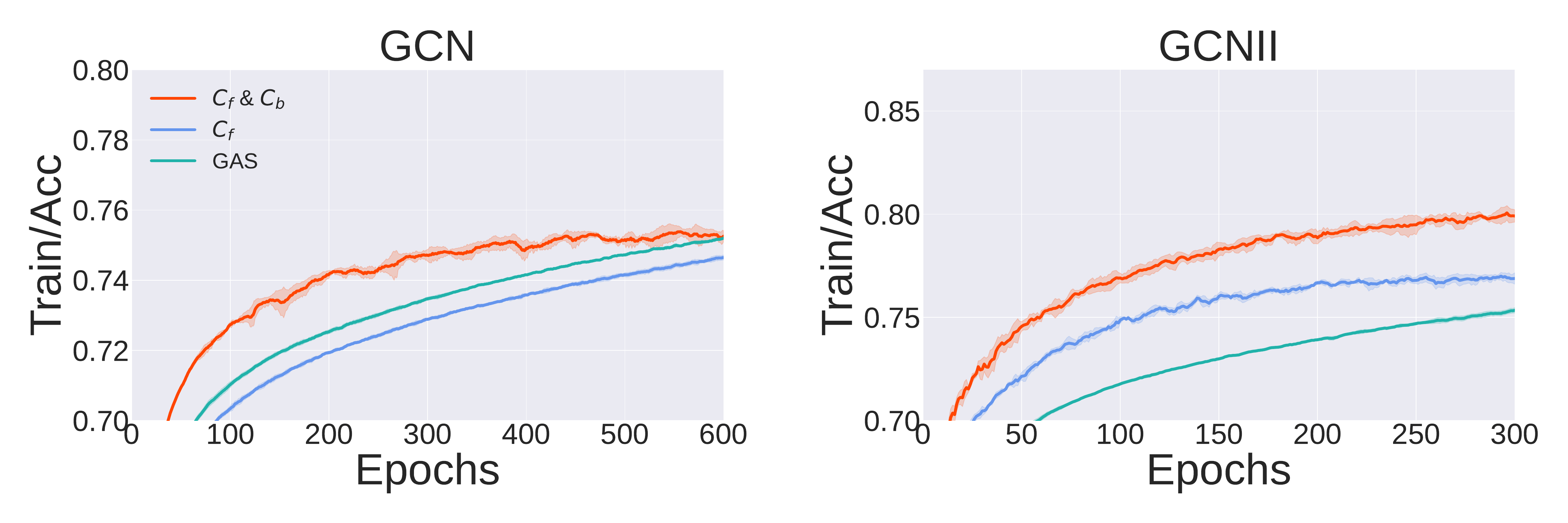}
  \caption{Under small batch sizes}\label{subfig:ablation_small}
\end{subfigure}\hfil 
\begin{subfigure}{0.5\linewidth}
  \includegraphics[width=\linewidth]{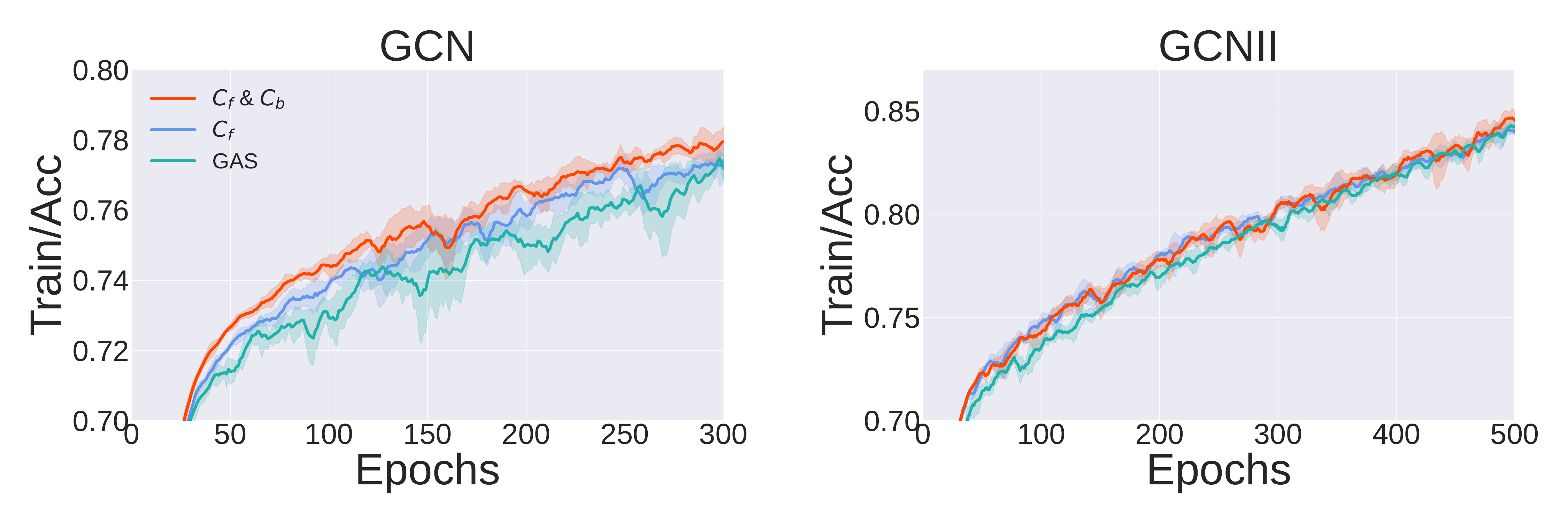}
    \caption{Under large batch sizes}\label{subfig:ablation_large}
\end{subfigure}\hfil
\caption{The improvement of the compensations on the Ogbn-arxiv dataset.} \label{fig:ablation}
\end{figure}

\vspace{-6mm}

\section{Conclusion}\label{sec:conclusion}

In this paper, we propose a novel subgraph-wise sampling method with a convergence guarantee, namely Local Message Compensation (LMC). LMC uses efficient and effective compensations to correct the biases of mini-batch gradients and thus accelerates convergence. We show that LMC converges to first-order stationary points of GNNs. To the best of our knowledge, LMC is the first subgraph-wise sampling method for GNNs with provable convergence. Experiments on large-scale benchmark tasks demonstrate that LMC significantly outperforms state-of-the-art subgraph-wise sampling methods in terms of efficiency.

\section*{Acknowledgement}

The authors would like to thank all the anonymous reviewers for their insightful comments. This work was supported in part by National Nature Science Foundations of China grants U19B2026, U19B2044, 61836011, 62021001, and 61836006, and the Fundamental Research Funds for the Central Universities grant WK3490000004.

\bibliography{iclr2023_conference}
\bibliographystyle{iclr2023_conference}

\clearpage
\appendix

\section{More Details about Experiments} \label{appendix:details_experiments}
In this section, we introduce more details about our experiments, including datasets, training and evaluation protocols, and implementations.

\subsection{Datasets} \label{sec:dataset}
We evaluate LMC on four large datasets, PPI, REDDIT, FLICKR \citep{graphsage},  and Ogbn-arxiv \citep{ogb}.
All of the datasets do not contain personally identifiable information or offensive content.
Table \ref{tab:datasets} shows the summary statistics of the datasets.
Details about the datasets are as follows.
\begin{itemize}[leftmargin=10mm]
    \item PPI contains 24 protein-protein interaction graphs. Each graph corresponds to a human tissue. Each node indicates a protein with positional gene sets, motif gene sets and immunological signatures as node features. Edges represent interactions between proteins. The task is to classify protein functions.
    \item REDDIT is a post-to-post graph constructed from REDDIT. Each node indicates a post and each edge between posts indicates that the same user comments on both. The task is to classify REDDIT posts into different communities based on (1) the GloVe CommonCrawl word vectors \citep{glove} of the post titles and comments, (2) the post’s scores, and (3) the number of comments made on the posts.
    \item Ogbn-arxiv is a directed citation network between all Computer Science (CS) arXiv papers indexed by MAG \citep{mag}. Each node is an arXiv paper and each directed edge indicates that one paper cites another one. The task is to classify unlabeled arXiv papers into different primary categories based on labeled papers and node features, which are computed by averaging word2vec \citep{word2vec} embeddings of words in papers' title and abstract.
    \item FLICKR categorizes types of images based on their descriptions and properties \citep{gas, graphsaint}.
\end{itemize}

\begin{table}[htbp]
  \begin{center}
    \caption{Statistics of the datasets used in our experiments.
    }\label{tab:datasets}
    \vspace{5pt}
    \scalebox{1}{
    \begin{tabular}{ccccc}
    \toprule
    \textbf{Dataset} & \textbf{\#Graphs} & \textbf{\#Classes} &\textbf{Total \#Nodes} & \textbf{Total \#Edges}  \\
      \midrule
      \midrule
      PPI & 24 & 121 & 56,944 & 793,632 \\
      REDDIT & 1 & 41 & 232,965 & 11,606,919   \\
      Ogbn-arxiv & 1 & 40  & 169,343 & 1,157,799  \\
      FLICKR & 1 & 7  & 89,250 & 449,878  \\
      \bottomrule
    \end{tabular}
    }
  \end{center}
\end{table}

\subsection{Training and Evaluation Protocols}\label{sec:training_evaluation}

We run all the experiments on a single GeForce RTX 2080 Ti (11 GB). All the models are implemented in Pytorch \citep{pytorch} and PyTorch Geometric \citep{pyg} based on the official implementation of \citep{gas}\footnote{\url{https://github.com/rusty1s/pyg_autoscale}. The owner does not mention the license.}.  The
code of LMC is available on GitHub at \url{https://github.com/MIRALab-USTC/GNN-LMC}.

\udfsection{Data Splitting.} We use the data splitting strategies following previous works \citep{gas, ignn}.

\subsection{Implementation Details}\label{sec:implementation}

\subsubsection{Normalization Technique} \label{sec:normalization}

In Section \ref{sec:naive_sgd} in the main text, we assume that the subgraph $\inbatch$ is uniformly sampled from $\mathcal{V}$ and the corresponding set of labeled nodes {\small$\mathcal{V}_{L_{\mathcal{B}}} = \inbatch \cap \mathcal{V}_{L}$} is uniformly sampled from {\small$\mathcal{V}_{L}$}. To enforce the assumption, we use the normalization technique to reweight Equations (\ref{eqn:mini-batch_grad_w}) and (\ref{eqn:mini-batch_grad_theta}) in the main text.

Suppose we partition the whole graph $\mathcal{V}$ into $b$ parts $\{\mathcal{V}_{\mathcal{B}_i}\}_{i=1}^b$ and then uniformly sample $c$ clusters without replacement to construct subgraph $\inbatch$. By the normalization technique, Equation (\ref{eqn:mini-batch_grad_w}) becomes
\begin{align}
    \mathbf{g}_w(\inbatch) = \frac{b|\mathcal{V}_{L_{\mathcal{B}}}|}{c|\mathcal{V}_L|} \frac{1}{|\mathcal{V}_{L_{\mathcal{B}}}|} \sum_{v_j \in \mathcal{V}_{L_{\mathcal{B}}}}\nabla_{w} \ell_w(\embh_j ,y_j),
\end{align}
where $\frac{b|\mathcal{V}_{L_{\mathcal{B}}}|}{c|\mathcal{V}_L|}$ is the corresponding weight. Similarly, Equation (\ref{eqn:mini-batch_grad_theta}) becomes
\begin{align}
    \mathbf{g}_{\theta}(\inbatch) = \frac{b|\inbatch|}{c|\mathcal{V}|} \frac{|\mathcal{V}|}{|\inbatch|} \sum_{v_j \in \inbatch} \nabla_{\theta} \update(\embh_j ,\embm_{\neighbor{v_j}}  ,\embx_j) \embV_{j} \label{eqn:sgd_theta},
\end{align}
where $\frac{b|\inbatch|}{c|\mathcal{V}|}$ is the corresponding weight.

\subsubsection{Incorporating Batch Normalization}

We uniformly sample a mini-batch of nodes $\inbatch$ and generate the induced subgraph of $\neighbor{\inbatch}$.
If we directly feed the $\embH_{\neighbor{\inbatch}}^{(l)}$ to a batch normalization layer, the learned mean and standard deviation of the batch normalization layer may be biased.
Thus, LMC first feeds the embeddings of the mini-batch $\embH_{\inbatch}^{(l)}$ to a batch normalization layer and then feeds the embeddings outside the mini-batch $\embH_{\neighbor{\inbatch}\backslash\inbatch}^{(l)}$ to another batch normalization layer.

\subsection{Selection of $\beta_{i}$}\label{sec:selection_beta}

We select $\beta_i = score(i) \alpha $ for each node $v_i$, where $\alpha \in [0,1]$ is a hyperparameter and $score$ is a function to measure the quality of the incomplete up-to-date messages.
We search $score$ in a $\{f(x)=x^2,f(x)=2x-x^2,f(x)=x,f(x)=1;x= deg_{local}(i)/deg_{global}(i)\}$, where $deg_{global}(i)$ is the degree of node $i$ in the whole graph and $deg_{local}(i)$ is the degree of node $i$ in the subgraph induced by $\neighbor{\inbatch}$.

\section{Computational Complexity} \label{appendix:complexity}

We summarize the computational complexity in Table \ref{tab:complexity}, where $n_{\max}$ is the maximum of neighborhoods, $L$ is the number of message passing layers, $\mathcal{V}_{\mathcal{B}}$ is a set of nodes in a sampled mini-batch, $d$ is the embedding dimension, $\mathcal{V}$ is the set of nodes in the whole graph, and $\mathcal{E}$ is the set of edges in the whole graph. As GD, backward SGD, CLUSTER, GAS, and LMC share the same memory complexity of parameters $\theta^{(l)}$, we omit it in Table \ref{tab:complexity}.

\begin{table}[htbp]
    \centering
    \caption{
    Time and memory complexity per gradient update of message passing based GNNs (e.g. GCN \citep{gcn} and GCNII \citep{gcnii}).
    }
    \label{tab:complexity}
    \begin{tabular}{ccc}
    \toprule
        \textbf{Method} & \textbf{Time} &\textbf{Memory}   \\
        \midrule
        GD and backward SGD  & $\mathcal{O}(L(|\mathcal{E}|d+|\mathcal{V}| d^2))$ & $\mathcal{O}(L|\mathcal{V}| d)$ \\
        CLUSTER \citep{cluster_gcn}  & $\mathcal{O}( L(n_{\max}|\inbatch|d+|\inbatch| d^2) )$ & $\mathcal{O}(L|\inbatch| d)$ \\
        GAS \citep{gas} & $\mathcal{O}( L(n_{\max}|\inbatch|d+|\inbatch| d^2) )$ & $\mathcal{O}( n_{\max} L|\inbatch| d)$ \\
        \midrule
        LMC & $\mathcal{O}( L(n_{\max}|\inbatch|d+|\inbatch| d^2) )$ & $\mathcal{O}(n_{\max} L|\inbatch| d)$ \\
    \bottomrule
    \end{tabular}
\end{table}

\section{Additional Related Work}

\subsection{Main differences between LMC and GraphFM}\label{sec:diff_graphfm}

\begin{itemize}[leftmargin=5mm]
    \item First, LMC focuses on the convergence of subgraph-wise sampling methods, which is orthogonal to the idea of GraphFM-OB to alleviate the staleness problem of historical values.
    The advanced approach to alleviating the staleness problem of historical values can further improve the performance of LMC and it is easy to establish provable convergence by the extension of LMC.
    \item Second, LMC uses nodes in both mini-batches and their $1$-hop neighbors to compute incomplete up-to-date messages. In contrast, GraphFM-OB only uses nodes in the mini-batches. For the nodes whose neighbors are contained in the union of the nodes in mini-batches and their $1$-hop neighbors, the aggregation results of LMC are exact, while those of GraphFM-OB are not.
    \item Third, by noticing that aggregation results are biased and the I/O bottleneck for the history access, LMC does not update the historical values in the storage for nodes outside the mini-batches. However, GraphFM-OB updates them based on the aggregation results.
\end{itemize}

\section{Detailed proofs}\label{appendix:detailed_proofs}

{\bf Notations.} Unless otherwise specified, $C$ and $C'$ with any superscript or subscript denotes constants. We denote the learning rate by $\eta$.

In this section, we suppose that Assumption \ref{assmp:proof} holds.

\subsection{Proof of Theorem \ref{thm:unbiased}: unbiased mini-batch gradients of backward SGD} \label{appendix:proof_thm1}

In this subsection, we give the proof of Theorem \ref{thm:unbiased}, which shows that the mini-batch gradients computed by backward SGD are unbiased.

\begin{proof}
    As $\mathcal{V}_{L_{\mathcal{B}}} = \inbatch \cap \mathcal{V}_{L}$ is uniformly sampled from $\mathcal{V}_{L}$, the expectation of $ \mathbf{g}_w(\inbatch)$ is
    \begin{align*}
        \mathbb{E}[\mathbf{g}_w(\inbatch)] &=\mathbb{E}[ \frac{1}{|\mathcal{V}_{L_{\mathcal{B}}}|} \sum_{v_j \in \mathcal{V}_{L_{\mathcal{B}}}}\nabla_{w} \ell_w(\embh_j ,y_j)]\\
        &= \nabla_{w} \mathbb{E}[ \ell_w(\embh_j ,y_j)]\\
        &= \nabla_{w} \loss.
     \end{align*}
     As the subgraph $\inbatch$ is uniformly sampled from $\mathcal{V}$, the expectation of $\mathbf{g}_{\theta^{l}}(\inbatch)$ is
    \begin{align*}
        \mathbb{E}[\mathbf{g}_{\theta^{l}}(\inbatch)] &= \mathbb{E}[\frac{|\mathcal{V}|}{|\mathcal{V}_\mathcal{B}|}\sum_{v_j\in\mathcal{V}_\mathcal{B}}\left(\nabla_{\theta^{l}}u_{\theta^l}(\embh^{l-1}_j, \embm^{l-1}_{\neighbor{v_j}}, \embx_j)\right) \embV_{j}^{l}] \\
        &= |\mathcal{V}| \mathbb{E}[ \nabla_{\theta^{l}}u_{\theta^l}(\embh^{l-1}_j, \embm^{l-1}_{\neighbor{v_j}}, \embx_j)  \embV_{j}^{l} ]\\
        &=|\mathcal{V}| \frac{1}{|\mathcal{V}|} \sum_{v_j \in \mathcal{V}} \nabla_{\theta^{l}}u_{\theta^l}(\embh^{l-1}_j, \embm^{l-1}_{\neighbor{v_j}}, \embx_j) \embV_{j}^{l}\\
        &= \sum_{v_j \in \mathcal{V}} \nabla_{\theta^{l}}u_{\theta^l}(\embh^{l-1}_j, \embm^{l-1}_{\neighbor{v_j}}, \embx_j) \embV_{j}^{l}\\
        &= \nabla_{\theta^{l}} \loss,\,\,\forall\,l\in[L].
    \end{align*}
\end{proof}

\subsection{Differences between exact values at adjacent iterations}

We first show that the differences between the exact values of the same layer in two adjacent iterations can be bounded by setting a proper learning rate.

\begin{lemma}\label{prop:exact_difference_h}
    Suppose that Assumption \ref{assmp:proof} holds. Given an $L$-layer GNN, for any $\varepsilon>0$, by letting
    \begin{align*}
        \eta\leq\frac{\varepsilon}{(2\gamma)^L G}<\varepsilon,
    \end{align*}
    we have
    \begin{align*}
        \|\embH^{l,k+1}-\embH^{l,k}\|_F < \varepsilon,\,\,\forall\,l\in[L],\,k\in\mathbb{N}^*.
    \end{align*}
\end{lemma}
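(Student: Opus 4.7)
The plan is to compare $\embH^{l,k+1}$ and $\embH^{l,k}$ by unfolding the layer recursion $\embH^l = f_{\theta^l}(\embH^{l-1}; \embX)$ and exploiting that $f_{\theta^l}$ is $\gamma$-Lipschitz in both its parameters and its input argument. Write $a_l := \|\embH^{l,k+1}-\embH^{l,k}\|_F$; by inserting the hybrid term $f_{\theta^{l,k+1}}(\embH^{l-1,k};\embX)$ and applying the triangle inequality together with the two Lipschitz bounds, I obtain
\begin{align*}
a_l \;\le\; \gamma\,a_{l-1} \;+\; \gamma\,\|\theta^{l,k+1}-\theta^{l,k}\|_2 .
\end{align*}
The SGD-type update $\theta^{l,k+1}=\theta^{l,k}-\eta\,\widetilde{\mathbf{g}}_{\theta^l}^{k+1}$ and the uniform bound $\|\widetilde{\mathbf{g}}_{\theta^l}\|_2\le G$ from Assumption \ref{assmp:proof} immediately give $\|\theta^{l,k+1}-\theta^{l,k}\|_2 \le \eta G$, so the recursion collapses to $a_l \le \gamma a_{l-1} + \gamma\eta G$ with base case $a_0 = 0$ (since $\embH^{0,k}=\embX$ for every $k$).

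Next I would solve the recursion by induction on $l$, aiming to establish the invariant $a_l \le \eta G\bigl((2\gamma)^l - 1\bigr)$. The base case $l=0$ is trivial, and in the inductive step
\begin{align*}
a_l \;\le\; \gamma \cdot \eta G\bigl((2\gamma)^{l-1}-1\bigr) + \gamma\eta G \;=\; \gamma\eta G\,(2\gamma)^{l-1},
\end{align*}
which is at most $\eta G\bigl((2\gamma)^l-1\bigr)$ provided $1 \le 2^{l-1}\gamma^{l}$; this always holds for $l\ge 1$ and $\gamma>1$. With this strict invariant in hand, one then plugs in $\eta \le \varepsilon/\bigl((2\gamma)^L G\bigr)$ to conclude, for every $l\in[L]$,
\begin{align*}
a_l \;\le\; \eta G\bigl((2\gamma)^L-1\bigr) \;<\; \eta G (2\gamma)^L \;\le\; \varepsilon ,
\end{align*}
which is the desired strict inequality uniformly in $k$.

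I do not anticipate any serious obstacles: the argument is a layer-wise Grönwall-style telescoping and every piece is supplied by Assumption \ref{assmp:proof}. The only subtle point is choosing an inductive invariant tight enough to deliver the exact constant $(2\gamma)^L$ together with the \emph{strict} inequality in the conclusion — a naive invariant like $a_l\le \eta G (2\gamma)^l$ yields only $a_l\le\varepsilon$. Introducing the $-1$ slack term (equivalently, substituting $b_l = a_l + c$ with $c = \gamma\eta G/(\gamma-1)$ to diagonalize the recursion) is the key bookkeeping move. A minor point worth flagging in the write-up is to interpret the phrase "$f_{\theta^l}$ is $\gamma$-Lipschitz" as Lipschitz in each argument separately with constant $\gamma$, so that the hybrid-term splitting is legitimate; this is consistent with how the same Lipschitz hypothesis is used in Theorem \ref{thm:grad_error} and Theorem \ref{thm:convergence}.
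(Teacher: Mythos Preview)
Your proposal is correct and follows essentially the same approach as the paper: both insert the hybrid term $f_{\theta^{l,k+1}}(\embH^{l-1,k})$, apply the two Lipschitz bounds to obtain the recursion $a_l \le \gamma a_{l-1} + \gamma\eta G$ with $a_0=0$, and close by induction on $l$. The only cosmetic difference is the invariant tracked---the paper proves $a_l < \varepsilon/(2\gamma)^{L-l}$ by substituting the bound on $\eta$ at each step, whereas you keep $\eta$ explicit via $a_l \le \eta G\bigl((2\gamma)^l-1\bigr)$ and plug in the bound only at the end.
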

\begin{proof}
    Since $\eta\leq\frac{\varepsilon}{(2\gamma)^L G}<\frac{\varepsilon}{\gamma(2\gamma)^{L-1}G}$, we have
    \begin{align*}
        \|\embH^{1,k+1} - \embH^{1,k}\|_F ={}& \|f_{\theta^{1,k+1}}(\embX) - f_{\theta^{1,k}}(\embX)\|_F\\
        \leq{}& \gamma \|\theta^{1,k+1} - \theta^{1,k}\|\\
        \leq{}& \gamma \|\widetilde{\mathbf{g}}_{\theta^1}\|\eta\\
        <{}& \frac{\gamma G \varepsilon}{\gamma(2\gamma)^{L-1} G}\\
        ={}& \frac{\varepsilon}{(2\gamma)^{L-1}}.
    \end{align*}
    Then, because $\eta\leq\frac{\varepsilon}{(2\gamma)^L G}<\frac{\varepsilon}{(2\gamma)^{L-1}G}$, we have
    \begin{align*}
        \|\embH^{2,k+1} - \embH^{2,k}\|_F ={}& \|f_{\theta^{2,k+1}}(\embH^{1,k+1}) - f_{\theta^{2,k}}(\embH^{1,k})\|_F\\
        \leq{}& \|f_{\theta^{2,k+1}}(\embH^{1,k+1}) - f_{\theta^{2,k}}(\embH^{1,k+1})\|_F + \|f_{\theta^{2,k}}(\embH^{1,k+1}) - f_{\theta^{2,k}}(\embH^{1,k})\|_F\\
        \leq{}& \gamma \|\theta^{2,k+1} - \theta^{2,k}\| + \gamma \|\embH^{1,k+1} - \embH^{1,k}\|_F\\
        \leq{}& \gamma G \eta + \frac{\varepsilon}{2(2\gamma)^{L-2}}\\
        <{}& \frac{\varepsilon}{2(2\gamma)^{L-2}}+\frac{\varepsilon}{2(2\gamma)^{L-2}}\\
        ={}& \frac{\varepsilon}{(2\gamma)^{L-2}}.
    \end{align*}
    And so on, we have
    \begin{align*}
        \|\embH^{l,k+1} - \embH^{l,k}\|_F < \frac{\varepsilon}{(2\gamma)^{L-l}},\,\,\forall\,l\in[L],\,k\in\mathbb{N}^*.
    \end{align*}
    Since $(2\gamma)^{L-l}>1$, we have
    \begin{align*}
        \|\embH^{l,k+1} - \embH^{l,k}\|_F < \varepsilon,\,\,\forall\,l\in[L],\,k\in\mathbb{N}^*.
    \end{align*}
\end{proof}

\begin{lemma}\label{prop:exact_difference_v}
    Suppose that Assumption \ref{assmp:proof} holds. Given an $L$-layer GNN, for any $\varepsilon>0$, by letting
    \begin{align*}
        \eta\leq\frac{\varepsilon}{(2\gamma)^{L-1} G}<\varepsilon,
    \end{align*}
    we have
    \begin{align*}
        \|\embV^{l,k+1}-\embV^{l,k}\|_F < \varepsilon,\,\,\forall\,l\in[L],\,k\in\mathbb{N}^*.
    \end{align*}
\end{lemma}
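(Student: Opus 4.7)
The plan is to mirror the argument of Lemma~\ref{prop:exact_difference_h} but applied to the backward recursion $\embV^{l} = \phi_{\theta^{l+1}}(\embV^{l+1})$ from Equation~(\ref{eqn:auxiliary_recursion}). Since the auxiliary variables are computed in reverse order $l=L,L-1,\ldots,1$, I would proceed by backward induction on $l$, establishing a layer-wise bound of the form $\|\embV^{l,k+1}-\embV^{l,k}\|_F < \varepsilon/(2\gamma)^{l-1}$; this immediately implies the stated inequality since $(2\gamma)^{l-1}\geq 1$ for all $l\in[L]$.

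For the base case $l=L$, the auxiliary variable $\embV^{L,k} = \nabla_{\embH^{L}}\loss$ depends only on $w^k$ and $\embH^{L,k}$. Using the Lipschitz regularity from Assumption~\ref{assmp:proof}(2) together with the triangle inequality, I would bound $\|\embV^{L,k+1}-\embV^{L,k}\|_F$ by $\gamma\bigl(\|w^{k+1}-w^k\| + \|\embH^{L,k+1}-\embH^{L,k}\|_F\bigr)$. The first summand is controlled by the single-step update $\|w^{k+1}-w^k\|\leq\eta G$, and the second by invoking Lemma~\ref{prop:exact_difference_h} with an appropriately rescaled surrogate of $\varepsilon$, so that both contributions fall below the target $\varepsilon/(2\gamma)^{L-1}$.

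For the inductive step from $l+1$ to $l<L$, I would decompose the change one argument at a time,
\begin{align*}
\|\embV^{l,k+1}-\embV^{l,k}\|_F \leq{}& \|\phi_{\theta^{l+1,k+1}}(\embV^{l+1,k+1}) - \phi_{\theta^{l+1,k}}(\embV^{l+1,k+1})\|_F\\
&+\|\phi_{\theta^{l+1,k}}(\embV^{l+1,k+1}) - \phi_{\theta^{l+1,k}}(\embV^{l+1,k})\|_F,
\end{align*}
bound the first piece by $\gamma\|\theta^{l+1,k+1}-\theta^{l+1,k}\|\leq\gamma G\eta$ and the second by $\gamma\|\embV^{l+1,k+1}-\embV^{l+1,k}\|_F$ using the $\gamma$-Lipschitz property of $\phi_{\theta^{l+1}}$, after which the inductive hypothesis yields a bound of the form $\gamma G\eta + \varepsilon/(2(2\gamma)^{l-1})$. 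The choice $\eta \leq \varepsilon/((2\gamma)^{L-1}G)$ makes the first term at most half the target, reproducing the halving trick in the proof of Lemma~\ref{prop:exact_difference_h} and closing the induction. The main obstacle is the careful bookkeeping of constants needed to obtain exactly the exponent $L-1$ rather than $L$, together with handling the implicit dependence of $\phi_{\theta^{l+1}}$ on $\embH^{l}$ through the Jacobian $\nabla_{\vec{\embH}^{l}}\vec{f}_{\theta^{l+1}}$: one must introduce a third Lipschitz term into the triangle inequality above to capture the change in $\embH^{l}$ across iterations and control it via Lemma~\ref{prop:exact_difference_h}, without breaking the halving structure that drives the telescoping bound.
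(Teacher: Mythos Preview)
Your backbone---backward induction on $l$ along the recursion $\embV^{l}=\phi_{\theta^{l+1}}(\embV^{l+1})$, with the two-term triangle inequality and the halving trick---is exactly what the paper does. The differences are in how much dependence is tracked.

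The paper's proof is strictly simpler than your plan in two places. First, its base case is $l=L-1$, not $l=L$: it writes $\embV^{L-1,k}=\phi_{\theta^{L,k}}(\nabla_{\embH}\loss)$ and treats $\nabla_{\embH}\loss$ as the \emph{same} input at iterations $k$ and $k+1$, so the only change is in $\theta^{L}$. There is no separate bound on $\|\embV^{L,k+1}-\embV^{L,k}\|_F$ via $w$ and $\embH^{L}$; that term simply does not appear. Second, in the inductive step the paper uses only your two-term decomposition in $\theta^{l+1}$ and $\embV^{l+1}$; it does \emph{not} introduce the third Lipschitz term for the implicit $\embH^{l}$-dependence of $\phi_{\theta^{l+1}}$. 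In other words, the paper reads Assumption~\ref{assmp:proof}(2) as ``$\phi_{\theta^{l}}$ is $\gamma$-Lipschitz jointly in $(\theta^{l},\embV^{l})$'' and nothing else, which is precisely why the exponent comes out as $L-1$ instead of $L$ with no extra bookkeeping.

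So the ``main obstacle'' you flag is one the paper sidesteps rather than solves. Your more careful accounting is a reasonable instinct, but be aware that if you actually add the $w$, $\embH^{L}$, and $\embH^{l}$ terms, the stated threshold $\eta\le\varepsilon/((2\gamma)^{L-1}G)$ will no longer be enough to close the induction (already at your base case, $\gamma\eta G$ alone saturates the target $\varepsilon/(2\gamma)^{L-1}$ up to a factor $\gamma>1$). To match the lemma as written, follow the paper and suppress those extra dependencies; if you want the more honest version, you will need to tighten the constant in the hypothesis on $\eta$.
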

\begin{proof}
    Since $\eta\leq\frac{\varepsilon}{(2\gamma)^{L-1} G}<\frac{\varepsilon}{\gamma(2\gamma)^{L-2}G}$, we have
    \begin{align*}
        \|\embV^{L-1,k+1} - \embV^{L-1,k}\|_F ={}& \|\phi_{\theta^{L,k+1}}(\nabla_{\embH}\loss) - \phi_{\theta^{L,k}}(\nabla_{\embH}\loss)\|_F\\
        \leq{}& \gamma \|\theta^{L,k+1} - \theta^{L,k}\|\\
        \leq{}& \gamma \|\widetilde{\mathbf{g}}_{\theta^L}\|\eta\\
        <{}& \frac{\gamma G \varepsilon}{\gamma(2\gamma)^{L-2} G}\\
        ={}& \frac{\varepsilon}{(2\gamma)^{L-2}}.
    \end{align*}
    Then, because $\eta\leq\frac{\varepsilon}{(2\gamma)^{L-1} G}<\frac{\varepsilon}{(2\gamma)^{L-2}G}$, we have
    \begin{align*}
        &\|\embV^{L-2,k+1} - \embV^{L-2,k}\|_F\\
        ={}& \|\phi_{\theta^{L-1,k+1}}(\embV^{L-1,k+1}) - \phi_{\theta^{L-1,k}}(\embV^{L-1,k})\|_F\\
        \leq{}& \|\phi_{\theta^{L-1,k+1}}(\embV^{L-1,k+1}) - \phi_{\theta^{L-1,k}}(\embV^{L-1,k+1})\|_F\\
        &+ \|\phi_{\theta^{L-1,k}}(\embV^{L-1,k+1}) - \phi_{\theta^{L-1,k}}(\embV^{L-1,k})\|_F\\
        \leq{}& \gamma \|\theta^{L-1,k+1} - \theta^{L-1,k}\| + \gamma \|\embV^{L-1,k+1} - \embV^{L-1,k}\|_F\\
        \leq{}& \gamma G \eta + \frac{\varepsilon}{2(2\gamma)^{L-3}}\\
        <{}& \frac{\varepsilon}{2(2\gamma)^{L-3}}+\frac{\varepsilon}{2(2\gamma)^{L-3}}\\
        ={}& \frac{\varepsilon}{(2\gamma)^{L-3}}.
    \end{align*}
    And so on, we have
    \begin{align*}
        \|\embV^{l,k+1} - \embV^{l,k}\|_F < \frac{\varepsilon}{(2\gamma)^{l-1}},\,\,\forall\,l\in[L],\,k\in\mathbb{N}^*.
    \end{align*}
    Since $(2\gamma)^{l-1}>1$, we have
    \begin{align*}
        \|\embV^{l,k+1} - \embV^{l,k}\|_F < \varepsilon,\,\,\forall\,l\in[L],\,k\in\mathbb{N}^*.
    \end{align*}
\end{proof}

\subsection{Historical values and temporary values}

Suppose that we uniformly sample a mini-batch $\mathcal{V}_{\mathcal{B}}^k\subset \mathcal{V}$ at the $k$-th iteration and $|\mathcal{V}_{\mathcal{B}}^k| = S$. For the simplicity of notations, we denote the temporary node embeddings and auxiliary variables in the $l$-th layer by $\temH^{l,k}$ and $\temV^{l,k}$, respectively, where
\begin{align*}
    \temH^{l,k}_i = 
    \begin{cases}
        \temh^{l,k}_i, &v_i\in\neighbor{\mathcal{V}_{\mathcal{B}}^{k}}\setminus \mathcal{V}_{\mathcal{B}}^{k},\\
        \hish^{l,k}_i, &{\rm otherwise,} 
    \end{cases}
\end{align*}
and
\begin{align*}
    \temV^{l,k}_i = 
    \begin{cases}
        \temv^{l,k}_i, &v_i\in\neighbor{\mathcal{V}_{\mathcal{B}}^{k}}\setminus \mathcal{V}_{\mathcal{B}}^{k},\\
        \hisv^{l,k}_i, &{\rm otherwise.} 
    \end{cases}
\end{align*}
We abbreviate the process that LMC updates the node embeddings and auxiliary variables of $\mathcal{V}_{\mathcal{B}}^k$ in the $l$-th layer at the $k$-th iteration as
    \begin{align*}
        &\hisH^{l,k}_{\mathcal{V}_{\mathcal{B}}^k} = [f_{\theta^{l,k}}(\temH^{l-1,k})]_{\mathcal{V}_{\mathcal{B}}^k},\\
        &\hisV^{l,k}_{\mathcal{V}_{\mathcal{B}}^k} = [\phi_{\theta^{l+1,k}}(\temH^{l+1,k})]_{\mathcal{V}_{\mathcal{B}}^k}.
    \end{align*}

For each $v_i\in\mathcal{V}_{\mathcal{B}}^k$, the update process of $v_i$ in the $l$-th layer at the $k$-th iteration can be expressed by
\begin{align*}
    &\hish^{l,k}_i = f_{\theta^{l,k},i}(\temH^{l-1,k}),\\
    &\hisV^{l,k}_i = \phi_{\theta^{l+1,k},i}(\temV^{l+1,k}),
\end{align*}
where $f_{\theta^{l,k},i}$ and $\phi_{\theta^{l+1,k},i}$ are the components for node $v_i$ of $f_{\theta^{l,k}}$ and $\phi_{\theta^{l+1,k}}$, respectively.

\subsubsection{Convex combination coefficients}

We first focus on convex combination coefficients $\beta_i$, $i\in[n]$. For the simplicity of analysis, we assume $\beta_i=\beta$ for $i\in[i]$. The analysis of the case where $(\beta_i)_{i=1}^n$ are different from each other is the same.

\begin{lemma}\label{prop:tem_epsilon_h}
    Suppose that Assumption \ref{assmp:proof} holds. For any $\varepsilon>0$, by letting
    \begin{align*}
        \beta \leq \frac{\varepsilon}{2G},\,\,\forall\,l\in[L], \,i\in[n],
    \end{align*}
    we have
    \begin{align*}
        \|\temH^{l,k} - \embH^{l,k}\|_F \leq \|\hisH^{l,k} - \embH^{l,k}\|_F + \varepsilon,\,\,\forall\,l\in[L],\,k\in\mathbb{N}^*.
    \end{align*}
\end{lemma}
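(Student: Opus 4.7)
The plan is to reduce this comparison between $\temH^{l,k}$ and $\embH^{l,k}$ to a direct bound on the convex-combination perturbation $\|\temH^{l,k}-\hisH^{l,k}\|_F$, so the statement follows from one application of the triangle inequality. No induction on $l$ or $k$ is required, since the claim compares $\temH$ and $\embH$ at the \emph{same} layer and the \emph{same} iteration; we do not need to control how the historical values evolve.

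First I would write
\[
\|\temH^{l,k}-\embH^{l,k}\|_F \;\le\; \|\hisH^{l,k}-\embH^{l,k}\|_F + \|\temH^{l,k}-\hisH^{l,k}\|_F,
\]
so it suffices to show $\|\temH^{l,k}-\hisH^{l,k}\|_F \le \varepsilon$. To do this, I would split the columns according to the definition of $\temH^{l,k}$. For $v_i\notin\neighbor{\mathcal{V}_{\mathcal{B}}^{k}}\setminus\mathcal{V}_{\mathcal{B}}^{k}$, the definition gives $\temH^{l,k}_i=\hisH^{l,k}_i$, so that column contributes nothing. For $v_i\in\neighbor{\mathcal{V}_{\mathcal{B}}^{k}}\setminus\mathcal{V}_{\mathcal{B}}^{k}$, equation (\ref{eqn:temp_compute}) gives
\[
\temH^{l,k}_i-\hisH^{l,k}_i \;=\; \temh^{l,k}_i-\hish^{l,k}_i \;=\; \beta\bigl(\widetilde{\embh}^{l,k}_i-\hish^{l,k}_i\bigr).
\]

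Squaring and summing over the 1-hop out-of-batch indices, and using that the restriction of a matrix to a subset of its columns has Frobenius norm at most that of the full matrix, I obtain
\[
\|\temH^{l,k}-\hisH^{l,k}\|_F \;\le\; \beta\,\|\widetilde{\embH}^{l,k}-\hisH^{l,k}\|_F \;\le\; \beta\bigl(\|\widetilde{\embH}^{l,k}\|_F + \|\hisH^{l,k}\|_F\bigr) \;\le\; 2G\beta,
\]
where the last step uses Assumption \ref{assmp:proof}. Choosing $\beta\le \varepsilon/(2G)$ makes this at most $\varepsilon$, and combining with the triangle inequality above yields the claim.

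There is no genuine obstacle here; the entire argument is a definitional unfolding plus the boundedness of $\widetilde{\embH}^{l,k}$ and $\hisH^{l,k}$ from Assumption \ref{assmp:proof}. The only point requiring care is keeping track of which columns of $\temH^{l,k}$ actually differ from $\hisH^{l,k}$, namely the 1-hop out-of-batch indices, so that the $\beta$-factor can be pulled out cleanly before applying the norm bound.
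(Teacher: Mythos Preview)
Your proof is correct and essentially the same as the paper's. The only cosmetic difference is that the paper writes $\temH^{l,k}-\embH^{l,k}=(1-\beta)(\hisH^{l,k}-\embH^{l,k})+\beta(\widetilde{\embH}^{l,k}-\embH^{l,k})$ and applies the triangle inequality to that decomposition, whereas you first split off $\|\temH^{l,k}-\hisH^{l,k}\|_F$ and then bound it; both routes land on the same $2G\beta$ term, and your explicit handling of the column split is, if anything, slightly more careful than the paper's.
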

\begin{proof}
    Since $\temH^{l,k} = (1-\beta)\hisH^{l,k} + \beta \widetilde{\embH}^{l,k}$, we have
    \begin{align*}
        \|\temH^{l,k} - \embH^{l,k}\|_F ={}& \|(1-\beta)\hisH^{l,k} + \beta \widetilde{\embH}^{l,k} - (1-\beta)\embH^{l,k}+\beta \embH^{l,k}\|_F\\
        \leq{}& (1-\beta) \|\hisH^{l,k} - \embH^{l,k}\|_F + \beta \|\widetilde{\embH}^{l,k}-\embH^{l,k}\|_F\\
        \leq{}& \|\hisH^{l,k} - \embH^{l,k}\|_F + 2\beta G.
    \end{align*}
    Hence letting $\beta \leq \frac{\varepsilon}{2G}$ leads to
    \begin{align*}
        \|\temH^{l,k} - \embH^{l,k}\|_F \leq \|\hisH^{l,k} - \embH^{l,k}\|_F + \varepsilon.
    \end{align*}
\end{proof}

\begin{lemma}\label{prop:tem_epsilon_v}
    Suppose that Assumption \ref{assmp:proof} holds. For any $\varepsilon>0$, by letting
    \begin{align*}
        \beta \leq \frac{\varepsilon}{2G},\,\, \forall\,l\in[L], \,i\in[n],
    \end{align*}
    we have
    \begin{align*}
        \|\temV^{l,k} - \embV^{l,k}\|_F \leq \|\hisV^{l,k} - \embV^{l,k}\|_F + \varepsilon.
    \end{align*}
\end{lemma}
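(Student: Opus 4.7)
The plan is to mirror the proof of Lemma \ref{prop:tem_epsilon_h} verbatim, replacing the node embeddings by the auxiliary variables. The key observation is that Equation (\ref{eqn:temp_compute_auxiliary}) defines $\temV^{l,k}$ as a convex combination of $\hisV^{l,k}$ and $\widetilde{\embV}^{l,k}$ with exactly the same coefficients $\beta_i$ used in the forward pass (Equation (\ref{eqn:temp_compute})), so the algebraic manipulation is structurally identical.

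First, I would start from $\temV^{l,k} = (1-\beta)\hisV^{l,k} + \beta\widetilde{\embV}^{l,k}$ (taking $\beta_i = \beta$ for all $i$, as noted in the paragraph preceding Lemma \ref{prop:tem_epsilon_h}, and treating nodes outside $\neighbor{\mathcal{V}_{\mathcal{B}}^k}\setminus\mathcal{V}_{\mathcal{B}}^k$ as the $\beta=0$ case) and subtract $\embV^{l,k} = (1-\beta)\embV^{l,k} + \beta\embV^{l,k}$ from both sides. The triangle inequality then yields
\begin{align*}
\|\temV^{l,k} - \embV^{l,k}\|_F \leq (1-\beta)\|\hisV^{l,k} - \embV^{l,k}\|_F + \beta\|\widetilde{\embV}^{l,k} - \embV^{l,k}\|_F.
\end{align*}

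Second, I would invoke Assumption \ref{assmp:proof}, which gives $\|\widetilde{\embV}^{l,k}\|_F \leq G$ and $\|\embV^{l,k}\|_F \leq G$, so that $\|\widetilde{\embV}^{l,k} - \embV^{l,k}\|_F \leq 2G$ by the triangle inequality. Combining with $(1-\beta) \leq 1$ produces
\begin{align*}
\|\temV^{l,k} - \embV^{l,k}\|_F \leq \|\hisV^{l,k} - \embV^{l,k}\|_F + 2\beta G.
\end{align*}
Finally, choosing $\beta \leq \varepsilon/(2G)$ gives the desired bound.

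There is no real obstacle: the boundedness hypothesis for $\widetilde{\embV}^{l,k}$ and $\embV^{l,k}$ is already packaged into Assumption \ref{assmp:proof}, and the convex-combination structure of Equation (\ref{eqn:temp_compute_auxiliary}) matches that of Equation (\ref{eqn:temp_compute}) exactly. The only thing worth double-checking is that the convention of setting $\beta_i = 0$ for nodes where no combination is performed (i.e. $v_i \notin \neighbor{\mathcal{V}_{\mathcal{B}}^k}\setminus\mathcal{V}_{\mathcal{B}}^k$) is consistent with how the ambient statement is quantified, but this is the same convention already adopted in the proof of Lemma \ref{prop:tem_epsilon_h}, so reproducing that proof with $\embV$ in place of $\embH$ is sufficient.
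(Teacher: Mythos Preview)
Your proposal is correct and follows exactly the same approach as the paper: form the convex combination $\temV^{l,k}=(1-\beta)\hisV^{l,k}+\beta\widetilde{\embV}^{l,k}$, apply the triangle inequality, bound $\|\widetilde{\embV}^{l,k}-\embV^{l,k}\|_F\leq 2G$ via Assumption~\ref{assmp:proof}, and conclude with $\beta\leq\varepsilon/(2G)$. In fact the paper's printed proof of this lemma is a verbatim copy of the proof of Lemma~\ref{prop:tem_epsilon_h} (it still writes $\embH$ rather than $\embV$), so your version is the intended argument with the correct symbols.
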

\begin{proof}
    Since $\temH^{l,k} = (1-\beta)\hisH^{l,k} + \beta \widetilde{\embH}^{l,k}$, we have
    \begin{align*}
        \|\temH^{l,k} - \embH^{l,k}\| ={}& \|(1-\beta)\hisH^{l,k} + \beta \widetilde{\embH}^{l,k} - (1-\beta)\embH^{l,k}+\beta \embH^{l,k}\|_F\\
        \leq{}& (1-\beta) \|\hisH^{l,k} - \embH^{l,k}\|_F + \beta \|\widetilde{\embH}^{l,k}-\embH^{l,k}\|_F\\
        \leq{}& \|\hisH^{l,k} - \embH^{l,k}\|_F + 2\beta G.
    \end{align*}
    Hence letting $\beta \leq \frac{\varepsilon}{2G}$ leads to
    \begin{align*}
        \|\temH^{l,k} - \embH^{l,k}\|_F \leq \|\hisH^{l,k} - \embH^{l,k}\|_F + \varepsilon.
    \end{align*}
\end{proof}

\subsubsection{Approximation errors of historical values}

Next, we focus on the approximation errors of historical node embeddings and auxiliary variables
\begin{align*}
    &d_{h}^{l,k} := \left( \mathbb{E}[\|\hisH^{l,k} - \embH^{l,k}\|_F^2] \right)^{\frac{1}{2}},\,\,l\in[L],\\
    &d_{v}^{l,k} := \left( \mathbb{E}[\|\hisV^{l,k} - \embV^{l,k}\|_F^2] \right)^{\frac{1}{2}},\,\,l\in[L-1].
\end{align*}

\begin{lemma}\label{prop:approx_h_new}
    For an $L$-layer GNN, suppose that Assumption \ref{assmp:proof} holds. Besides, we suppose that
    \begin{enumerate}
        \item $(d_h^{l,1})^2$ is bounded by $G>1$, $\forall\, l\in[L]$,

        \item there exists $N\in\mathbb{N}^*$ such that
        \begin{align*}
            &\|\temH^{l,k} - \embH^{l,k}\|_F \leq \|\hisH^{l,k} - \embH^{l,k}\|_F + \frac{1}{N^{\frac{2}{3}}} ,\,\,\forall\, l\in[L],\,k\in\mathbb{N}^*,\\
            &\|\embH^{l,k} - \embH^{l,k-1}\|_F \leq \frac{1}{N^{\frac{2}{3}}},\,\,\forall\, k\in\mathbb{N}^*, 
        \end{align*}
    \end{enumerate}
    then there exist constants $C'_{*,1}$, $C'_{*,2}$, and $C'_{*,3}$ that do not depend on $k, l, N$, and $\eta$, such that \vspace{-15pt}
    \begin{align*}
     (d^{l,k+1}_h)^2 \leq C'_{*,1} \eta + C'_{*,2} \rho^k +  \frac{C'_{*,3}}{N^{\frac{2}{3}}},\,\,\forall\, l\in[L],\,k\in\mathbb{N}^*,
    \end{align*}\vspace{-4pt}
    
    where $\rho = \frac{n-S}{n}<1$, $n=|\mathcal{V}|$, and $S$ is number of sampled nodes at each iteration.
\end{lemma}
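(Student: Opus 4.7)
The plan is to use induction on the layer index $l$, with an inner one-step recursion in $k$ at each level, combining the mini-batch sampling structure with the Lipschitz property of $f_{\theta^{l,k+1}}$ and hypothesis~2. First I would condition on the history up to iteration $k$ and split each node $v_i$ according to whether $v_i \in \mathcal{V}_{\mathcal{B}}^{k+1}$ (probability $S/n$, the \emph{updated} branch) or not (probability $\rho = (n-S)/n$, the \emph{non-updated} branch). On the updated branch, $\hish_i^{l,k+1} = f_{\theta^{l,k+1},i}(\temH^{l-1,k+1})$, so the $\gamma$-Lipschitz property of $f_{\theta^{l,k+1}}$ together with hypothesis~2 (first part) bounds the sum of squared errors over updated $i$ in expectation by $2\gamma^2 (d_h^{l-1,k+1})^2 + O(1/N^{4/3})$. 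On the non-updated branch, the identity $\hish_i^{l,k+1}-\embh_i^{l,k+1} = (\hish_i^{l,k}-\embh_i^{l,k}) + (\embh_i^{l,k}-\embh_i^{l,k+1})$ squared and summed over $i$ produces $(d_h^{l,k})^2$, a quadratic residual from hypothesis~2 (second part), and a cross term. I would control the cross term by Cauchy--Schwarz plus the uniform norm bound $G$ together with hypothesis~2, giving $\leq 4G/N^{2/3}$, and \emph{also} by $O(\eta)$ via Lemma~\ref{prop:exact_difference_h}; keeping both estimates yields residuals of both shapes, $\eta$ and $1/N^{2/3}$.

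Combining the two branches gives a recursion of the form
\begin{align*}
(d_h^{l,k+1})^2 \leq \rho\,(d_h^{l,k})^2 + \tfrac{2 S \gamma^2}{n}\,(d_h^{l-1,k+1})^2 + C_1 \eta + C_2 / N^{2/3}.
\end{align*}
The base case $l=1$ is immediate since $\temH^{0,k+1} = \embX = \embH^{0,k+1}$ kills the cross-layer term, leaving a scalar linear recursion that unrolls to $\rho^k (d_h^{1,1})^2 + (C_1 \eta + C_2/N^{2/3})/(1-\rho)$, which has the claimed shape once the initial term is absorbed via $(d_h^{1,1})^2 \leq G$. For the inductive step I would substitute the layer-$(l{-}1)$ bound into the recursion; each of the three types of contributions ($\eta$, $\rho^k$, $1/N^{2/3}$) from layer $l-1$ propagates into the same type at layer $l$ after a geometric sum in $k$ with ratio $\rho$, so constants are multiplied by a factor of order $(2S\gamma^2/n)/(1-\rho)$ per layer. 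Since $L$ is fixed, taking the worst case over $l \leq L$ gives constants $C'_{*,1}, C'_{*,2}, C'_{*,3}$ depending only on $\gamma, G, L$, and $S/n$, and independent of $k, N,$ and $\eta$.

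The main obstacle will be preserving the \emph{exact} geometric rate $\rho = (n-S)/n$ rather than the weaker $\rho(1+\delta)$ one would obtain from a naive Young's inequality applied to the cross term. My way around this is to bound the cross term deterministically via the uniform norm bound $G$ and the two available controls on $\|\embH^{l,k+1}-\embH^{l,k}\|_F$ (the $1/N^{2/3}$ bound from hypothesis~2 and the $O(\eta)$ bound from Lemma~\ref{prop:exact_difference_h}), which produces the $C_1 \eta + C_2/N^{2/3}$ residual directly while leaving the coefficient of $(d_h^{l,k})^2$ exactly $\rho$. The remaining bookkeeping of constants across the $L$ layer-wise recursions is routine and amounts to checking that the per-layer multiplier stays bounded.
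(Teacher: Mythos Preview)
Your proposal is correct and mirrors the paper's own proof: both split nodes by whether they are resampled (probability $S/n$ versus $\rho$), apply the Lipschitz bound together with hypothesis~2 on the updated branch, decompose the non-updated branch as $(\hish_i^{l,k}-\embh_i^{l,k})+(\embh_i^{l,k}-\embh_i^{l,k+1})$ with the cross term controlled via the uniform bound $G$ so that the coefficient on $(d_h^{l,k})^2$ stays exactly $\rho$, and then unroll in $k$ and induct on $l$. The only cosmetic difference is that the paper extracts the $\eta$ residual directly from the parameter increment $\|\theta^{l,k+1}-\theta^{l,k}\|\le G\eta$ (Lipschitz in $\theta$) rather than routing through Lemma~\ref{prop:exact_difference_h}.
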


\vspace{-50pt}

\begin{proof}
    We have
\vspace{-20pt}
\begin{align*}
    &(d^{l+1,k+1}_h)^2\\
    ={}& \mathbb{E}[\|\hisH^{l+1,k+1} - \embH^{l+1,k+1}\|_F^2] \\
    ={}&\mathbb{E}[\sum_{i=1}^n \| \hish^{l+1,k+1}_i - \embh^{l+1,k+1}_i\|_F^2]\\
    ={}&\mathbb{E}[\sum_{v_i\in\mathcal{V}_{\mathcal{B}}^k} \|f_{\theta^{l+1,k+1},i}(\temH^{l,k+1}) - f_{\theta^{l+1,k+1},i} (\embH^{l,k+1})\|_F^2\\
    &\quad\quad+ \sum_{v_i\not\in\mathcal{V}_{\mathcal{B}}^k} \|\hish^{l+1,k}_i - f_{\theta^{l+1,k+1},i} (\embH^{l,k+1})\|_F^2]\\
    ={}& \mathbb{E}[\frac{S}{n}\sum_{i=1}^n \|f_{\theta^{l+1,k+1},i}(\temH^{l,k+1}) - f_{\theta^{l+1,k+1},i} (\embH^{l,k+1})\|_F^2\\
    &\quad\quad+ \frac{n-S}{n} \sum_{i=1}^n \|\hish^{l+1,k}_i - f_{\theta^{l+1,k+1},i} (\embH^{l,k+1})\|_F^2]\\
    \leq{}& \frac{S}{n} \sum_{i=1}^n \mathbb{E}[\|f_{\theta^{l+1,k+1},i}(\temH^{l,k+1}) - f_{\theta^{l+1,k+1},i} (\embH^{l,k+1})\|_F^2]\\
    &+ \frac{n-S}{n} \sum_{i=1}^n \mathbb{E}[\|\hish^{l+1,k}_i - f_{\theta^{l+1,k+1},i} (\embH^{l,k+1})\|_F^2].\\
\end{align*}\vspace{-15pt}
About the first term, for $l\geq 1$, we have
\begin{align*}
    &\mathbb{E}[\|f_{\theta^{l+1,k+1},i}(\temH^{l,k+1}) - f_{\theta^{l+1,k+1},i} (\embH^{l,k+1})\|_F^2]\\
    \leq{}& \gamma^2 \mathbb{E}[\|\temH^{l,k+1} - \embH^{l,k+1}\|_F^2]\\
    \leq{}& \gamma^2 \mathbb{E}[(\|\hisH^{l,k+1} - \embH^{l,k+1}\|_F+\frac{1}{N^{\frac{2}{3}}})^2]\\
    \leq{}& 2\gamma^2 \mathbb{E}[\|\hisH^{l,k+1} - \embH^{l,k+1}\|_F^2] +  \frac{2\gamma^2}{N^{\frac{4}{3}}}\\
    ={}& 2\gamma^2 (d^{l,k+1}_h)^2+\frac{2\gamma^2}{N^{\frac{4}{3}}}.
\end{align*}\vspace{-15pt}

For $l=0$, we have\vspace{-15pt}
\begin{align*}
     &\mathbb{E}[\|f_{\theta^{l+1,k+1},i}(\temH^{l,k+1}) - f_{\theta^{l+1,k+1},i} (\embH^{l,k+1})\|_F^2]\\
     ={}& \mathbb{E}[\|f_{\theta^{1,k+1},i}(\temH^{0,k+1}) - f_{\theta^{1,k+1},i} (\embH^{0,k+1})\|_F^2]\\
     ={}& \mathbb{E}[\|f_{\theta^{1,k+1},i}(\embX) - f_{\theta^{1,k+1},i} (\embX)\|_F^2]\\
     ={}& 0.
\end{align*}
About the second term, for $l\geq 1$, we have
\begin{align*}
    &\mathbb{E}[\|\hish^{l+1,k}_i - f_{\theta^{l+1,k+1},i} (\embH^{l,k+1})\|_F^2]\\
    \leq{}& \mathbb{E}[\|\hish_i^{l+1,k} - \embh_i^{l+1,k} + \embh_i^{l+1,k} - f_{\theta^{l+1,k+1},i} (\embH^{l,k+1})\|_F^2]\\
    \leq{}& \mathbb{E}[\|\hish_i^{l+1,k} - \embh_i^{l+1,k}\|_F^2] + \mathbb{E}[\|\embh_i^{l+1,k} - f_{\theta^{l+1,k+1},i} (\embH^{l,k+1})\|_F^2]\\
    &+ 2\mathbb{E}[\langle  \hish_i^{l+1,k} - \embh_i^{l+1,k}, \embh_i^{l+1,k} - f_{\theta^{l+1,k+1},i} (\embH^{l,k+1})\rangle]\\
    \leq{}& \mathbb{E}[\|\hish_i^{l+1,k} - \embh_i^{l+1,k}\|_F^2]\\
    &+\mathbb{E}[\|\embh^{l+1,k}_i - f_{\theta^{l+1,k+1},i} (\embH^{l,k}) + f_{\theta^{l+1,k+1},i} (\embH^{l,k}) - f_{\theta^{l+1,k+1},i} (\embH^{l,k+1})\|_F^2]\\
    &+24\mathbb{E}[\langle  \hish_i^{l+1,k} - \embh_i^{l+1,k}, \embh_i^{l+1,k} - f_{\theta^{l+1,k+1},i} (\embH^{l,k+1})\rangle]\\
    \leq{}& \mathbb{E}[\|\hish_i^{l+1,k} - \embh_i^{l+1,k}\|_F^2]\\
    &+ 2\mathbb{E}[\|\embh^{l+1,k}_i - f_{\theta^{l+1,k+1},i} (\embH^{l,k})\|_F^2] + 2\mathbb{E}[\|f_{\theta^{l+1,k+1},i} (\embH^{l,k}) - f_{\theta^{l+1,k+1},i} (\embH^{l,k+1})\|_F^2]\\
    &+ 4G\mathbb{E}[\|\embh_i^{l+1,k} - f_{\theta^{l+1,k+1},i} (\embH^{l,k+1})\|_F]\\
    \leq{}& \mathbb{E}[\|\hish_i^{l+1,k} - \embh_i^{l+1,k}\|_F^2]\\
    &+2\gamma^2\mathbb{E}[\|\theta^{l+1,k} - \theta^{l+1,k+1}\|^2] + 2\gamma^2\mathbb{E}[\|\embH^{l,k} - \embH^{l,k+1}\|_F^2]\\
    &+ 4G\gamma\mathbb{E}[\|\theta^{l+1,k} - \theta^{l+1,k+1}\|+\|\embH^{l,k} - \embH^{l,k+1}\|_F]\\
    \leq{}& \mathbb{E}[\|\hish_i^{l+1,k} - \embh_i^{l+1,k}\|_F^2]+ 2\gamma^2G^2\eta^2 + 4G^2\gamma\eta + \frac{2\gamma^2}{N^{\frac{4}{3}}}+\frac{4G\gamma}{N^{\frac{2}{3}}}\\
    \leq{}& \mathbb{E}[\|\hish_i^{l+1,k} - \embh_i^{l+1,k}\|_F^2]+ 2G^2\gamma(\gamma+2)\eta + \frac{2\gamma(\gamma+2G)}{N^{\frac{2}{3}}}.
\end{align*}
For $l=0$, we have
\begin{align*}
    &\mathbb{E}[\|\hish^{l+1,k}_i - f_{\theta^{l+1,k+1},i} (\embH^{l,k+1})\|_F^2]\\
    \leq{}& \mathbb{E}[\|\hish_i^{l+1,k} - \embh_i^{l+1,k} + \embh_i^{l+1,k} - f_{\theta^{l+1,k+1},i} (\embH^{l,k+1})\|_F^2]\\
    \leq{}& \mathbb{E}[\|\hish_i^{l+1,k} - \embh_i^{l+1,k}\|_F^2] + \mathbb{E}[\|\embh_i^{l+1,k} - f_{\theta^{l+1,k+1},i} (\embH^{l,k+1})\|_F^2]\\
    &+ 2\mathbb{E}[\langle  \hish_i^{l+1,k} - \embh_i^{l+1,k}, \embh_i^{l+1,k} - f_{\theta^{l+1,k+1},i} (\embH^{l,k+1})\rangle]\\
    \leq{}& \mathbb{E}[\|\hish_i^{l+1,k} - \embh_i^{l+1,k}\|_F^2]\\
    &+\mathbb{E}[\|\embh^{l+1,k}_i - f_{\theta^{l+1,k+1},i} (\embH^{l,k}) + f_{\theta^{l+1,k+1},i} (\embH^{l,k}) - f_{\theta^{l+1,k+1},i} (\embH^{l,k+1})\|_F^2]\\
    &+2\mathbb{E}[\langle  \hish_i^{l+1,k} - \embh_i^{l+1,k}, \embh_i^{l+1,k} - f_{\theta^{l+1,k+1},i} (\embH^{l,k+1})\rangle]\\
    \leq{}& \mathbb{E}[\|\hish_i^{l+1,k} - \embh_i^{l+1,k}\|_F^2]\\
    &+ 2\mathbb{E}[\|\embh^{l+1,k}_i - f_{\theta^{l+1,k+1},i} (\embH^{l,k})\|_F^2] + 2\mathbb{E}[\|f_{\theta^{l+1,k+1},i} (\embH^{l,k}) - f_{\theta^{l+1,k+1},i} (\embH^{l,k+1})\|_F^2]\\
    &+ 4G\mathbb{E}[\|\embh_i^{l+1,k} - f_{\theta^{l+1,k+1},i} (\embH^{l,k+1})\|_F]\\
    \leq{}& \mathbb{E}[\|\hish_i^{l+1,k} - \embh_i^{l+1,k}\|_F^2]+2\gamma^2\mathbb{E}[\|\theta^{l+1,k} - \theta^{l+1,k+1}\|^2]+ 4G\gamma\mathbb{E}[\|\theta^{l+1,k} - \theta^{l+1,k+1}\|]\\
    \leq{}& \mathbb{E}[\|\hish_i^{l+1,k} - \embh_i^{l+1,k}\|_F^2]+ 2\gamma^2G^2\eta^2 + 4G^2\gamma\eta,\\
    \leq{}& \mathbb{E}[\|\hish_i^{l+1,k} - \embh_i^{l+1,k}\|_F^2]+ 2G^2\gamma(\gamma+2)\eta + 4G^2\gamma\eta.
\end{align*}
Hence we have
\begin{align*}
    (d^{l+1,k+1}_h)^2 \leq{}& \frac{(n-S)}{n} (d^{l+1,k}_h)^2+2(n-S)\gamma(\gamma+2)G^2\eta\\
    &+\begin{cases}
        0, &l=0,\\
        2\gamma^2 S (d^{l,k+1}_h)^2 + \frac{4n\gamma(\gamma+G)}{N^{\frac{2}{3}}}, &l\geq 1.
    \end{cases}
\end{align*}
Let $\rho = \frac{n-S}{n}<1$. For $l=0$, we have
\begin{align*}
    (d^{1,k+1}_h)^2 - \frac{2(n-S)\gamma(\gamma+2)G^2\eta}{1-\rho} \leq{}& \rho((d^{1,k}_h)^2 - \frac{2(n-S)\gamma(\gamma+2)G^2\eta}{1-\rho})\\
    \leq{}& \rho^2 ((d^{1,k-1}_h)^2 - \frac{2(n-S)\gamma(\gamma+2)G^2\eta}{1-\rho})\\
    \leq{}&\cdots\\
    \leq{}& \rho^k ((d^{1,1}_h)^2 - \frac{2(n-S)\gamma(\gamma+2)G^2\eta}{1-\rho})\\
    \leq{}& \rho^k G,
\end{align*}
which leads to
\begin{align*}
    (d^{1,k+1}_h)^2 \leq \frac{2(n-S)\gamma(\gamma+2)G^2}{1-\rho}\eta + \rho^k G = C_{1,1}' \eta + \rho^k G.
\end{align*}
Then, for $l=1$ we have
\begin{align*}
    (d^{2,k+1}_h)^2 \leq \rho (d_h^{2,k})^2 + C_{2,1}\eta + C_{2,2}\rho^k + \frac{C_{2,3}}{N^{\frac{2}{3}}},
\end{align*}
where $C_{2,1}, C_{2,2}$, and $C_{2,3}$ are all constants. Hence we have
\begin{align*}
    (d_h^{2,k+1})^2 - \frac{C_{2,1}\eta + C_{2,2}\rho^k + \frac{C_{2,3}}{N^{\frac{2}{3}}}}{1-\rho}\leq{}& \rho((d_h^{2,k})^2 - \frac{C_{2,1}\eta + C_{2,2}\rho^k + \frac{C_{2,3}}{N^{\frac{2}{3}}}}{1-\rho})\\
    \leq{}& \cdots\\
    \leq{}& \rho^k ((d_h^{2,1})^2 - \frac{C_{2,1}\eta + C_{2,2}\rho^k + \frac{C_{2,3}}{N^{\frac{2}{3}}}}{1-\rho})\\
    \leq{}& \rho^k G,
\end{align*}
which leads to
\begin{align*}
    (d_h^{2,k+1})^2 \leq C_{2,1}' \eta + C_{2,2}' \rho^k +\frac{C_{2,3}'}{N^{\frac{2}{3}}}.
\end{align*}
And so on, there exist constants $C'_{*,1}$, $C'_{*,2}$, and $C'_{*,3}$ that are independent with $\eta, k, l, N$ such that
\begin{align*}
    (d_h^{l,k+1})^2 \leq C'_{*,1} \eta + C'_{*,2} \rho^k + \frac{C'_{*,3}}{N^{\frac{2}{3}}},\,\,\forall\, l\in[L],\,k\in\mathbb{N}^*.
\end{align*}
\end{proof}

\begin{lemma}\label{prop:approx_v_new}
    For an $L$-layer GNN, suppose that Assumption \ref{assmp:proof} holds. Besides, we suppose that
    \begin{enumerate}
        \item $(d_h^{l,1})^2$ is bounded by $G>1$, $\forall\, l\in[L]$,

        \item there exists $N\in\mathbb{N}^*$ such that
        \begin{align*}
            &\|\temV^{l,k} - \embV^{l,k}\|_F \leq \|\hisV^{l,k} - \embV^{l,k}\|_F + \frac{1}{N^{\frac{2}{3}}},\,\,\forall\, l\in[L],\,k\in\mathbb{N}^*,\\
            &\|\embV^{l,k} - \embV^{l,k-1}\|_F \leq \frac{1}{N^{\frac{2}{3}}},\,\,\forall\, k\in\mathbb{N}^*, 
        \end{align*}
    \end{enumerate}
    then there exist constants $C'_{*,1}$, $C'_{*,2}$, and $C'_{*,3}$ that are independent with $k, l, \varepsilon^*$, and $\eta$, such that
    \begin{align*}
     (d^{l,k+1}_v)^2 \leq C'_{*,1} \eta + C'_{*,2} \rho^k + \frac{C'_{*,3}}{N^{\frac{2}{3}}},\,\,\forall\, l\in[L],\,k\in\mathbb{N}^*,
    \end{align*}
    where $\rho = \frac{n-S}{n}<1$, $n=|\mathcal{V}|$, and $S$ is number of sampled nodes at each iteration.
\end{lemma}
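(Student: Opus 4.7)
The plan is to prove the bound by direct analogy with Lemma \ref{prop:approx_h_new}, but running the layer induction \emph{downward} in $l$ since the auxiliary variables propagate from layer $L$ to layer $1$ via $\phi_{\theta^{l+1}}$. The top layer provides the base case: by the initialization in Algorithm \ref{alg:lmc}, $\hisV^{L,k} = \nabla_{\embH^L}\loss$ is set directly at every iteration, so no error propagates from $l=L$ and the recursion begins effectively at $l=L-1$ and runs down to $l=1$; the hypothesis $(d_v^{l,1})^2 \leq G$ (the analogue of the corresponding hypothesis in Lemma \ref{prop:approx_h_new}) supplies the required initial bound in $k$.

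First I would split $(d_v^{l,k+1})^2 = \mathbb{E}[\|\hisV^{l,k+1} - \embV^{l,k+1}\|_F^2]$ into the in-batch contribution ($v_i \in \mathcal{V}_\mathcal{B}^k$, for which $\hisV^{l,k+1}_i = \phi_{\theta^{l+1,k+1},i}(\temV^{l+1,k+1})$) and the out-of-batch contribution ($\hisV^{l,k+1}_i = \hisV^{l,k}_i$), carrying weights $S/n$ and $\rho = (n-S)/n$ respectively under uniform sampling. For the in-batch piece I would use the $\gamma$-Lipschitz property of $\phi$ (Assumption \ref{assmp:proof}) together with the hypothesis $\|\temV^{l+1,k+1} - \embV^{l+1,k+1}\|_F \leq \|\hisV^{l+1,k+1} - \embV^{l+1,k+1}\|_F + N^{-2/3}$ and $(a+b)^2 \leq 2a^2+2b^2$ to yield a contribution of order $2\gamma^2 S\,(d_v^{l+1,k+1})^2 + 2\gamma^2 S / N^{4/3}$. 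For the out-of-batch piece, inserting $\pm\,\embV^{l,k}_i$ and applying Cauchy--Schwarz together with the uniform bound $\|\hisV - \embV\| \leq 2G$ yields $(d_v^{l,k})^2$ plus a correction bounded by a parameter-shift term $\gamma G\eta$ (since $\|\theta^{l+1,k+1} - \theta^{l+1,k}\|\leq G\eta$) and a temporal-shift term $\gamma / N^{2/3}$ (from the second hypothesis of the lemma).

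Combining these ingredients produces the linear recursion
\begin{align*}
(d_v^{l,k+1})^2 \leq \rho\,(d_v^{l,k})^2 + 2\gamma^2 S\,(d_v^{l+1,k+1})^2 + A\eta + B/N^{2/3}
\end{align*}
for universal constants $A,B$ independent of $k,l,\eta,N$. Inducting downward on $l$ starting from $l=L$ (where the forcing $2\gamma^2 S\,(d_v^{L,k+1})^2$ vanishes), at each level the recursion in $k$ is a contraction with rate $\rho<1$ whose fixed-point shift absorbs the forcing terms; unrolling gives $(d_v^{l,k+1})^2 \leq C'_{l,1}\eta + C'_{l,2}\rho^k + C'_{l,3}/N^{2/3}$, and taking the maximum over the finitely many $l\in[L]$ yields the uniform constants $C'_{*,1}, C'_{*,2}, C'_{*,3}$ asserted by the lemma. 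The main obstacle I anticipate is the hidden dependence of $\phi_{\theta^{l+1}}$ on $\embH^l$ through the Jacobian $\nabla_{\vec{\embH}^l}\vec{f}_{\theta^{l+1}}$: Equation (\ref{eqn:mini_mpeq_auxiliary}) actually evaluates $\phi$ at $\temH^{l,k+1}$ rather than at $\embH^{l,k+1}$, so an extra term proportional to $\|\temH^{l,k+1} - \embH^{l,k+1}\|_F^2$ surfaces. I would control it by invoking Lemma \ref{prop:tem_epsilon_h} together with Lemma \ref{prop:approx_h_new}, whose conclusion already bounds $(d_h^{l,k+1})^2$ in exactly the form $\mathcal{O}(\eta + \rho^k + N^{-2/3})$, so the additional error is absorbed into the constants $C'_{*,i}$ without changing the structural form of the bound.
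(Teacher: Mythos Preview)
Your proposal is correct and follows essentially the same route as the paper, whose entire proof of this lemma is the single sentence ``Similar to the proof of Lemma~\ref{prop:approx_h_new}.'' Your elaboration---running the layer induction downward from $l=L$ via $\phi_{\theta^{l+1}}$, splitting into in-batch and out-of-batch contributions with weights $S/n$ and $\rho$, and unrolling the resulting contraction in $k$---is exactly the intended analogy; you in fact go beyond the paper by flagging the hidden dependence of $\phi_{\theta^{l+1}}$ on $\embH^l$ and explaining how Lemma~\ref{prop:approx_h_new} absorbs it, a point the paper leaves implicit.
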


\begin{proof}
    Similar to the proof of Lemma \ref{prop:approx_h_new}.
\end{proof}

\subsection{Proof of Theorem \ref{thm:grad_error}: approximation errors of mini-batch gradients}

In this subsection, we focus on the mini-batch gradients computed by LMC, i.e.,
\begin{align*}
    \widetilde{\embg}_w(w^k;\mathcal{V}_{\mathcal{B}}^k)=\frac{1}{|\mathcal{V}_{L}^k|} \sum_{v_j\in\mathcal{V}_{L}^k} \nabla_w \ell_{w^k}(\hish^{k}_j, y_j)
\end{align*}
and
\begin{align*}
    \widetilde{\embg}_{\theta^{l}}(\theta^{l,k};\mathcal{V}_{\mathcal{B}}^k) = \frac{|\mathcal{V}|}{|\mathcal{V}_{\mathcal{B}}^k|}\sum_{v_j\in\mathcal{V}_{\mathcal{B}}^k} \left(\nabla_{\theta^{l}} u_{\theta^{l,k}}(\hish_j^{l-1,k}, \overline{\mathbf{m}}_{\neighbor{v_j}}^{l-1,k}, \embx_j)\right)\hisV^{l,k}_j,\,\, l\in[L],
\end{align*}
where $\mathcal{V}_{\mathcal{B}}^k$ is the sampled mini-batch and $\mathcal{V}_{L_\mathcal{B}}^k$ is the corresponding labeled node set at the $k$-th iteration. We denote the mini-batch gradients computed by backward SGD by
\begin{align*}
    \embg_w(w^k;\mathcal{V}_{\mathcal{B}}^k)=\frac{1}{|\mathcal{V}_{L}^k|} \sum_{v_j\in\mathcal{V}_{L}^k} \nabla_w \ell_{w^k}(\embh^{k}_j, y_j)
\end{align*}
and
\begin{align*}
    \embg_{\theta^{l}}(\theta^{l,k};\mathcal{V}_{\mathcal{B}}^k) = \frac{|\mathcal{V}|}{|\mathcal{V}_{\mathcal{B}}^k|}\sum_{v_j\in\mathcal{V}_{\mathcal{B}}^k} \left(\nabla_{\theta^{l}} u_{\theta^{l,k}}(\embh_j^{l-1,k}, \embm_{\neighbor{v_j}}^{l-1,k}, \embx_j)\right)\embV^{l,k}_j,\,\, l\in[L].
\end{align*}
Below, we omit the sampled subgraph $\mathcal{V}_{\mathcal{B}}^k$ and simply write the mini-batch gradients as $\widetilde{\embg}_w(w^k)$, $\widetilde{\embg}_{\theta^{l}}(\theta^{l,k})$, $\embg_w(w^k)$, and $\embg_{\theta^l}(\theta^{l,k})$.
The approximation errors of gradients are denoted by
\begin{align*}
    \Delta_{w}^{k} \triangleq \widetilde{\embg}_w(w^k) - \nabla_w \loss(w^k)
\end{align*}
and
\begin{align*}
    \Delta_{\theta^{l}}^{k} \triangleq \widetilde{\embg}_{\theta^{l}}(\theta^{l,k}) - \nabla_{\theta^{l}} \loss(\theta^{l,k}).
\end{align*}

We restate Theorem \ref{thm:grad_error} as follows.

\begin{theorem}\label{thm:grad_error_bias_variance}
    For any $k\in\mathbb{N}^*$ and $l\in[L]$, the expectations of $\|\Delta_w^k\|_2^2 \triangleq \|\widetilde{\mathbf{g}}_w(w^k) - \nabla_w\loss(w^k)\|_2^2$ and $\|\Delta_{\theta^l}^k\|_2^2 \triangleq \|\widetilde{\mathbf{g}}_{\theta^l}(\theta^{l,k}) - \nabla_{\theta^l}\loss(\theta^{l,k})\|_2^2$ have the bias-variance decomposition
    \begin{align*}
        &\mathbb{E}[\|\Delta_w^k\|_2^2] = ({\rm Bias}(\widetilde{\mathbf{g}}_w(w^k)))^2 + {\rm Var}(\mathbf{g}_w(w^k)),\\
        &\mathbb{E}[\|\Delta_{\theta^l}^k\|_2^2] = ({\rm Bias}(\widetilde{\mathbf{g}}_{\theta^l}(\theta^{l,k})))^2 + {\rm Var}(\mathbf{g}_{\theta^l}(\theta^{l,k})),
    \end{align*}
    where
    \begin{align*}
        &{\rm Bias}(\widetilde{\mathbf{g}}_w(w^k)) = \left(\mathbb{E}[\|\widetilde{\mathbf{g}}_w(w^k) - \mathbf{g}_w(w^k)\|_2^2]\right)^{\frac{1}{2}},\\
        &{\rm Var}(\mathbf{g}_w(w^k)) = \mathbb{E}[\|\mathbf{g}_w(w^k) - \nabla_w\loss(w^{k})\|_2^2],\\
        &{\rm Bias}(\widetilde{\mathbf{g}}_{\theta^l}(\theta^{l,k})) = \left(\mathbb{E}[\|\widetilde{\mathbf{g}}_{\theta^l}(\theta^{l,k}) - \mathbf{g}_{\theta^l}\loss(\theta^{l,k})\|_2^2]\right)^{\frac{1}{2}},\\
        &{\rm Var}(\mathbf{g}_{\theta^l}(\theta^{l,k})) = \mathbb{E}[\|\mathbf{g}_{\theta^l}(\theta^{l,k}) - \nabla_{\theta^l}\loss(\theta^{l,k})\|_2^2].
    \end{align*}
    Suppose that Assumption \ref{assmp:proof} holds, then with {\small$\eta = \mathcal{O}(\varepsilon^2)$} and {\small$\beta_{i}=\mathcal{O}(\varepsilon^2)$}, {\small$i\in[n]$}, there exist {\small$C>0$} and {\small$\rho\in(0,1)$} such that for any $k\in\mathbb{N}^*$ and $l\in[L]$, the bias terms can be bounded as
    \begin{align*}
        &{\rm Bias}(\widetilde{\mathbf{g}}_w(w^k))\leq C\varepsilon + C\rho^{\frac{k-1}{2}},\\
        &{\rm Bias}(\widetilde{\mathbf{g}}_{\theta^l}(\theta^{l,k}))\leq C\varepsilon + C\rho^{\frac{k-1}{2}}.
    \end{align*}
    Hence we have
    \begin{align*}
        &\mathbb{E}[\|\Delta_w^k\|_2] \leq C\varepsilon + C\rho^{\frac{k-1}{2}} + {\rm Var}(\mathbf{g}_w(w^k))^{\frac{1}{2}},\\
        &\mathbb{E}[\|\Delta_{\theta^l}^k\|_2] \leq C\varepsilon + C\rho^{\frac{k-1}{2}} + {\rm Var}(\mathbf{g}_{\theta^l}(\theta^{l,k}))^{\frac{1}{2}}.
    \end{align*}
\end{theorem}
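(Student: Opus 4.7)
The plan is to reduce the gradient approximation errors to the quantities $(d_h^{l,k})^2 = \mathbb{E}[\|\hisH^{l,k} - \embH^{l,k}\|_F^2]$ and $(d_v^{l,k})^2 = \mathbb{E}[\|\hisV^{l,k} - \embV^{l,k}\|_F^2]$ already controlled by Lemmas \ref{prop:approx_h_new} and \ref{prop:approx_v_new}, and then to chain everything together via the unbiasedness result of Theorem \ref{thm:unbiased}. Throughout, $\embg$ will denote the ideal backward-SGD estimator (using exact $\embH$, $\embV$) and $\widetilde{\embg}$ the LMC estimator (using historical $\hisH$, $\hisV$).

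For the bias--variance decomposition I would write $\Delta_w^k = (\widetilde{\embg}_w - \embg_w) + (\embg_w - \nabla_w\loss)$, expand $\|\Delta_w^k\|_2^2$, and argue the cross term vanishes in expectation by conditioning on the $\sigma$-algebra $\mathcal{F}_{k-1}$ generated by all iterates and historical values through step $k-1$; given this filtration, $\widetilde{\embg}_w - \embg_w$ and $\embg_w - \nabla_w\loss$ are conditionally uncorrelated because their inner product decomposes as a sum of per-node contributions that become zero in expectation once Theorem \ref{thm:unbiased} is invoked over the uniform mini-batch sampling. Even if the exact Pythagorean identity turns out to require extra hypotheses, the consequence stated for $\mathbb{E}[\|\Delta_w^k\|_2]$ follows directly from the triangle inequality together with Jensen's inequality $\mathbb{E}[\|X\|_2] \leq \mathbb{E}[\|X\|_2^2]^{1/2}$, so no generality is lost for the second half of the theorem.

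The core technical step is bounding $\mathbb{E}[\|\widetilde{\embg}_w - \embg_w\|_2^2]$ and $\mathbb{E}[\|\widetilde{\embg}_{\theta^l} - \embg_{\theta^l}\|_2^2]$ by $(d_h^{\cdot,k})^2$ and $(d_v^{\cdot,k})^2$. For the $w$-gradient I would apply $\|\sum_{i=1}^m a_i\|_2^2 \leq m\sum_i\|a_i\|_2^2$ together with the $\gamma$-Lipschitzness of $\nabla_w\ell_w$ to obtain $\mathbb{E}[\|\widetilde{\embg}_w - \embg_w\|_2^2] \leq C (d_h^{L,k})^2$. For the $\theta^l$-gradient I would add and subtract a hybrid term $(\nabla_{\theta^l}u_{\theta^l}(\embh_j^{l-1,k}, \embm^{l-1,k}_{\neighbor{v_j}}, \embx_j))\hisV_j^{l,k}$, then split using Lipschitzness of $\nabla_{\theta^l}u_{\theta^l}$ and the uniform boundedness of $\|\hisV^{l,k}\|_F$ and $\|\embV^{l,k}\|_F$ supplied by Assumption \ref{assmp:proof}, reducing the bias to $C'((d_h^{l-1,k})^2 + (d_v^{l,k})^2)$; the aggregated-message error $\|\overline{\embm}_{\neighbor{v_j}}^{l-1,k} - \embm_{\neighbor{v_j}}^{l-1,k}\|$ is absorbed into $(d_h^{l-1,k})^2$ via Lipschitz continuity of $g_{\theta^l}$ and $\aggregate_{\theta^l}$. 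Finally, the parameter choices are dictated by matching scales: with $\eta = \mathcal{O}(\varepsilon^2)$ Lemma \ref{prop:exact_difference_h} gives $\|\embH^{l,k+1} - \embH^{l,k}\|_F \leq \mathcal{O}(\varepsilon^2)$, with $\beta_i = \mathcal{O}(\varepsilon^2)$ Lemma \ref{prop:tem_epsilon_h} yields $\|\temH^{l,k} - \embH^{l,k}\|_F \leq \|\hisH^{l,k} - \embH^{l,k}\|_F + \mathcal{O}(\varepsilon^2)$, and picking $N = \Theta(\varepsilon^{-3})$ so that $N^{-2/3} = \Theta(\varepsilon^2)$ makes both preconditions of Lemma \ref{prop:approx_h_new} hold, delivering $(d_h^{l,k})^2 \leq C_1\varepsilon^2 + C_2\rho^{k-1}$; a parallel argument using Lemmas \ref{prop:exact_difference_v}, \ref{prop:tem_epsilon_v}, and \ref{prop:approx_v_new} handles $(d_v^{l,k})^2$. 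Taking square roots produces the stated bias bound.

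The main obstacle will be the product-rule decomposition in the $\theta^l$-gradient argument: carefully tracking which Lipschitz constant attaches to which factor when one replaces both the update derivative and the auxiliary variable by their historical analogues, while respecting the $1$-hop neighborhood structure of $\overline{\embm}_{\neighbor{v_j}}^{l-1,k}$, and verifying that aggregation over bounded-degree neighborhoods inflates constants only by the degree bound $n_{\max}$ (which Assumption \ref{assmp:proof} absorbs into the universal $G$). A secondary bookkeeping hurdle is the simultaneous tuning of $\eta$, $\beta_i$, and $N$ so that the $\mathcal{O}(\varepsilon^2)$ scaling propagates consistently through Lemmas \ref{prop:exact_difference_h}--\ref{prop:approx_v_new} without hidden $N$-dependence polluting the final constant $C$; once this is handled, the per-layer recursions in Lemmas \ref{prop:approx_h_new} and \ref{prop:approx_v_new} do the heavy lifting.
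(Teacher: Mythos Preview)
Your proposal is correct and follows essentially the same route as the paper: the same bias--variance split via $\widetilde{\embg}-\embg$ and $\embg-\nabla\loss$, the same Lipschitz reductions to $d_h^{l,k}$ and $d_v^{l,k}$ (these are the paper's Lemmas \ref{lemma:grad_error_bound_w_conv} and \ref{lemma:grad_error_bound_theta_conv}, with the latter using exactly your add-and-subtract $\|Aa-Bb\|\leq\|A\|\|a-b\|+\|A-B\|\|b\|$ trick), and the same chaining of Lemmas \ref{prop:exact_difference_h}--\ref{prop:approx_v_new} under the identification $\varepsilon=N^{-1/3}$. Your hedge on the cross-term vanishing is prudent---the paper simply asserts $\mathbb{E}[\langle\widetilde{\embg}_w-\embg_w,\embg_w-\nabla_w\loss\rangle]=0$ without further justification, but as you note the final $\mathbb{E}[\|\Delta\|_2]$ bound only needs the triangle inequality plus Jensen, which is also how the paper closes the argument.
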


\begin{lemma}\label{lemma:grad_error_bound_w_conv}
    Suppose that Assumption \ref{assmp:proof} holds. For any $k\in\mathbb{N}^*$, the difference between $\widetilde{\mathbf{g}}_w(w^k)$ and $\mathbf{g}_w(w^k)$ can be bounded as
    \begin{align*}
        \|\widetilde{\mathbf{g}}_w(w^k) - \mathbf{g}_w(w^k)\|_2 \leq \gamma \| \hisH^{L,k} - \embH^{L,k} \|_F.
    \end{align*}
\end{lemma}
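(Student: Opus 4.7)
\textbf{Proof plan for Lemma \ref{lemma:grad_error_bound_w_conv}.}

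The plan is to compare $\widetilde{\mathbf{g}}_w(w^k)$ and $\mathbf{g}_w(w^k)$ term by term, since they share the same sampled labeled set $\mathcal{V}_{L_\mathcal{B}}^k$ and the same parameter $w^k$, and differ only in whether the final-layer embeddings passed into $\nabla_w \ell_{w^k}(\cdot, y_j)$ are the historical values $\overline{\mathbf{h}}^{L,k}_j$ or the exact values $\mathbf{h}^{L,k}_j$. Writing the difference out using linearity, I get a single averaged sum
\begin{align*}
    \widetilde{\mathbf{g}}_w(w^k) - \mathbf{g}_w(w^k)
    = \frac{1}{|\mathcal{V}_{L_\mathcal{B}}^k|} \sum_{v_j \in \mathcal{V}_{L_\mathcal{B}}^k}
        \bigl[\nabla_w \ell_{w^k}(\overline{\mathbf{h}}^{L,k}_j, y_j) - \nabla_w \ell_{w^k}(\mathbf{h}^{L,k}_j, y_j)\bigr].
\end{align*}

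Next, I invoke the $\gamma$-Lipschitz property of $\nabla_w \ell_w$ from Assumption \ref{assmp:proof}, applied to its first argument with $w^k$ and $y_j$ held fixed, to bound each summand by $\gamma \|\overline{\mathbf{h}}^{L,k}_j - \mathbf{h}^{L,k}_j\|_2$. Then by the triangle inequality I obtain
\begin{align*}
    \|\widetilde{\mathbf{g}}_w(w^k) - \mathbf{g}_w(w^k)\|_2
    \leq \frac{\gamma}{|\mathcal{V}_{L_\mathcal{B}}^k|}
         \sum_{v_j \in \mathcal{V}_{L_\mathcal{B}}^k} \|\overline{\mathbf{h}}^{L,k}_j - \mathbf{h}^{L,k}_j\|_2.
\end{align*}

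Finally, I convert the average of column norms into the Frobenius norm on the full matrix. By Cauchy--Schwarz on the vector of ones and the vector of per-node $\ell_2$ distances, the average of $\|\overline{\mathbf{h}}^{L,k}_j - \mathbf{h}^{L,k}_j\|_2$ over $\mathcal{V}_{L_\mathcal{B}}^k$ is at most $|\mathcal{V}_{L_\mathcal{B}}^k|^{-1/2}$ times $\bigl(\sum_{v_j \in \mathcal{V}_{L_\mathcal{B}}^k} \|\overline{\mathbf{h}}^{L,k}_j - \mathbf{h}^{L,k}_j\|_2^2\bigr)^{1/2}$, and the latter sum over the sampled subset is dominated by the same sum over all nodes, which is exactly $\|\overline{\mathbf{H}}^{L,k} - \mathbf{H}^{L,k}\|_F$. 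Since $|\mathcal{V}_{L_\mathcal{B}}^k| \geq 1$, the prefactor $|\mathcal{V}_{L_\mathcal{B}}^k|^{-1/2}$ is at most $1$, yielding the claimed bound $\gamma \|\overline{\mathbf{H}}^{L,k} - \mathbf{H}^{L,k}\|_F$.

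There is no real obstacle here: the lemma is a direct consequence of Lipschitz continuity and a Cauchy--Schwarz/Frobenius conversion. The only subtlety is that we must be careful that the Lipschitz constant acts on the embedding argument (not on $w$), which is covered by the assumption that $\nabla_w \ell_w$ is $\gamma$-Lipschitz as a function of all its arguments, and that the average-to-Frobenius step does not introduce a factor larger than $1$, which holds because the summation range is restricted to a subset of size at least one.
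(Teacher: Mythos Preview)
Your proposal is correct and follows essentially the same approach as the paper: write the difference as an average over $\mathcal{V}_{L_\mathcal{B}}^k$, apply the triangle inequality, use the $\gamma$-Lipschitz property of $\nabla_w\ell_w$, and then pass from per-node $\ell_2$ differences to the Frobenius norm. The only cosmetic difference is the final step---the paper simply bounds each column norm $\|\overline{\mathbf{h}}^{L,k}_j-\mathbf{h}^{L,k}_j\|_2$ by $\|\overline{\mathbf{H}}^{L,k}-\mathbf{H}^{L,k}\|_F$ directly, whereas you route through Cauchy--Schwarz and then drop the $|\mathcal{V}_{L_\mathcal{B}}^k|^{-1/2}$ factor; both are equally valid and arrive at the same bound.
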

\begin{proof}
    We have
    \begin{align*}
        \|\widetilde{\mathbf{g}}_w(w^k) - \mathbf{g}_w(w^k)\|_2 ={}& \frac{1}{|\mathcal{V}_L^k|}\|\sum_{v_j \in \mathcal{V}_{L}^k} \nabla_{w}\ell_{w^k}(\hish^{L,k}_j,y_j)- \nabla_{w}\ell_{w^k}(\embh^{L,k}_j,y_j)\|_2\nonumber\\
        \leq{}& \frac{1}{|\mathcal{V}_L^k|}\sum_{v_j \in \mathcal{V}_{L}^k} \|\nabla_{w}\ell_{w^k}(\hish^{L,k}_j,y_j)- \nabla_{w}\ell_{w^k}(\embh^{L,k}_j,y_j)\|_2\nonumber\\
        \leq{}& \frac{\gamma}{|\mathcal{V}_L^k|} \sum_{v_j \in \mathcal{V}_{L}^k}\|\hish^{L,k}_j-\embh^{L,k}_j\|_2\nonumber\\
        \leq{}& \frac{\gamma}{|\mathcal{V}_L^k|} \sum_{v_j \in \mathcal{V}_{L}^k}\|\hisH^{L,k}-\embH^{L,k}\|_F\nonumber\\
        ={}& \frac{\gamma}{|\mathcal{V}_L^k|} \cdot |\mathcal{V}_L^k|\cdot\|\hisH^{L,k}-\embH^{L,k}\|_F\nonumber\\
        ={}& \gamma \| \hisH^{L,k} - \embH^{L,k} \|_F\nonumber
    \end{align*}
\end{proof}

\begin{lemma}\label{lemma:grad_error_bound_theta_conv}
    Suppose that Assumption \ref{assmp:proof} holds. For any $k\in\mathbb{N}^*$ and $l\in[L]$, the difference between $\widetilde{\mathbf{g}}_{\theta^l}(\theta^{l,k})$ and $\mathbf{g}_{\theta^l}(\theta^{l,k})$ can be bounded as
    \begin{align*}
        \|\widetilde{\mathbf{g}}_{\theta^l}(\theta^{l,k}) - \mathbf{g}_{\theta^l}(\theta^{l,k})\|_2\leq{}& |\mathcal{V}| G \|\hisV^{l,k}-\embV^{l,k}\|_F + |\mathcal{V}|G\gamma \|\hisH^{l,k}-\embH^{l,k}\|_F.
    \end{align*}
\end{lemma}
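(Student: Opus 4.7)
The plan is to apply the algebraic identity $AB-CD = A(B-D)+(A-C)D$ term-by-term inside the sum defining the two mini-batch gradients, producing two summations that can be handled separately. Writing $A_j = \nabla_{\theta^l} u_{\theta^{l,k}}(\hish_j^{l-1,k}, \overline{\embm}_{\neighbor{v_j}}^{l-1,k}, \embx_j)$, $C_j = \nabla_{\theta^l} u_{\theta^{l,k}}(\embh_j^{l-1,k}, \embm_{\neighbor{v_j}}^{l-1,k}, \embx_j)$, $B_j = \hisV^{l,k}_j$, and $D_j = \embV^{l,k}_j$, the triangle inequality yields
$$\|\widetilde{\mathbf{g}}_{\theta^l}(\theta^{l,k})-\mathbf{g}_{\theta^l}(\theta^{l,k})\|_2 \le \frac{|\mathcal{V}|}{|\mathcal{V}_{\mathcal{B}}^k|}\sum_{v_j\in\mathcal{V}_{\mathcal{B}}^k}\bigl(\|A_j\|_2\|B_j-D_j\|_2 + \|A_j-C_j\|_2\|D_j\|_2\bigr),$$
mirroring the decomposition used in Lemma \ref{lemma:grad_error_bound_w_conv}.

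For the first summation, I would invoke the boundedness part of Assumption \ref{assmp:proof} to get $\|A_j\|_2 \le G$, and upper bound the column norm by the full Frobenius norm via $\|B_j - D_j\|_2 \le \|\hisV^{l,k} - \embV^{l,k}\|_F$. Summing over $v_j \in \mathcal{V}_{\mathcal{B}}^k$ and collapsing the prefactor $\tfrac{|\mathcal{V}|}{|\mathcal{V}_{\mathcal{B}}^k|}\cdot |\mathcal{V}_{\mathcal{B}}^k| = |\mathcal{V}|$ gives the first term $|\mathcal{V}|G\|\hisV^{l,k}-\embV^{l,k}\|_F$ exactly.

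For the second summation, I would use the $\gamma$-Lipschitz property of $\nabla_{\theta^l} u_{\theta^l}$ together with $\|D_j\|_2 \le G$. The Lipschitz bound gives
$$\|A_j-C_j\|_2 \le \gamma\bigl(\|\hish_j^{l-1,k}-\embh_j^{l-1,k}\|_2 + \|\overline{\embm}_{\neighbor{v_j}}^{l-1,k}-\embm_{\neighbor{v_j}}^{l-1,k}\|_2\bigr),$$
and a further application of Lipschitzness of $g_{\theta^l}$ and $\aggregate_{\theta^l}$ absorbs the message-difference term into embedding differences at the neighbors of $v_j$. Bounding column norms by the Frobenius norm once more and summing over $v_j$ yields a factor of $|\mathcal{V}| G \gamma \|\hisH^{\cdot,k}-\embH^{\cdot,k}\|_F$.

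The main obstacle is the bookkeeping around the aggregated message term $\overline{\embm}_{\neighbor{v_j}}^{l-1,k}$: strictly speaking, it depends on temporary embeddings $\widehat{\embh}_i^{l-1,k}$ for out-of-batch neighbors, which are convex combinations of historical and incomplete up-to-date values, so the natural quantity controlling its error is the layer-$(l-1)$ embedding discrepancy rather than the layer-$l$ one appearing in the statement. The statement's layer index appears to reflect a notational convention (or a uniform upper bound using $\|\hisH^{l,k}-\embH^{l,k}\|_F$ in place of the tighter $\|\hisH^{l-1,k}-\embH^{l-1,k}\|_F$); once this indexing and the Lemma~\ref{prop:tem_epsilon_h}-style convex-combination bound are invoked to absorb the temporary embeddings, the inequality follows directly.
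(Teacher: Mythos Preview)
Your proposal is essentially the paper's proof: the same $AB-CD=A(B-D)+(A-C)D$ split applied termwise, the same use of boundedness for $\|A_j\|$ and $\|D_j\|$, $\gamma$-Lipschitzness of $\nabla_{\theta^l}u_{\theta^l}$ for $\|A_j-C_j\|$, and the same column-norm $\le$ Frobenius-norm relaxation to collapse the sum to $|\mathcal{V}|$. The only cosmetic difference is that the paper passes through a $\max_{v_j\in\mathcal{V}_{\mathcal B}^k}$ instead of keeping the sum, which makes no difference once the per-node bound is uniform. Regarding the bookkeeping you flagged (layer-$(l{-}1)$ versus layer-$l$ index, and the dependence of $\overline{\embm}^{l-1,k}_{\neighbor{v_j}}$ on temporary embeddings), the paper simply writes the final line with the layer-$l$ Frobenius norm without further comment, so your reading that this is a notational convention/uniform upper bound rather than an additional argument is accurate.
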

\begin{proof}
    As $\|\mathbf{A}\mathbf{a}-\mathbf{B}\mathbf{b}\|_2\leq \|\mathbf{A}\|_F\|\mathbf{a}-\mathbf{b}\|_2 + \|\mathbf{A}-\mathbf{B}\|_F\|\mathbf{b}\|_2$, we can bound $\|\widetilde{\mathbf{g}}_{\theta^l}(\theta^{l,k}) - \mathbf{g}_{\theta^l}(\theta^{l,k})\|_2$ by
    \begin{align*}
        &\|\widetilde{\mathbf{g}}_{\theta^l}(\theta^{l,k}) - \mathbf{g}_{\theta^l}(\theta^{l,k})\|_2\\
        \leq{}& \frac{|\mathcal{V}|}{|\mathcal{V}_{\mathcal{B}}^k|}\sum_{v_i\in\mathcal{V}_{\mathcal{B}}^k}\|\left(\nabla_{\theta^{l}} u_{\theta^{l,k}}(\hish_j^{l-1,k}, \overline{\mathbf{m}}_{\neighbor{v_j}}^{l-1,k}, \embx_j)\right)\hisV_j^{l,k}-\left(\nabla_{\theta^{l}} u_{\theta^{l,k}}(\embh_j^{l-1,k}, \mathbf{m}_{\neighbor{v_j}}^{l-1,k}, \embx_j)\right)\embV_j^{l,k}\|_2\\
        \leq{}& |\mathcal{V}|\max_{v_i\in\mathcal{V}_{\mathcal{B}}^k} \|\left(\nabla_{\theta^{l}} u_{\theta^{l,k}}(\hish_j^{l-1,k}, \overline{\mathbf{m}}_{\neighbor{v_j}}^{l-1,k}, \embx_j)\right)\hisV_j^{l,k}-\left(\nabla_{\theta^{l}} u_{\theta^{l,k}}(\embh_j^{l-1,k}, \mathbf{m}_{\neighbor{v_j}}^{l-1,k}, \embx_j)\right)\embV_j^{l,k}\|_2\\
        \leq{}& |\mathcal{V}|\max_{v_i\in\mathcal{V}_{\mathcal{B}}^k} \{\|\nabla_{\theta^{l}} u_{\theta^{l,k}}(\hish_j^{l-1,k}, \overline{\mathbf{m}}_{\neighbor{v_j}}^{l-1,k}, \embx_j)\|_F\|\hisV_j^{l,k}-\embV_j^{l,k}\|_2\\
        &\quad\quad\quad\quad+ \|\nabla_{\theta^{l}} u_{\theta^{l,k}}(\hish_j^{l-1,k}, \overline{\mathbf{m}}_{\neighbor{v_j}}^{l-1,k}, \embx_j)-\nabla_{\theta^{l}} u_{\theta^{l,k}}(\embh_j^{l-1,k}, \mathbf{m}_{\neighbor{v_j}}^{l-1,k}, \embx_j)\|_F\|\embV_j^{l,k}\|_2\}\\
        \leq{}& |\mathcal{V}| G \|\hisV^{l,k}-\embV^{l,k}\|_F + |\mathcal{V}|G\gamma \|\hisH^{l,k}-\embH^{l,k}\|_F.
    \end{align*}
\end{proof}

\begin{lemma}\label{prop:grad_error}
    For an $L$-layer ConvGNN, suppose that Assumption \ref{assmp:proof} holds. For any $N\in\mathbb{N}^*$, by letting
    \begin{align*}
        \eta \leq \frac{1}{(2\gamma)^L G}\frac{1}{N^{\frac{2}{3}}} = \mathcal{O}(\frac{1}{N^{\frac{2}{3}}})
    \end{align*}
    and
    \begin{align*}
        \beta_i \leq \frac{1}{2G}\frac{1}{N^{\frac{2}{3}}}=\mathcal{O}(\frac{1}{N^{\frac{2}{3}}}),\,\, i\in[n],
    \end{align*}
    there exists $G_{2,*}>0$ and $\rho\in(0,1)$ such that for any $k\in\mathbb{N}^*$ we have
    \begin{align*}
        &\mathbb{E}[\|\Delta^k_w\|_2^2] = ({\rm Bias}(\widetilde{\mathbf{g}}_w(w^k)))^2 + {\rm Var}(\mathbf{g}_w(w^k)),\\
        &\mathbb{E}[\|\Delta^k_{\theta^l}\|_2^2] = ({\rm Bias}(\widetilde{\mathbf{g}}_{\theta^l}(\theta^{l,k})))^2 + {\rm Var}(\mathbf{g}_{\theta^l}(\theta^{l,k})),
    \end{align*}
    where
    \begin{align*}
        &{\rm Var}(\mathbf{g}_w(w^k)) = \mathbb{E}[\|\mathbf{g}_w(w^k) - \nabla_w\loss(w^{k})\|_2^2],\\
        &{\rm Bias}(\widetilde{\mathbf{g}}_w(w^k)) = \left(\mathbb{E}[\|\widetilde{\mathbf{g}}_w(w^k) - \mathbf{g}_w(w^k)\|_2^2]\right)^{\frac{1}{2}},\\
        &{\rm Var}(\mathbf{g}_{\theta^l}(\theta^{l,k})) = \mathbb{E}[\|\mathbf{g}_{\theta^l}(\theta^{l,k}) - \nabla_{\theta^l}\loss(\theta^{l,k})\|_2^2],\\
        &{\rm Bias}(\widetilde{\mathbf{g}}_{\theta^l}(\theta^{l,k})) = \left(\mathbb{E}[\|\widetilde{\mathbf{g}}_{\theta^l}(\theta^{l,k}) - \mathbf{g}_{\theta^l}\loss(\theta^{l,k})\|_2^2]\right)^{\frac{1}{2}}
    \end{align*}
    and
    \begin{align*}
        &{\rm Bias}(\widetilde{\mathbf{g}}_w(w^k))\leq G_{2,*}(\eta^{\frac{1}{2}} + \rho^{\frac{k-1}{2}}+\frac{1}{N^{\frac{1}{3}}}),\\
        &{\rm Bias}(\widetilde{\mathbf{g}}_{\theta^l}(\theta^{l,k}))\leq G_{2,*}(\eta^{\frac{1}{2}} + \rho^{\frac{k-1}{2}}+\frac{1}{N^{\frac{1}{3}}}).
    \end{align*}
\end{lemma}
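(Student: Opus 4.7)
The proof naturally splits into two independent pieces: (i) the bias--variance style decomposition of $\mathbb{E}[\|\Delta_w^k\|_2^2]$ and $\mathbb{E}[\|\Delta_{\theta^l}^k\|_2^2]$, and (ii) the quantitative bound on the bias terms. My plan is to dispose of (i) by an algebraic expansion combined with the unbiasedness from Theorem~\ref{thm:unbiased}, and then devote the bulk of the effort to (ii), where the pre-existing machinery chains together cleanly.

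For (i), I would write $\Delta_w^k = (\widetilde{\mathbf{g}}_w - \mathbf{g}_w) + (\mathbf{g}_w - \nabla_w\loss)$ and expand $\|\Delta_w^k\|_2^2$ into three terms: a squared-bias part, a squared-variance part, and a cross term. Taking expectation, the first two are exactly $(\text{Bias})^2$ and $\text{Var}$ as defined; the cross term should be handled by conditioning on the sigma-algebra generated by all randomness up to the start of iteration $k$ (which freezes the history and therefore all $\hisH^{L,k}$, $\embh^{L,k}$, and $w^k$), and then using the unbiasedness of $\mathbf{g}_w$ over the remaining mini-batch randomness. The identical argument applies to $\mathbf{g}_{\theta^l}$. (If the cross term fails to vanish exactly due to the shared mini-batch in $\widetilde{\mathbf{g}}_w$ and $\mathbf{g}_w$, a Cauchy--Schwarz detour yields $\mathbb{E}[\|\Delta_w^k\|_2^2]\le 2(\text{Bias})^2 + 2\text{Var}$, which is still enough to close the eventual $\mathbb{E}[\|\Delta_w^k\|_2]$ estimate up to a constant.)

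For (ii), the plan is to reduce the bias to an approximation error of historical values and then invoke the recursive bounds already proved. Concretely: Lemma~\ref{lemma:grad_error_bound_w_conv} gives the pointwise bound $\|\widetilde{\mathbf{g}}_w - \mathbf{g}_w\|_2 \le \gamma\|\hisH^{L,k}-\embH^{L,k}\|_F$, so squaring and taking expectation yields $\mathbb{E}[\|\widetilde{\mathbf{g}}_w-\mathbf{g}_w\|_2^2]\le \gamma^2 (d_h^{L,k})^2$. Similarly, Lemma~\ref{lemma:grad_error_bound_theta_conv} converts the $\theta^l$ bias into a sum of $(d_h^{l,k})^2$ and $(d_v^{l,k})^2$ up to a constant $|\mathcal{V}|^2 G^2$. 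I would then verify the hypotheses of Lemmas~\ref{prop:approx_h_new} and~\ref{prop:approx_v_new}: the initial boundedness $(d_h^{l,1})^2, (d_v^{l,1})^2 \le G$ is immediate from Assumption~\ref{assmp:proof}; the condition $\|\temH^{l,k}-\embH^{l,k}\|_F \le \|\hisH^{l,k}-\embH^{l,k}\|_F + 1/N^{2/3}$ is exactly Lemma~\ref{prop:tem_epsilon_h} applied with $\varepsilon = 1/N^{2/3}$, guaranteed by the choice $\beta_i \le 1/(2GN^{2/3})$; and $\|\embH^{l,k+1}-\embH^{l,k}\|_F \le 1/N^{2/3}$ is Lemma~\ref{prop:exact_difference_h} with the same $\varepsilon$, guaranteed by $\eta \le 1/((2\gamma)^L G N^{2/3})$. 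The analogues for $\temV$ and $\embV$ follow from Lemmas~\ref{prop:tem_epsilon_v} and~\ref{prop:exact_difference_v}. Plugging in the resulting estimate $(d_h^{l,k+1})^2,(d_v^{l,k+1})^2 \le C_1'\eta + C_2'\rho^k + C_3'/N^{2/3}$ and applying the elementary $\sqrt{a+b+c}\le\sqrt{a}+\sqrt{b}+\sqrt{c}$ delivers $\text{Bias} \le G_{2,*}(\eta^{1/2} + \rho^{(k-1)/2} + 1/N^{1/3})$ with $G_{2,*}$ absorbing $\gamma$, $|\mathcal{V}|$, $G$, and the $C_i'$.

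The main obstacle, in my judgement, is the bias--variance decomposition, not the bias bound: the bias bound is essentially bookkeeping on top of the already-proved Lemmas~\ref{prop:approx_h_new}--\ref{prop:approx_v_new}, whereas the decomposition genuinely requires care because $\widetilde{\mathbf{g}}_w$ and $\mathbf{g}_w$ depend on the same random mini-batch $\mathcal{V}_{\mathcal{B}}^k$, so the cross term is not obviously zero from a simple independence argument. The cleanest route is the conditional-expectation argument sketched above; the fallback is to weaken the claimed equality to an inequality with a factor of $2$, which does not affect downstream use of the bound since all subsequent arguments (e.g.\ in Theorem~\ref{thm:convergence}) only need the $\mathbb{E}[\|\Delta\|_2]\le C\varepsilon + C\rho^{(k-1)/2} + \text{Var}^{1/2}$ form.
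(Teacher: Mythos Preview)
Your proposal is correct and mirrors the paper's proof essentially step for step: the same algebraic bias--variance split, the same invocation of Lemmas~\ref{prop:exact_difference_h}--\ref{prop:tem_epsilon_v} to verify the hypotheses of Lemmas~\ref{prop:approx_h_new} and~\ref{prop:approx_v_new}, and the same use of Lemmas~\ref{lemma:grad_error_bound_w_conv} and~\ref{lemma:grad_error_bound_theta_conv} to convert the bias into $d_h^{l,k}$ and $d_v^{l,k}$. On the cross term, the paper simply asserts $\mathbb{E}[\langle \widetilde{\mathbf{g}}_w - \mathbf{g}_w,\, \mathbf{g}_w - \nabla_w\loss\rangle]=0$ without further argument, so your instinct that this is the delicate point---and your Cauchy--Schwarz fallback---are both well placed.
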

\begin{proof}
    By Lemmas \ref{prop:exact_difference_h} and \ref{prop:exact_difference_v} we know that
    \begin{align*}
        &\|\embH^{l,k+1}-\embH^{l,k}\|_F < \frac{1}{N^{\frac{2}{3}}},\,\,\forall\,l\in[L],\,k\in\mathbb{N}^*,\\
        &\|\embV^{l,k+1}-\embV^{l,k}\|_F < \frac{1}{N^{\frac{2}{3}}},\,\,\forall\,l\in[L],\,k\in\mathbb{N}^*.
    \end{align*}
    By Lemmas \ref{prop:tem_epsilon_h} and \ref{prop:tem_epsilon_v} we know that for any $k\in\mathbb{N}^*$ and $l\in[L]$ we have
    \begin{align*}
        &\|\temH^{l,k} - \embH^{l,k}\|_F \leq \|\hisH^{l,k} - \embH^{l,k}\|_F + \frac{1}{N^{\frac{2}{3}}}
    \end{align*}
    and
    \begin{align*}
        &\|\temV^{l,k} - \embV^{l,k}\|_F \leq \|\hisV^{l,k} - \embV^{l,k}\|_F + \frac{1}{N^{\frac{2}{3}}}.
    \end{align*}
    Thus, by Lemmas \ref{prop:approx_h_new} and \ref{prop:approx_v_new} we know that there exist $C_{*,1}'$, $C_{*,2}'$, and $C_{*,3}'$ that do not depend on $k,l,\eta,N$ such that for $\forall\,l\in[L]$ and $k\in\mathbb{N}^*$ hold
    \begin{align*}
        d_{h}^{l,k} \leq{}& \sqrt{C_{*,1}'\eta + C_{*,2}'\rho^{k-1} + \frac{C_{*,3}'}{N^{\frac{2}{3}}}}\\
        \leq{} &\sqrt{C_{*,1}'} \eta^{\frac{1}{2}} + \sqrt{C_{*,2}'} \rho^{\frac{k-1}{2}} + \sqrt{C_{*,3}'} \frac{1}{N^{\frac{1}{3}}}
    \end{align*}
    and
    \begin{align*}
        d_{v}^{l,k} \leq{} &\sqrt{C_{*,1}'\eta + C_{*,2}'\rho^{k-1} + \frac{C_{*,3}'}{N^{\frac{2}{3}}}}\\
        \leq{} &\sqrt{C_{*,1}'} \eta^{\frac{1}{2}} + \sqrt{C_{*,2}'} \rho^{\frac{k-1}{2}} + \sqrt{C_{*,3}'} \frac{1}{N^{\frac{1}{3}}}.
    \end{align*}
    We can decompose $\|\Delta^k_w\|_2^2$ as
    \begin{align*}
        &\|\Delta^k_w\|_2^2\\
        ={}& \|\widetilde{\mathbf{g}}_w(w^k) - \nabla_w \loss(w^k)\|_2^2\\
        ={}& \|\widetilde{\mathbf{g}}_w(w^k) - \mathbf{g}_w(w^k) + \mathbf{g}_w(w^k)  - \nabla_w \loss(w^k)\|_2^2\\
        ={}& \|\widetilde{\mathbf{g}}_w(w^k) - \mathbf{g}_w(w^k) \|_2^2 + \|\mathbf{g}_w(w^k)  - \nabla_w\loss(w^k)\|_2^2\\
        &+ 2\langle \|\widetilde{\mathbf{g}}_w(w^k) - \mathbf{g}_w(w^k) , \mathbf{g}_w(w^k)  - \nabla_w \loss(w^k) \rangle.
    \end{align*}
    We take expectation of both sides of the above expression, leading to
    \begin{align}
        \mathbb{E}[\|\Delta^k_w\|_2^2] = ({\rm Bias}(\widetilde{\mathbf{g}}_w(w^k)))^2 + {\rm Var}(\mathbf{g}_w(w^k)), \label{eqn:bias-variance_w_conv}
    \end{align}
    where
    \begin{align*}
        &{\rm Bias}(\widetilde{\mathbf{g}}_w(w^k)) = \left(\mathbb{E}[\|\widetilde{\mathbf{g}}_w(w^k) - \mathbf{g}_w(w^k) \|_2^2]\right)^{\frac{1}{2}},\\
        &{\rm Var}(\mathbf{g}_w(w^k)) = \mathbb{E}[\|\mathbf{g}_w(w^k)  - \nabla_w\loss(w^k)\|_2^2]
    \end{align*}
    as
    \begin{align*}
        \mathbb{E}[\langle \widetilde{\mathbf{g}}_w(w^k) - \mathbf{g}_w(w^k), \mathbf{g}_w(w^k)-\nabla_w \loss(w^k) \rangle] = 0.
    \end{align*}
    By Lemma \ref{lemma:grad_error_bound_w_conv}, we can bound the bias term as
    \begin{align}
        {\rm Bias}(\widetilde{\mathbf{g}}_w(w^k)) ={}& \left(\mathbb{E}[\|\widetilde{\mathbf{g}}_w(w^k) - \mathbf{g}_w(w^k) \|_2^2]\right)^{\frac{1}{2}}\nonumber\\
        \leq{}& \left(\gamma^2  \mathbb{E}[\|\hisH^{L,k}-\embH^{L,k}\|_F^2]\right)^{\frac{1}{2}}\nonumber\\
        ={}& \gamma \cdot d_h^{L,k}\nonumber\\
        \leq{}& \gamma(\sqrt{C_{*,1}'} \eta^{\frac{1}{2}} + \sqrt{C_{*,2}'} \rho^{\frac{k-1}{2}} + \sqrt{C_{*,3}'} \frac{1}{N^{\frac{1}{3}}})\nonumber\\
        \leq{}& G_{2,1}(\eta^{\frac{1}{2}} + \rho^{\frac{k-1}{2}}+\frac{1}{N^{\frac{1}{3}}}),\label{eqn:bias_bound_w_conv}
    \end{align}
    where $G_{2,1} = \gamma \max\{ \sqrt{C_{*,1}'},\sqrt{C_{*,2}'},\sqrt{C_{*,3}'} \}$.
    
    Similar to Eq. \eqref{eqn:bias-variance_w_conv}, we can decompose $\mathbb{E}[\|\Delta^k_{\theta^l}\|_2^2]$ as
    \begin{align*}
        \mathbb{E}[\|\Delta^k_{\theta^l}\|_2^2] = ({\rm Bias}(\widetilde{\mathbf{g}}_{\theta^l}(\theta^{l,k})))^2 + {\rm Var}(\mathbf{g}_{\theta^l}(\theta^{l,k})),
    \end{align*}
    where
    \begin{align*}
        &{\rm Bias}(\widetilde{\mathbf{g}}_{\theta^l}(\theta^{l,k})) = \left(\mathbb{E}[\|\widetilde{\mathbf{g}}_{\theta^l}(\theta^{l,k}) - \mathbf{g}_{\theta^l}(\theta^{l,k})\|_2^2]\right)^{\frac{1}{2}},\\
        &{\rm Var}(\mathbf{g}_{\theta^l}(\theta^{l,k})) = \mathbb{E}[\|\mathbf{g}_{\theta^l}(\theta^{l,k}) - \nabla_{\theta^{l}}(\theta^{l,k}))\|_2^2].
    \end{align*}
    By Lemma \ref{lemma:grad_error_bound_theta_conv}, we can bound the bias term as
    \begin{align}
        {\rm Bias}(\widetilde{\mathbf{g}}_{\theta^l}(\theta^{l,k}))={}& \left(\mathbb{E}[\|\widetilde{\mathbf{g}}_{\theta^l}(\theta^{l,k}) - \mathbf{g}_{\theta^l}(\theta^{l,k})\|_2^2]\right)^{\frac{1}{2}}\nonumber\\
        \leq{}& ( 2|\mathcal{V}|^2G^2 \mathbb{E}[\|\hisV^{l,k}-\embV^{l,k}\|_F^2]\nonumber\\
        &+ 2|\mathcal{V}|^2 G^2 \gamma^2 \mathbb{E}[\|\hisH^{l,k}-\embH^{l,k}\|_F^2] )^{\frac{1}{2}}\nonumber\\
        \leq{}&\sqrt{2}|\mathcal{V}|G d_v^{l,k} + \sqrt{2}|\mathcal{V}|G\gamma d_h^{l,k}\nonumber\\
        \leq{}& G_{2,2}(\eta^{\frac{1}{2}} + \rho^{\frac{k-1}{2}}+\frac{1}{N^{\frac{1}{3}}}),\label{eqn:bias_bound_theta_conv}
    \end{align}
    where $G_{2,2}=\sqrt{2}|\mathcal{V}|G(1+\gamma)\max\{\sqrt{C_{*,1}'},\sqrt{C_{*,2}'},\sqrt{C_{*,3}'}\}$. 
    
    Let $G_{2,*}=\max\{G_{2,1}, G_{2,2}\}$, then we have
    \begin{align*}
        &{\rm Bias}(\widetilde{\mathbf{g}}_{w}(w^{k})) \leq G_{2,*}(\eta^{\frac{1}{2}} + \rho^{\frac{k-1}{2}}+\frac{1}{N^{\frac{1}{3}}}),\\
        &{\rm Bias}(\widetilde{\mathbf{g}}_{\theta^l}(\theta^{l,k})) \leq G_{2,*}(\eta^{\frac{1}{2}} + \rho^{\frac{k-1}{2}}+\frac{1}{N^{\frac{1}{3}}}).
    \end{align*}
\end{proof}

By letting $\varepsilon = \frac{1}{N^{\frac{1}{3}}}$ and $C=2G_{2,*}$, we have
\begin{align*}
        &{\rm Bias}(\widetilde{\mathbf{g}}_w(w^k))\leq C\varepsilon + C\rho^{\frac{k-1}{2}},\\
        &{\rm Bias}(\widetilde{\mathbf{g}}_{\theta^l}(\theta^{l,k}))\leq C\varepsilon + C\rho^{\frac{k-1}{2}},
\end{align*}
which leads to
\begin{align*}
    \mathbb{E}[\|\Delta_w^k\|_2] \leq{}& \left(\mathbb{E}[\|\Delta_w^k\|_2^2]\right)^{\frac{1}{2}}\\
    \leq{}& \left( ({\rm Bias}(\widetilde{\mathbf{g}}_w(w^k)))^2 + {\rm Var}(\mathbf{g}_w(w^k)) \right)^{\frac{1}{2}}\\
    \leq{}& {\rm Bias}(\widetilde{\mathbf{g}}_w(w^k)) + {\rm Var}(\mathbf{g}_w(w^k))^{\frac{1}{2}}\\
    \leq{}& C\varepsilon + C\rho^{\frac{k-1}{2}} + {\rm Var}(\mathbf{g}_w(w^k))^{\frac{1}{2}}
\end{align*}
and
\begin{align*}
    \mathbb{E}[\|\Delta_{\theta^l}^k\|_2] \leq{}& \left(\mathbb{E}[\|\Delta_{\theta^l}^k\|_2^2]\right)^{\frac{1}{2}}\\
    \leq{}& \left( ({\rm Bias}(\widetilde{\mathbf{g}}_{\theta^l}({\theta^{l,k}})))^2 + {\rm Var}(\mathbf{g}_{\theta^l}({\theta^{l,k}})) \right)^{\frac{1}{2}}\\
    \leq{}& {\rm Bias}(\widetilde{\mathbf{g}}_{\theta^l}({\theta^{l,k}})) + {\rm Var}(\mathbf{g}_{\theta^l}({\theta^{l,k}}))^{\frac{1}{2}}\\
    \leq{}& C\varepsilon + C\rho^{\frac{k-1}{2}} + {\rm Var}(\mathbf{g}_{\theta^l}({\theta^{l,k}}))^{\frac{1}{2}}.
\end{align*}
Theorem \ref{thm:grad_error} and Theorem \ref{thm:grad_error_bias_variance} follow immediately.

\subsection{Proof of Theorem \ref{thm:convergence}: convergence guarantees}

In this subsection, we give the convergence guarantees of LMC. We first give sufficient conditions for convergence.
\begin{lemma}\label{prop:suff}
    Suppose that function $f:\mathbb{R}^{n} \to \mathbb{R}$ is continuously differentiable. Consider an optimization algorithm with any bounded initialization $\vecx^1$ and an update rule in the form of
    \begin{align*}
        \vecx^{k+1} = \vecx^{k} - \eta \vecd(\vecx^{k}),
    \end{align*}
    where $\eta>0$ is the learning rate and $\vecd(\vecx^{k})$ is the estimated gradient that can be seen as a stochastic vector depending on $\vecx^{k}$. Let the estimation error of the gradient be $\Delta^{k} = \vecd(\vecx^{k}) - \nabla f(\vecx^{k}) $. Suppose that
    \begin{enumerate}
        \item the optimal value $f^*  = \inf_{\vecx} f(\vecx)$ is bounded; \label{con:1}
        
        \item the gradient of $f$ is $\gamma$-Lipschitz, i.e., \label{con:2}
        \begin{align*}
            \|\nabla f(\vecy) - \nabla f(\vecx)\|_2 \leq \gamma\|\vecy - \vecx\|_2,\,\forall\,\vecx,\vecy \in \mathbb{R}^{n};
        \end{align*}
        
        \item there exists $G_0>0$ that does not depend on $\eta$ such that \label{cond:suff_3}
        \label{con:3}
        \begin{align*}
            \mathbb{E}[\|\Delta^{k}\|_2^2] \leq G_0,\,\forall\, k\in\mathbb{N}^*;
        \end{align*}

        \item there exists $N\in\mathbb{N}^*$ and $\rho\in(0,1)$ that do not depend on $\eta$ such that
        \begin{align*}
            |\mathbb{E}[\langle \nabla f(\vecx^{k}),\Delta^{k} \rangle]| \leq G_0(\eta^{\frac{1}{2}} +\rho^{\frac{k-1}{2}}+\frac{1}{N^{\frac{1}{3}}}),\,\,\forall\,k\in\mathbb{N}^*,
        \end{align*}
        where $G_0$ is the same constant as that in Condition \ref{cond:suff_3},
    \end{enumerate}
    then by letting $\eta=\min\{\frac{1}{\gamma},\frac{1}{N^{\frac{2}{3}}}\}$, we have
    \begin{align*}
        \mathbb{E}[ \|\nabla f(\vecx^{R})\|_2^2] \leq \frac{2(f(\vecx^{1})-f^*+G_0)}{N^{\frac{1}{3}}} +\frac{\gamma G_0}{N^{\frac{2}{3}}}+ \frac{G_0}{N(1-\sqrt{\rho})} = \mathcal{O}(\frac{1}{N^{\frac{1}{3}}}),
    \end{align*}
     where $R$ is chosen uniformly from $[N]$.
\end{lemma}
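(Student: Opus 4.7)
The plan is to apply the standard $\gamma$-smooth descent-lemma argument for nonconvex stochastic optimization, with the key twist being that the noise $\Delta^k$ is \emph{not} assumed unbiased; instead its correlation with $\nabla f(\vecx^k)$ is controlled by Condition~4. First I would write the descent inequality
\[
f(\vecx^{k+1}) \leq f(\vecx^{k}) - \eta\,\langle \nabla f(\vecx^{k}),\vecd(\vecx^{k})\rangle + \tfrac{\eta^{2}\gamma}{2}\|\vecd(\vecx^{k})\|_2^{2},
\]
substitute $\vecd(\vecx^{k})=\nabla f(\vecx^{k})+\Delta^{k}$, and regroup to obtain
\[
f(\vecx^{k+1}) \leq f(\vecx^{k}) + \bigl(-\eta+\tfrac{\eta^{2}\gamma}{2}\bigr)\|\nabla f(\vecx^{k})\|_2^{2} + \bigl(-\eta+\eta^{2}\gamma\bigr)\langle\nabla f(\vecx^{k}),\Delta^{k}\rangle + \tfrac{\eta^{2}\gamma}{2}\|\Delta^{k}\|_2^{2}.
\]

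Next, I would exploit the step-size constraint $\eta\leq 1/\gamma$, which yields $\eta^{2}\gamma\leq\eta$ and hence $-\eta+\eta^{2}\gamma/2\leq-\eta/2$, so the gradient-squared coefficient remains negative. Taking expectations and invoking Condition~3 ($\mathbb{E}\|\Delta^{k}\|_2^{2}\leq G_0$) together with Condition~4 ($|\mathbb{E}\langle\nabla f(\vecx^{k}),\Delta^{k}\rangle|\leq G_0(\eta^{1/2}+\rho^{(k-1)/2}+N^{-1/3})$), I get
\[
\mathbb{E}[f(\vecx^{k+1})] \leq \mathbb{E}[f(\vecx^{k})] - \tfrac{\eta}{2}\mathbb{E}\|\nabla f(\vecx^{k})\|_2^{2} + \eta G_0\bigl(\eta^{1/2}+\rho^{(k-1)/2}+N^{-1/3}\bigr) + \tfrac{\eta^{2}\gamma G_0}{2}.
\]

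Finally, I would telescope over $k=1,\ldots,N$, using Condition~1 to bound the telescoped left-hand side by $f(\vecx^{1})-f^{*}$ and the geometric sum $\sum_{k=1}^{N}\rho^{(k-1)/2}\leq 1/(1-\sqrt{\rho})$, to conclude
\[
\tfrac{\eta}{2}\sum_{k=1}^{N}\mathbb{E}\|\nabla f(\vecx^{k})\|_2^{2} \leq f(\vecx^{1})-f^{*} + \eta G_0 N\bigl(\eta^{1/2}+N^{-1/3}\bigr) + \tfrac{\eta G_0}{1-\sqrt{\rho}} + \tfrac{\eta^{2}\gamma G_0 N}{2}.
\]
Dividing by $\eta N/2$ and substituting $\eta=\min\{1/\gamma,N^{-2/3}\}$ (so that $\eta N\geq N^{1/3}$, $\eta^{1/2}\leq N^{-1/3}$, and $\eta\gamma\leq 1$), and recognizing that $\mathbb{E}\|\nabla f(\vecx^{R})\|_2^{2}=\tfrac{1}{N}\sum_{k=1}^{N}\mathbb{E}\|\nabla f(\vecx^{k})\|_2^{2}$ by the uniform sampling of $R$, delivers the stated $\mathcal{O}(N^{-1/3})$ bound after collecting constants.

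The main obstacle will be the proper handling of the cross term $\eta\,\mathbb{E}\langle\nabla f(\vecx^{k}),\Delta^{k}\rangle$: unlike unbiased SGD, where this vanishes, here one only gets the decomposition $\eta^{1/2}+\rho^{(k-1)/2}+N^{-1/3}$ from Condition~4. The $\eta^{1/2}$ and $N^{-1/3}$ pieces each contribute $\mathcal{O}(\eta^{1/2})=\mathcal{O}(N^{-1/3})$ after division by $\eta N/2$ (with $\eta N=N^{1/3}$), the geometric piece contributes $\mathcal{O}(1/N)$, and the $\eta^{2}\gamma G_0$ leftover from the descent inequality contributes $\mathcal{O}(\gamma N^{-2/3})$; balancing these three rates is precisely what pins down the choice $\eta=N^{-2/3}$ and the final exponent $1/3$.
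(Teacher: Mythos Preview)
Your proposal is correct and follows essentially the same route as the paper: descent lemma from $\gamma$-smoothness, expand $\vecd(\vecx^k)=\nabla f(\vecx^k)+\Delta^k$, take expectations using Conditions~3 and~4, telescope, and plug in $\eta=\min\{1/\gamma,N^{-2/3}\}$. The only cosmetic difference is that you simplify the coefficient $-\eta(1-\eta\gamma/2)\leq -\eta/2$ before telescoping, whereas the paper keeps $\eta(1-\eta\gamma/2)$ intact and divides by $N\eta(1-\eta\gamma/2)$ at the end; this changes a few constants (you pick up a couple of extra factors of~2) but not the $\mathcal{O}(N^{-1/3})$ rate. One small slip: with $\eta=\min\{1/\gamma,N^{-2/3}\}$ you have $\eta N\leq N^{1/3}$, not $\geq$; what you actually need (and what holds in the regime $N^{-2/3}\leq 1/\gamma$) is the equality $\eta N=N^{1/3}$.
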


\begin{proof}
    As the gradient of $f$ is $\gamma$-Lipschitz, we have
    \begin{align*}
    	f(\vecy)={}&f(\vecx)+\int_{\vecx}^{\vecy}\nabla f(\mathbf{z})\rmd\mathbf{z}\\
    	={}&f(\vecx)+\int_0^1\langle\nabla f(\vecx+t(\vecy-\vecx)), \vecy-\vecx\rangle \rmd t\\
    	={}&f(\vecx)+\langle\nabla f(\vecx),\vecy-\vecx\rangle+\int_0^1\langle\nabla f(\vecx+t(\vecy-\vecx))-\nabla f(\vecx), \vecy-\vecx\rangle \rmd t\\
    	\leq{}&f(\vecx)+\langle\nabla f(\vecx),\vecy-\vecx\rangle+\int_0^1\|\nabla f(\vecx+t(\vecy-\vecx))-\nabla f(\vecx)\|_2\| \vecy-\vecx\|_2 \rmd t\\
    	\leq{}&f(\vecx)+\langle\nabla f(\vecx),\vecy-\vecx\rangle+\int_0^1\gamma t\|\vecy-\vecx\|^2_2\rmd t\\
    	\leq{}&f(\vecx)+\langle\nabla f(\vecx), \vecy-\vecx\rangle+\frac{\gamma}{2}\|\vecy-\vecx\|_2^2,
    \end{align*}
    Then, we have
    \begin{align*}
        &f(\vecx^{k+1})\\ \leq{}&f(\vecx^{k}) + \langle  \nabla f(\vecx^{k}), \vecx^{k+1} - \vecx^{k} \rangle+\frac{\gamma}{2}\|\vecx^{k+1}-\vecx^{k}\|_2^2 \\
        ={}& f(\vecx^{k}) - \eta \langle \nabla f(\vecx^{k}), \vecd(\vecx^{k}) \rangle+ \frac{\eta^2 \gamma}{2}\|\vecd(\vecx^{k})\|_2^2 \\
        ={}& f(\vecx^{k}) - \eta \langle \nabla f(\vecx^{k}), \Delta^{k} \rangle- \eta \|\nabla f(\vecx^{k})\|_2^2 +  \frac{\eta^2 \gamma}{2}(\|\Delta^{k}\|_2^2+\|\nabla f(\vecx^{k})\|_2^2 +2\langle \Delta^{k}, \nabla f(\vecx^{k}) \rangle)\\
        ={}& f(\vecx^{k})  - \eta (1-\eta \gamma) \langle \nabla f(\vecx^{k}), \Delta^{k} \rangle- \eta (1-\frac{\eta \gamma}{2}) \|\nabla f(\vecx^{k})\|_2^2 + \frac{\eta^2 \gamma}{2}\|\Delta^{k}\|_2^2.
    \end{align*}
    By taking expectation of both sides, we have
    \begin{align*}
        &\mathbb{E}[f(\vecx^{k+1})]\\
        \leq{}&\mathbb{E}[f(\vecx^{k})] - \eta (1-\eta \gamma)  \mathbb{E}[\langle \nabla f(\vecx^{k}), \Delta^{k} \rangle]- \eta (1-\frac{\eta \gamma}{2}) \mathbb{E}[ \|\nabla f(\vecx^{k})\|_2^2]+ \frac{\eta^2 \gamma}{2}\mathbb{E}[\|\Delta^{k}\|_2^2].
    \end{align*}
    By summing up the above inequalities for $k\in[N]$ and dividing both sides by $N \eta(1-\frac{\eta \gamma}{2})$, we have
    \begin{align*}
        &\frac{\sum_{k=1}^{N} \mathbb{E}[ \|\nabla f(\vecx^{k})\|_2^2]}{N}\\
        \leq{}& \frac{f(\vecx^{1}) - \mathbb{E}[f(\vecx^{N})]}{N \eta(1-\frac{\eta \gamma}{2}) } + \frac{\eta \gamma}{2-\eta \gamma} \frac{\sum_{k=1}^N \mathbb{E}[\|\Delta^{k}\|_2^2]}{N}- \frac{(1-\eta \gamma)}{(1-\frac{\eta \gamma}{2})} \frac{\sum_{k=1}^{N} \mathbb{E}[\langle \nabla f(\vecx^{k}), \Delta^{k} \rangle]}{N} \\
        \leq{}& \frac{f(\vecx^{1}) - f^* }{N \eta(1-\frac{\eta \gamma}{2}) } + \frac{\eta \gamma}{2-\eta \gamma} \frac{\sum_{k=1}^N \mathbb{E}[\|\Delta^{k}\|_2^2]}{N}+ \frac{\sum_{k=1}^{N} |\mathbb{E}[\langle \nabla f(\vecx^{k}), \Delta^{k} \rangle]|}{N},
    \end{align*}
    where the second inequality comes from $\eta \gamma>0$ and $f(\vecx^{k}) \geq f^* $. According to the above conditions, we have
    \begin{align*}
        \frac{\sum_{k=1}^{N} \mathbb{E}[ \|\nabla f(\vecx^{k})\|_2^2]}{N} \leq{}&  \frac{f(\vecx^{1}) - f^* }{N \eta(1-\frac{\eta \gamma}{2}) } + \frac{\eta \gamma}{2 - \eta \gamma}G_0 + G_0\sum_{k=1}^N \frac{\eta^{\frac{1}{2}}  + \rho^{\frac{k-1}{2}}}{N} + \frac{G_0}{N^{\frac{1}{3}}}\\
        \leq{}& \frac{f(\vecx^{1}) - f^* }{N \eta(1-\frac{\eta \gamma}{2}) } + \frac{\eta \gamma}{2 - \eta \gamma}G_0+\eta^{\frac{1}{2}} G_0 + \frac{G_0}{N}\sum_{k=1}^\infty \rho^{\frac{k-1}{2}}  + \frac{G_0}{N^{\frac{1}{3}}}\\
        ={}& \frac{f(\vecx^{1}) - f^* }{N \eta(1-\frac{\eta \gamma}{2}) } + \frac{\eta \gamma}{2 - \eta \gamma}G_0+\eta^{\frac{1}{2}} G_0 + \frac{G_0}{N(1-\sqrt{\rho})}+ \frac{G_0}{N^{\frac{1}{3}}}.
    \end{align*}
    Notice that
    \begin{align*}
        \mathbb{E}[ \|\nabla f(\vecx^{R})\|_2^2] = \mathbb{E}_R[\mathbb{E}\|[\nabla f(\vecx^{R})\|_2^2\mid R]]=\frac{\sum_{k=1}^{N} \mathbb{E}[ \|\nabla f(\vecx^{k})\|_2^2]}{ N},
    \end{align*}
    where $R$ is uniformly chosen from $[N]$, hence we have
    \begin{align*}
        \mathbb{E}[ \|\nabla f(\vecx^{R})\|_2^2] \leq \frac{f(\vecx^{1}) - f^* }{N \eta(1-\frac{\eta \gamma}{2}) } + \frac{\eta \gamma}{2 - \eta \gamma}G_0+\eta^{\frac{1}{2}} G_0 + \frac{G_0}{N(1-\sqrt{\rho})}+ \frac{G_0}{N^{\frac{1}{3}}}.
    \end{align*}
    By letting $\eta=\min\{\frac{1}{\gamma}, \frac{1}{N^{\frac{2}{3}}}\}$, we have
    \begin{align*}
        \mathbb{E}[ \|\nabla f(\vecx^{R})\|_2^2]\leq{}& \frac{2(f(\vecx^{1})-f^*)}{N^{\frac{1}{3}}}+\frac{\gamma G_0}{N^{\frac{2}{3}}} + \frac{G_0}{N^{\frac{1}{3}}}+\frac{G_0}{N(1-\sqrt{\rho})}+\frac{G_0}{N^{\frac{1}{3}}}\\
        \leq{}& \frac{2(f(\vecx^{1})-f^*+G_0)}{N^{\frac{1}{3}}} +\frac{\gamma G_0}{N^{\frac{2}{3}}}+ \frac{G_0}{N(1-\sqrt{\rho})}\\
        ={}& \mathcal{O}(\frac{1}{N^{\frac{1}{3}}}).
    \end{align*}
\end{proof}

Given an $L$-layer GNN, forllowing \citep{vrgcn}, we directly assume that:
\begin{enumerate}
    \item the optimal value
    \begin{align*}
        \mathcal{L}^*=\inf\limits_{w,\theta^{1},\ldots,\theta^{L}} \mathcal{L}
    \end{align*}
    is bounded by $G>1$;

    \item the gradients of $\mathcal{L}$ with respect to parameters $w$ and $\theta^{l}$, i.e.,
    \begin{align*}
        \nabla_{w}\mathcal{L},\,\nabla_{\theta^{l}}\mathcal{L}
    \end{align*}
    are $\gamma$-Lipschitz for $\forall l\in[L]$.
\end{enumerate}

To show the convergence of LMC by Lemma \ref{prop:suff}, it suffices to show that
\begin{enumerate}[resume]
    \item there exists $G_1>0$ that does not depend on $\eta$ such that
    \begin{align*}
        &\mathbb{E}[\|\Delta_{w}^{k}\|_2^2]\leq G_1,\,\,\forall\,k\in\mathbb{N}^*,\\
        &\mathbb{E}[\|\Delta_{\theta^{l}}^{k}\|_2^2]\leq G_1,\,\,\forall\,l\in[L],\,k\in\mathbb{N}^*;
    \end{align*}

    \item for any $N\in\mathbb{N}^*$, there exist $G_2>0$ and $\rho\in(0,1)$ such that
    \begin{align*}
        &|\mathbb{E}[\langle \nabla_{w}\loss, \Delta_{w}^{k} \rangle]| \leq G_2(\eta^{\frac{1}{2}} + \rho^{\frac{k-1}{2}}+\frac{1}{N^{\frac{1}{3}}}),\,\,\forall\,k\in\mathbb{N}^*,\\
        &|\mathbb{E}[\langle \nabla_{\theta^{l}}\loss, \Delta_{\theta^{l}}^{k} \rangle]| \leq G_2(\eta^{\frac{1}{2}} + \rho^{\frac{k-1}{2}}+\frac{1}{N^{\frac{1}{3}}}),\,\,\forall\,l\in[L],\,k\in\mathbb{N}^*
    \end{align*}
    by letting
    \begin{align*}
        \eta \leq \frac{1}{(2\gamma)^L G}\frac{1}{N^{\frac{2}{3}}} = \mathcal{O}(\frac{1}{N^{\frac{2}{3}}})
    \end{align*}
    and
    \begin{align*}
        \beta_i \leq \frac{1}{2G}\frac{1}{N^{\frac{2}{3}}}=\mathcal{O}(\frac{1}{N^{\frac{2}{3}}}),\,\,i\in[n].
    \end{align*}
\end{enumerate}

\begin{lemma}\label{prop:cond3}
    Suppose that Assumption \ref{assmp:proof} holds, then
    \begin{align*}
        &\mathbb{E}[\|\Delta_{w}^{k}\|_2^2] \leq G_1 \triangleq 4G^2,\,\,\forall\,k\in\mathbb{N}^*,\\
        &\mathbb{E}[\|\Delta_{\theta^{l}}^{k}\|_2^2] \leq G_1 \triangleq 4G^2, \,\,\forall\,l\in[L],\,k\in\mathbb{N}^*.
    \end{align*}
\end{lemma}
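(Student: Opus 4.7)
The plan is to bound the error norms entirely via the uniform boundedness supplied by Assumption~\ref{assmp:proof}, bypassing any delicate concentration or decomposition arguments. The key observation is that Assumption~\ref{assmp:proof}(3) already guarantees that both the LMC mini-batch gradients and the true full-batch gradients have norms uniformly bounded by $G$ in every iteration, so the triangle inequality together with the elementary inequality $\|a-b\|_2^2 \le 2\|a\|_2^2 + 2\|b\|_2^2$ immediately controls the second moment of the difference.

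Concretely, I would proceed as follows. First, invoke Assumption~\ref{assmp:proof}(3) to record the deterministic (pointwise) bounds
\begin{align*}
    \|\widetilde{\mathbf{g}}_w(w^k)\|_2 \le G, \quad \|\nabla_w\loss(w^k)\|_2 \le G, \quad \|\widetilde{\mathbf{g}}_{\theta^l}(\theta^{l,k})\|_2 \le G, \quad \|\nabla_{\theta^l}\loss(\theta^{l,k})\|_2 \le G,
\end{align*}
valid for all $l\in[L]$ and $k\in\mathbb{N}^*$. Second, apply the standard inequality $\|a-b\|_2^2 \le 2\|a\|_2^2 + 2\|b\|_2^2$ to the definition $\Delta_w^k = \widetilde{\mathbf{g}}_w(w^k) - \nabla_w\loss(w^k)$ to get $\|\Delta_w^k\|_2^2 \le 2G^2 + 2G^2 = 4G^2$ almost surely, and analogously $\|\Delta_{\theta^l}^k\|_2^2 \le 4G^2$ almost surely. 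Third, take expectations on both sides; since the bounds are deterministic, the expectation is preserved and we obtain $\mathbb{E}[\|\Delta_w^k\|_2^2] \le 4G^2$ and $\mathbb{E}[\|\Delta_{\theta^l}^k\|_2^2] \le 4G^2$, setting $G_1 \triangleq 4G^2$ as claimed.

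There is essentially no technical obstacle here: the lemma is a direct consequence of the uniform boundedness assumption, and the role of this lemma in the broader argument is simply to furnish Condition~\ref{cond:suff_3} of Lemma~\ref{prop:suff} with a constant that does not depend on the learning rate $\eta$. The only thing worth double-checking is that the bound $G_1 = 4G^2$ is compatible with the constant $G_0$ that appears jointly in Conditions~3 and 4 of Lemma~\ref{prop:suff}; since Lemma~\ref{prop:grad_error} already provides the matching inner-product bound with a constant $G_{2,*}$ independent of $\eta$, one can take $G_0 = \max\{G_1, G_{2,*}\}$ in the convergence proof, so no delicate balancing is needed.
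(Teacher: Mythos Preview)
Your proposal is correct and follows exactly the same approach as the paper: apply $\|a-b\|_2^2 \le 2\|a\|_2^2 + 2\|b\|_2^2$ to the definition of $\Delta_w^k$ and $\Delta_{\theta^l}^k$, then invoke the uniform bound $G$ on $\|\widetilde{\mathbf{g}}_w\|_2$, $\|\widetilde{\mathbf{g}}_{\theta^l}\|_2$, $\|\nabla_w\loss\|_2$, $\|\nabla_{\theta^l}\loss\|_2$ from Assumption~\ref{assmp:proof}(3). The only cosmetic difference is that you obtain the bound pointwise before taking expectations, whereas the paper writes the inequality directly in expectation form; the argument is otherwise identical.
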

\begin{proof}
    We have
    \begin{align*}
        \mathbb{E}[\|\Delta_{w}^{k}\|_2^2] ={}& \mathbb{E}[\|\widetilde{\mathbf{g}}_w(w^k) - \nabla_{w}\loss(w^k)\|_2^2]\\
        \leq{}& 2(\mathbb{E}[\|\widetilde{\mathbf{g}}_w(w^k)\|_2^2] + \mathbb{E}[\|\nabla_{w}\loss(w^k)\|_2^2])\\
        \leq{}& 4G^2
    \end{align*}
    and
    \begin{align*}
        \mathbb{E}[\|\Delta_{\theta^{l}}^{k}\|_2^2]={}& \mathbb{E}[\|\widetilde{\mathbf{g}}_{\theta^{l}}(\theta^{l,k}) - \nabla_{\theta^{l}}\loss(\theta^{l,k})\|_2^2]\\
        \leq{}& 2(\mathbb{E}[\|\widetilde{\mathbf{g}}_{\theta^{l}}(\theta^{l,k})\|_2^2] + \mathbb{E}[\|\nabla_{\theta^{l}}\loss(\theta^{l,k})\|_2^2])\\
        \leq{}& 4G^2.
    \end{align*}
\end{proof}

\begin{lemma}\label{prop:cond4} \label{prop:inner_product_w}
    Suppose that Assumption \ref{assmp:proof} holds. For any $N\in\mathbb{N}^*$, there exist $G_2>0$ and $\rho\in(0,1)$ such that
    \begin{align*}
        &|\mathbb{E}[\langle \nabla_{w}\loss, \Delta_{w}^{k} \rangle]| \leq G_2(\eta^{\frac{1}{2}} + \rho^{\frac{k-1}{2}}+\frac{1}{N^{\frac{1}{3}}}),\,\,\forall\,k\in\mathbb{N}^*,\\
        &|\mathbb{E}[\langle \nabla_{\theta^{l}}\loss, \Delta_{\theta^{l}}^{k} \rangle]| \leq G_2(\eta^{\frac{1}{2}} + \rho^{\frac{k-1}{2}}+\frac{1}{N^{\frac{1}{3}}}),\,\,\forall\,l\in[L],\,k\in\mathbb{N}^*
    \end{align*}
    by letting
    \begin{align*}
        \eta \leq \frac{1}{(2\gamma)^L G}\frac{1}{N^{\frac{2}{3}}} = \mathcal{O}(\frac{1}{N^{\frac{2}{3}}})
    \end{align*}
    and
    \begin{align*}
        \beta_i \leq \frac{1}{2G}\frac{1}{N^{\frac{2}{3}}}=\mathcal{O}(\frac{1}{N^{\frac{2}{3}}}),\,\,i\in[n].
    \end{align*}
\end{lemma}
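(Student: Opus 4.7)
The plan is to split the gradient error into a \emph{bias} part and a \emph{conditionally zero-mean} part, and then control the inner product of each with $\nabla\loss$ separately. Concretely, I decompose
\begin{align*}
\Delta_w^k &= \bigl(\widetilde{\mathbf{g}}_w(w^k) - \mathbf{g}_w(w^k)\bigr) + \bigl(\mathbf{g}_w(w^k) - \nabla_w\loss(w^k)\bigr),\\
\Delta_{\theta^l}^k &= \bigl(\widetilde{\mathbf{g}}_{\theta^l}(\theta^{l,k}) - \mathbf{g}_{\theta^l}(\theta^{l,k})\bigr) + \bigl(\mathbf{g}_{\theta^l}(\theta^{l,k}) - \nabla_{\theta^l}\loss(\theta^{l,k})\bigr).
\end{align*}
The first term in each line is the bias of LMC relative to backward SGD, and the second is the SGD sampling noise.

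Next, I would show that the inner product of $\nabla\loss$ with the second term vanishes in expectation. By Assumption~\ref{assmp:proof}, the mini-batch $\mathcal{V}_{\mathcal{B}}^k$ is drawn fresh and uniformly at step $k$, so conditionally on the sigma-algebra $\mathcal{F}_{k-1}$ generated by the previous iterations (which determines $w^k$ and $\theta^{l,k}$ but not the current sample), Theorem~\ref{thm:unbiased} gives $\mathbb{E}[\mathbf{g}_w(w^k)\mid \mathcal{F}_{k-1}]=\nabla_w\loss(w^k)$ and $\mathbb{E}[\mathbf{g}_{\theta^l}(\theta^{l,k})\mid \mathcal{F}_{k-1}]=\nabla_{\theta^l}\loss(\theta^{l,k})$. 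By the tower property, since $\nabla\loss$ is $\mathcal{F}_{k-1}$-measurable,
\begin{align*}
\mathbb{E}[\langle \nabla_w\loss(w^k),\, \mathbf{g}_w(w^k)-\nabla_w\loss(w^k)\rangle] &= 0,\\
\mathbb{E}[\langle \nabla_{\theta^l}\loss(\theta^{l,k}),\, \mathbf{g}_{\theta^l}(\theta^{l,k})-\nabla_{\theta^l}\loss(\theta^{l,k})\rangle] &= 0.
\end{align*}

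For the bias part I invoke Cauchy--Schwarz together with the boundedness $\|\nabla_w\loss\|_2,\|\nabla_{\theta^l}\loss\|_2\le G$ from Assumption~\ref{assmp:proof}, obtaining
\begin{align*}
\bigl|\mathbb{E}[\langle \nabla_w\loss(w^k),\, \widetilde{\mathbf{g}}_w(w^k)-\mathbf{g}_w(w^k)\rangle]\bigr|
&\le G\cdot\bigl(\mathbb{E}\|\widetilde{\mathbf{g}}_w(w^k)-\mathbf{g}_w(w^k)\|_2^2\bigr)^{1/2}
= G\cdot\mathrm{Bias}(\widetilde{\mathbf{g}}_w(w^k)),
\end{align*}
and analogously for $\theta^l$. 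With the stated choices $\eta\le \frac{1}{(2\gamma)^L G}\,N^{-2/3}$ and $\beta_i\le \frac{1}{2G}\,N^{-2/3}$, Lemma~\ref{prop:grad_error} supplies the bound $\mathrm{Bias}(\widetilde{\mathbf{g}}_w(w^k)),\,\mathrm{Bias}(\widetilde{\mathbf{g}}_{\theta^l}(\theta^{l,k}))\le G_{2,*}(\eta^{1/2}+\rho^{(k-1)/2}+N^{-1/3})$ for some $\rho=(n-S)/n\in(0,1)$. Setting $G_2:=G\cdot G_{2,*}$ and combining the two parts finishes the claim.

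The only subtle point is the measurability argument needed to kill the sampling-noise inner product: one must verify that both $w^k,\theta^{l,k}$ and the historical tables $\hisH^{l,k},\hisV^{l,k}$ used inside $\widetilde{\mathbf{g}}$ are $\mathcal{F}_{k-1}$-measurable, while the exact-embedding quantity $\mathbf{g}$ depends on $\mathcal{F}_{k-1}$ only through the parameters and on the fresh sample $\mathcal{V}_{\mathcal{B}}^k$. This is immediate from the LMC update order in Algorithm~\ref{alg:lmc} (historical values are read and then overwritten using the current sample), so the conditional unbiasedness goes through cleanly; the rest is a direct application of Cauchy--Schwarz and Lemma~\ref{prop:grad_error}.
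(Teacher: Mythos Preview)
Your proposal is correct and follows essentially the same route as the paper: decompose $\Delta^k$ into the LMC--backward-SGD bias plus the backward-SGD sampling noise, kill the inner product with the noise via the unbiasedness in Theorem~\ref{thm:unbiased}, then bound the bias inner product by $G$ times the bias estimate from Lemma~\ref{prop:grad_error}, arriving at $G_2=G\,G_{2,*}$. One small remark: for the vanishing of the noise inner product you only need $\nabla\loss(w^k),\nabla\loss(\theta^{l,k})$ to be $\mathcal{F}_{k-1}$-measurable; the historical tables $\hisH^{l,k},\hisV^{l,k}$ are in fact updated using the fresh sample (so are not $\mathcal{F}_{k-1}$-measurable), but since they enter only the bias term $\widetilde{\mathbf{g}}-\mathbf{g}$ and not the noise term $\mathbf{g}-\nabla\loss$, this does not affect your argument.
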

\begin{proof}
    By Eqs. (\ref{eqn:bias_bound_w_conv}) and (\ref{eqn:bias_bound_theta_conv}) we know that there exists $G_{2,*}$ such that for any $k\in\mathbb{N}^*$ we have
    \begin{align*}
        \mathbb{E}[\|\widetilde{\mathbf{g}}_w(w^k) - \mathbf{g}_w(w^k)\|_2]\leq{}& \left(\mathbb{E}[\|\widetilde{\mathbf{g}}_w(w^k) - \mathbf{g}_w(w^k)\|^2_2]\right)^{\frac{1}{2}}\\
        \leq{}& G_{2,*}(\eta^{\frac{1}{2}} + \rho^{\frac{k-1}{2}}+\frac{1}{N^{\frac{1}{3}}})
    \end{align*}
    and
    \begin{align*}
        \mathbb{E}[\|\widetilde{\mathbf{g}}_{\theta^l}(\theta^{l,k}) - \mathbf{g}_{\theta^l}(\theta^{l,k})\|_2]\leq{}& \left(\mathbb{E}[\|\widetilde{\mathbf{g}}_{\theta^l}(\theta^{l,k}) - \mathbf{g}_{\theta^l}(\theta^{l,k})\|^2_2]\right)^{\frac{1}{2}}\\
        \leq{}& G_{2,*}(\eta^{\frac{1}{2}} + \rho^{\frac{k-1}{2}}+\frac{1}{N^{\frac{1}{3}}}),
    \end{align*}
    where $\rho=\frac{n-S}{n}<1$ is a constant. Hence
    \begin{align*}
        |\mathbb{E}[\langle \nabla_{w}\loss, \Delta_{w}^{k} \rangle]|={}& |\mathbb{E}[\langle \nabla_{w}\loss, \widetilde{\mathbf{g}}_w(w^k) - \nabla_w\loss(w^k) \rangle]|\\
        ={}& |\mathbb{E}[\langle \nabla_{w}\loss, \widetilde{\mathbf{g}}_w(w^k) - \mathbf{g}_w(w^k) \rangle]|\\
        \leq{}& \mathbb{E}[\|\nabla_{w}\loss\|_2 \|\widetilde{\mathbf{g}}_w(w^k) - \mathbf{g}_w(w^k) \|_2]\\
        \leq{}& G\mathbb{E}[\|\widetilde{\mathbf{g}}_w(w^k) - \mathbf{g}_w(w^k)\|_2],\\
        \leq{}&G_{2}(\eta^{\frac{1}{2}} + \rho^{\frac{k-1}{2}}+\frac{1}{N^{\frac{1}{3}}})
    \end{align*}
    and
    \begin{align*}
        |\mathbb{E}[\langle \nabla_{\theta^l}\loss, \Delta_{\theta^l}^{k} \rangle]| ={}& |\mathbb{E}[\langle \nabla_{\theta^l}\loss, \widetilde{\mathbf{g}}_{\theta^l}(\theta^{l,k}) - \nabla_{\theta^l} \loss(\theta^{l,k}) \rangle]|\\
        ={}& |\mathbb{E}[\langle \nabla_{\theta^l}\loss, \widetilde{\mathbf{g}}_{\theta^l}(\theta^{l,k}) - \mathbf{g}_{\theta^l}(\theta^{l,k}) \rangle]|\\
        \leq{}& \mathbb{E}[\|\nabla_{\theta^l}\loss\|_2 \|\widetilde{\mathbf{g}}_{\theta^l}(\theta^{l,k}) - \mathbf{g}_{\theta^l}(\theta^{l,k}) \|_2]\\
        \leq{}& G\mathbb{E}[\|\widetilde{\mathbf{g}}_{\theta^l}(\theta^{l,k}) - \mathbf{g}_{\theta^l}(\theta^{l,k})\|_2]\\
        \leq{}&G_{2}(\eta^{\frac{1}{2}} + \rho^{\frac{k-1}{2}}+\frac{1}{N^{\frac{1}{3}}}),
    \end{align*}
    where $G_{2}=GG_{2,*}$. 
\end{proof}

According to Lemmas \ref{prop:cond3} and \ref{prop:cond4}, the conditions in Lemma \ref{prop:suff} hold. By letting 
\begin{align*}
    \varepsilon = \left(\frac{2(f(\vecx^{1})-f^*+G_0)}{N^{\frac{1}{3}}} +\frac{\gamma G_0}{N^{\frac{2}{3}}}+ \frac{G_0}{N(1-\sqrt{\rho})}\right)^{\frac{1}{2}} = \mathcal{O}(\frac{1}{N^{\frac{1}{6}}}),
\end{align*}
Theorem \ref{thm:convergence} follows immediately.

\section{More Experiments}

\subsection{Performance on Small Datasets}

Figure \ref{fig:runtime_small} reports the convergence curves GD, GAS, and LMC for GCN on three small datasets, i.e., Cora, Citeseer, and PubMed from Planetoid \citep{planetoid}.
LMC is faster than GAS, especially on the CiteSeer and PubMed datasets.
Notably, the key bottleneck on the small datasets is graph sampling rather than forward and backward passes. Thus, GD is faster than GAS and LMC, as it avoids graph sampling by directly using the whole graph. 

\begin{figure}[t]
\centering 
\includegraphics[width=\linewidth]{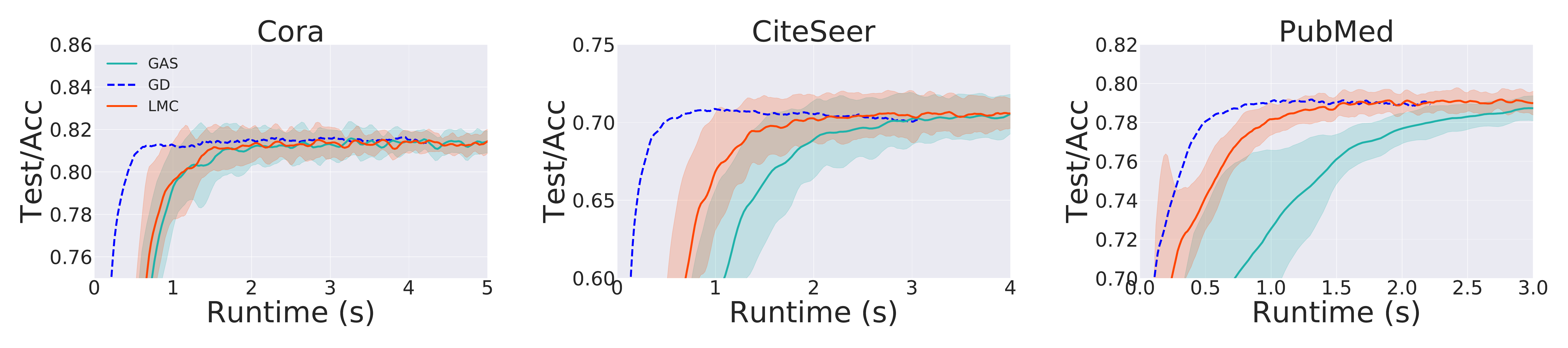}
\caption{
Testing accuracy w.r.t. runtimes (s).} \label{fig:runtime_small}
\end{figure}

\subsection{Comparison in terms of Training Time per Epoch}\label{sec:exp_epochtime}
We evaluate the training time per epoch of CLUSTER, GAS, FM, and LMC in Table \ref{tab:epochtime}. 
Compared with GAS, LMC additionally accesses historical auxiliary variables.
Inspired by GAS \citep{gas}, we use the concurrent mini-batch execution to asynchronously access historical auxiliary variables.
Moreover, from the convergence analysis of LMC, we can sample clusters to construct fixed subgraphs at preprocessing step (Line 2 in Algorithm \ref{alg:lmc}) rather than sample clusters to construct various subgraphs at each training step\footnote{CLUSTER-GCN proposes to sample clusters to construct various subgraphs at each training step and LMC follows it. If a subgraph-wise sampling method prunes an edge at the current step, the GNN may observe the pruned edge at the next step by resampling subgraphs.
This avoids GNN overfitting the graph which drops some important edges as shown in Section 3.2 in \citep{cluster_gcn} (we also observe that GAS achieves the accuracy of 71.5\% and 71.1\% under stochastic subgraph partition and fixed subgraph partition respectively on the Ogbn-arxiv dataset).}.
This further avoids sampling costs.
Finally, the training time per epoch of LMC is comparable with GAS.
CLUSTER is slower than GAS and LMC, as it prunes edges in forward passes, introducing additional normalization operation for the adjacency matrix of the sampled subgraph by $[\mathbf{A}_{\inbatch}]_{i,j}/\sqrt{deg_{\inbatch}(i)deg_{\inbatch}(j)}$, where $deg_{\inbatch}(i)$ is the degree in the sampled subgraph rather than the whole graph.
The normalized adjacency matrix is difficult to store and reuse, as the sampled subgraph may be different.
FM is slower than other methods, as they additionally update historical embeddings in the storage for the nodes outside the mini-batches.

\begin{table}[h]\centering
      \caption{%
      Training time (s) per epoch of CLUSTER, GAS, FM, and LMC.
      }\label{tab:epochtime}
    \setlength{\tabcolsep}{1.9mm}
    \scalebox{0.8}{
    \begin{tabular}{lcccc}
    \toprule
    \textbf{Dataset} \& \textbf{GNN}
    & {\small CLUSTER} & {\small GAS} & {\small FM} & {\small LMC} \\
    \midrule
    Ogbn-arxiv \& GCN   & 0.51 & 0.46 & 0.75 & \textbf{0.45}\\
    FLICKR \& GCN       & 0.33 & 0.29 & 0.45 & \textbf{0.26}\\
    REDDIT \& GCN       & 2.16 & \textbf{2.11} & 5.67 & 2.28\\
    PPI \& GCN          & 0.84 & \textbf{0.61} & 0.78 & 0.62\\
    \midrule
    Ogbn-arxiv \& GCNII & --- & 0.92  & 1.02 & \textbf{0.91}\\
    FLICKR \& GCNII     & --- & \textbf{1.32}  & 1.44 & 1.33\\
    REDDIT \& GCNII     & --- & \textbf{4.47}  & 7.89 & 4.71\\
    PPI \& GCNII        & --- & \textbf{4.61}  & 5.38 & 4.77\\
    \bottomrule
  \end{tabular}
    }
\end{table}

\newpage

\subsection{Comparison in terms of Memory under different Batch Sizes}

In Table \ref{tab:memory_consumption}, we report the GPU memory consumption, and the proportion of reserved messages {\small $\sum_{k=1}^{b} \|\widetilde{\mathbf{A}}_{\mathcal{V}_{b_k}}^{alg}\|_0/\|\widetilde{\mathbf{A}}\|_0$} in forward and backward passes for GCN, where {\small $\widetilde{\mathbf{A}}$} is the adjacency matrix of full-batch GCN, {\small $\widetilde{\mathbf{A}}_{\mathcal{V}_{b_k}}^{alg}$} is the adjacency matrix used in a subgraph-wise method $alg$ (e.g., CLUSTER, GAS, and LMC), and {\small$\|\cdot\|_0$} denotes the {\small$\ell_0$}-norm. As shown in Table \ref{tab:memory_consumption}, LMC makes full use of all sampled nodes in both forward and backward passes, which is the same as full-batch GD. {\it Default} indicates the default batch size used in the codes and toolkits of GAS \citep{gas}.

\begin{table}[h]\centering
    \caption{GPU memory consumption (MB) and the proportion of reserved messages (\%) in forward and backward passes of GD, CLUSTER, GAS, and LMC for training GCN. {\it Default} indicates the default batch size used in the codes and toolkits of GAS \citep{gas}. }\label{tab:memory_consumption}
    \setlength{\tabcolsep}{1.9mm}
    \resizebox{\linewidth}{!}{
    \begin{tabular}{cccccc}
    \toprule
    \textbf{Batch size} & \textbf{Methods} & Ogbn-arxiv & FLICKR & REDDIT & PPI\\
    \midrule
    \mc{2}{c}{Full-batch GD} & 681/{\bf 100}\%/{\bf 100}\% & 411/{\bf 100}\%/{\bf 100}\% & 2067/{\bf 100}\%/{\bf 100}\% & 605/{\bf 100}\%/{\bf 100}\%\\
    \midrule
    \mr{3}{1} & CLUSTER & {\bf 177}/\,\,\,67\%/\,\,\,67\% & {\bf 138}/\,\,\,57\%/\,\,\,57\% & {\bf 428}/\,\,\,35\%/\,\,\,35\% & {\bf 189}/\,\,\,90\%/\,\,\,90\%\\
    & GAS & 178/{\bf 100}\%/\,\,\,67\% & 168/{\bf 100}\%/\,\,\,57\% & 482/{\bf 100}\%/\,\,\,35\% & 190/{\bf 100}\%/\,\,\,90\%\\
    & {\bf LMC} & 207/{\bf 100}\%/{\bf 100}\% & 177/{\bf 100}\%/{\bf 100}\% & 610/{\bf 100}\%/{\bf 100}\% & 197/{\bf 100}\%/{\bf 100}\%\\
    \midrule
    \mr{3}{\rm Default} & CLUSTER & 424/\,\,\,83\%/\,\,\,83\% & 310/\,\,\,77\%/\,\,\,77\% & 1193/\,\,\,65\%/\,\,\,65\% & 212/\,\,\,91\%/\,\,\,91\%\\
    & GAS & 452/{\bf 100}\%/\,\,\,83\% & 375/{\bf 100}\%/\,\,\,77\% & 1508/{\bf 100}\%/\,\,\,65\% & 214/{\bf 100}\%/\,\,\,91\%\\
    & {\bf LMC} & 557/{\bf 100}\%/{\bf 100}\% & 376/{\bf 100}\%/{\bf 100}\% & 1829/{\bf 100}\%/{\bf 100}\% & 267/{\bf 100}\%/{\bf 100}\%\\
    \bottomrule
  \end{tabular}
    }
\end{table}

\subsection{Ablation about $\beta_i$}\label{sec:ablation_beta}

As shown in Section \ref{sec:selection_beta}, $\beta_i = score(i)\alpha$ in LMC. We report the prediction performance under $\alpha \in \{0.0, 0.2, 0.4, 0.6, 0.8, 1.0\}$ and $score \in \{f(x)=x^2,f(x)=2x-x^2,f(x)=x,f(x)=1;x= deg_{local}(i)/deg_{global}(i)\}$ in Tables \ref{tab:ablation_alpha} and \ref{tab:ablation_score} respectively. When exploring the effect of a specific hyper-parameter, we fix the other hyper-parameters as their best values. Notably, $\alpha=0$ implies that LMC directly uses the historical values as affordable without alleviating their staleness, which is the same as that in GAS.
Under large batch sizes, LMC achieves the best performance with large $\beta_i=1$, as large batch sizes improve the quality of the incomplete up-to-date messages.
Under small batch sizes, LMC achieves the best performance with small $\beta_i=0.4 score_{2x-x^2}(i)$, as small learning rates alleviate the staleness of the historical values.

\begin{table}[h]\centering
      \caption{%
      Prediction performance under different $\alpha$ on the Ogbn-arxiv dataset.
      }\label{tab:ablation_alpha}
    \setlength{\tabcolsep}{1.9mm}
    \scalebox{1}{
    \begin{tabular}{cc|cccccc}
    \toprule
    \mr{2}{Batch Sizes} & \mr{2}{learning rates} & \mc{6}{c}{$\alpha$}  \\
    & & 0.0 & 0.2 & 0.4 & 0.6 & 0.8 & 1.0 \\
    \midrule
    1   & 1e-4 & 71.34 & 71.39 & \textbf{71.65} & 71.31 & 70.86 & 70.57\\
    40  & 1e-2 & 69.85 & 69.12 & 69.89 & 69.61 & 69.82 & \textbf{71.44}\\
    \bottomrule
  \end{tabular}
    }
\end{table}

\begin{table}[h]\centering
      \caption{%
      Prediction performance under different $score$ on the Ogbn-arxiv dataset.
      }\label{tab:ablation_score}
    \setlength{\tabcolsep}{1.9mm}
    \scalebox{0.9}{
    \begin{tabular}{cc|ccccc}
    \toprule
    \mr{2}{Batch Sizes} & \mr{2}{learning rates} & \mc{5}{c}{$score$}  \\
    & & $f(x)=2x-x^2$ & $f(x)=1$ & $f(x)=x^2$ & $f(x)=x$ & $f(x)=sin(x)$  \\
    \midrule
    1   & 1e-4 & \textbf{71.35} & 70.84 & 71.32 & 71.30 & 71.13 \\
    40  & 1e-2 & 67.59 & \textbf{71.44} & 69.91 & 70.03 & 70.32\\
    \bottomrule
  \end{tabular}
    }
\end{table}

\newpage

\section{TOP-SPIDER}

In this section, we integrate stochastic path integrated differential estimator (SPIDER) \citet{spider, spider_pan} with TOP \citet{top} to improve the convergence rate from $\mathcal{O}(\epsilon^{-6})$ to $\mathcal{O}(\epsilon^{-3})$. We summarize TOP-SPIDER in Algorithm \ref{alg:lmc_spider}.

\begin{minipage}[t]{\linewidth}
    \begin{algorithm}[H]
    \caption{TOP based on SPIDER}
    \label{alg:lmc_spider}
    \begin{algorithmic}[1]
    \State {\bfseries Input:} The learning rate $\eta$.
    \State  Asynchronously compute $\mathbf{R}^{(l)}_{\mathcal{N}(\mathcal{B}_k)-\mathcal{B}_k, \mathcal{B}_k}$ by $\min_{\mathbf{R}^{(l)}_{\mathcal{N}(\mathcal{B})-\mathcal{B}, \mathcal{B}}} \| \mathbf{H}^{(l)}_{\mathcal{N}({\mathcal{B}})-\mathcal{B}} - \mathbf{R}^{(l)}_{\mathcal{N}(\mathcal{B})-\mathcal{B}, \mathcal{B}} \mathbf{H}^{(l)}_{\mathcal{B}} \|_F,$
    \For{{$k = 1, \dots, N$}}
            \If {mod$(k,q)=0$}
                \State Uniformly sample {$\mathcal{B}_{k}$} with \textbf{large} size $S_1$
                \State Initialize {$\mathbf{H}^{(0)}_{\mathcal{B}_k}=\mathbf{X}_{\mathcal{B}_k}$}
                \For{{$l=0,\dots,L-1$}}
                    \State Compute $\mathbf{Z}^{(l,s)}_{\mathcal{B}_k} = (\mathbf{A}_{\mathcal{B}_k,\mathcal{B}_k} +  \mathbf{A}_{\mathcal{B}_k,\mathcal{N}({\mathcal{B}_k})-\mathcal{B}_k}\mathbf{R}^{(l)}_{\mathcal{N}(\mathcal{B}_k)-\mathcal{B}_k, \mathcal{B}_k}) \mathbf{H}^{(l)}_{\mathcal{B}_k}$
                    \State Compute $\mathbf{H}^{(l+1)}_{\mathcal{B}_k} = f( \mathbf{Z}^{(l)}_{\mathcal{B}_k} \mathbf{W}^{(l)}_k) $
                \EndFor
                \State Compute the mini-batch loss $\loss(\mathbf{W}_k, S_1) = l(\mathbf{H}^{(L)}_{\mathcal{B}_k}, \mathbf{Y}_{\mathcal{B}_k} )$
                \State $g_{k} = \nabla \loss(\mathbf{W}_{k}, S_1)$
                \State $\mathbf{W}_{k+1} = 2\mathbf{W}_{k} - \mathbf{W}_{k-1} - \eta (g_{k} - g_{k-1})$
            \Else
                \State Uniformly sample {$\mathcal{B}_{k}$} with \textbf{small} size $S_2$
                \State Initialize {$\mathbf{H}^{(0)}_{\mathcal{B}_k}=\mathbf{X}_{\mathcal{B}_k}$}
                \For{{$l=0,\dots,L-1$}}
                    \State Compute $\mathbf{Z}^{(l,s)}_{\mathcal{B}_k} = (\mathbf{A}_{\mathcal{B}_k,\mathcal{B}_k} +  \mathbf{A}_{\mathcal{B}_k,\mathcal{N}({\mathcal{B}_k})-\mathcal{B}_k}\mathbf{R}^{(l)}_{\mathcal{N}(\mathcal{B}_k)-\mathcal{B}_k, \mathcal{B}_k}) \mathbf{H}^{(l)}_{\mathcal{B}_k}$
                    \State Compute $\mathbf{H}^{(l+1)}_{\mathcal{B}_k} = f( \mathbf{Z}^{(l)}_{\mathcal{B}_k} \mathbf{W}^{(l)}_k) $
                \EndFor
                \State Compute the mini-batch loss $\loss(\mathbf{W}_k, S_2) = l(\mathbf{H}^{(L)}_{\mathcal{B}_k}, \mathbf{Y}_{\mathcal{B}_k} )$
                \State $g_{k} = \nabla \loss (\mathbf{W}_k, S_2) - \nabla \loss (\mathbf{W}_{k-1}, S_2) + g_{k-1}$
                \State $\mathbf{W}_{k+1} = 2\mathbf{W}_{k} - \mathbf{W}_{k-1} - \eta (g_{k} - g_{k-1})$
            \EndIf
    \EndFor
    \end{algorithmic}
\end{algorithm}
\end{minipage}


    

    

\newpage

The convergence rate of TOP-SPIDER is given by Theorem \ref{thm:lmc_spider}.

\begin{theorem}\label{thm:lmc_spider}
    Assume that the optimal value {\small$\loss^*=\inf_{W}\loss(W)$} is bounded, the variance of the gradients of mini-batches $\nabla \loss(\mathbf{W}_{k}, S)$ is bounded by $\frac{\sigma^2}{S}$, and the gradients of mini-batches $\nabla \loss(\mathbf{W}_{k}, S)$  are $\gamma S$-Lipschitz. Then, \textbf{the convergence of TOP-SPIDER is $\mathcal{O}(\epsilon^{-3})$}, i.e., TOP-SPIDER requires sampling mini-batches with total size $\sum_{i=1}^N |\mathcal{B}_i| = \mathcal{O}(\epsilon^{-3})$ to reach a first-order stationary point with error $\epsilon > 0$.
\end{theorem}

\begin{proof}
    Let $\vecd^{k} = \frac{W_{k} - W_{k-1}}{ - \eta}$.

    First, we have
    \begin{align*}
        \mathbb{E}\| \nabla L(W_{k}) - \vecd^{k} \|_2^2 &= \mathbb{E}\| \nabla L(W_{k}) - (\frac{W_{k} - W_{k-1}}{ - \eta} + \nabla L(W_{k}, B_k) - \nabla L(W_{k-1}, B_k) )  \|_2^2\\
        &= \mathbb{E}\| \nabla L(W_{k}) - \nabla L(W_{k-1}) + \nabla L(W_{k-1}) - \vecd^{k-1}  \\
        & - (\nabla L(W_{k}, B_k) - \nabla L(W_{k-1}, B_k) ) \|_2^2\\
        &= \mathbb{E} \| \nabla L(W_{k}) - \nabla L(W_{k-1}) - (\nabla L(W_{k}, B_k) - \nabla L(W_{k-1}, B_k) ) \|_2^2 \\
        & + \| \nabla L(W_{k-1}) - \vecd(\vecx^{k-1}) \|_2^2\\
        & =  \mathbb{E} \| \nabla L(W_{k}, B_k) - \nabla L(W_{k-1}, B_k) \|_2^2 - \mathbb{E} \| \nabla L(W_{k}) - \nabla L(W_{k-1}) \|_2^2 \\
        & + \mathbb{E}  \| \nabla L(W_{k-1}) - \vecd^{k-1} \|_2^2\\
        & \leq \mathbb{E} \| \nabla L(W_{k}, B_k) - \nabla L(W_{k-1}, B_k) \|_2^2 + \mathbb{E}  \| \nabla L(W_{k-1}) - \vecd^{k-1} \|_2^2\\
        & \leq  \sum_{j=k_0+1}^{k} \mathbb{E}  \| \nabla L(W_{j}, B_j) - \nabla L(W_{j-1}, B_j) \|_2^2 + \mathbb{E} \| \nabla L(W_{k_0}) - \vecd^{k_0} \|_2^2\\
        & \leq B_2 L \sum_{j=k_0+1}^{k}\mathbb{E}   \| W_{j} - W_{j-1} \|_2^2 + \mathbb{E} \| \nabla L(W_{k_0}) - \vecd^{k_0} \|_2^2.
    \end{align*}

    Then, by summing from $k=k_0$ to $K-1$, we have
    \begin{align*}
        \sum_{k=k_0}^{K-1} \mathbb{E}\| \nabla L(W_{k}) - \vecd^{k} \|_2^2 \leq & B_2 L \sum_{k=k_0+1}^{K-1} \sum_{j=k_0+1}^{k} \mathbb{E}\| W_{j} - W_{j-1} \|_2^2 + (K-k_0)  \mathbb{E}   \| \nabla L(W_{k_0}) - \vecd^{k_0} \|_2^2 \\
        \leq & B_2 \eta^2 L \sum_{k=k_0+1}^{K-1} \sum_{j=k_0+1}^{k} \mathbb{E} \| v_{j-1} \|_2^2 + (K-k_0)  \mathbb{E}   \| \nabla L(W_{k_0}) - \vecd^{k_0} \|_2^2 \\
        \leq & B_2 q \eta^2 L \sum_{k=k_0+1}^{K-1} \| v_k \|_2^2 + (K-k_0)  \frac{\sigma^2}{S_1}.
    \end{align*}

    Finnaly, according to Lemma \ref{prop:suff2}, we have
    \begin{align*}
        \mathbb{E}[ \|\nabla L(W_{R})\|_2^2] \leq \frac{2(L(W_0)-f^*)}{\eta N} + \frac{\sigma^2}{S_1},
    \end{align*}
     where $R$ is chosen uniformly from $[N]$.
     Then, to minimize the total size of mini-batches $\sum_{i=1}^N |\mathcal{B}_i| \leq S_1(N/q+1) + S_2 N$ such that $N = \mathcal{O}(\epsilon^{-2}), q/S_2 = \mathcal{O}(1), S_1 = \mathcal{O}(\epsilon^{-2})$, we have
     $q = \mathcal{O}(\epsilon^{-1})$, $S_1 = \mathcal{O}(\epsilon^{-2})$, and $S_2=\mathcal{O}(\epsilon^{-1})$.
     Therefore, the corresponding total size of mini-batches (the number of iterations for subgraph sampling) is $\sum_{i=1}^N |\mathcal{B}_i| = \mathcal{O}(\epsilon^{-3})$.
\end{proof}

\begin{lemma}\label{prop:suff2}
    Suppose that function $f:\mathbb{R}^{n} \to \mathbb{R}$ is continuously differentiable. Consider an optimization algorithm with any bounded initialization $\vecx^1$ and an update rule in the form of
    \begin{align*}
        \vecx^{k+1} = \vecx^{k} - \eta \vecd(\vecx^{k}),
    \end{align*}
    where $\eta>0$ is the learning rate and $\vecd(\vecx^{k})$ is the estimated gradient that can be seen as a stochastic vector depending on $\vecx^{k}$. Suppose that
    \begin{enumerate}
        \item the optimal value $f^*  = \inf_{\vecx} f(\vecx)$ is bounded;
        
        \item the gradient of $f$ is $\gamma$-Lipschitz, i.e., 
        \begin{align*}
            \|\nabla f(\vecy) - \nabla f(\vecx)\|_2 \leq \gamma\|\vecy - \vecx\|_2,\,\forall\,\vecx,\vecy \in \mathbb{R}^{n};
        \end{align*}

        \item the sampling noise satisfies
        \begin{align*}
            \mathbb{E} [ \|\vecd(\vecx^{k}) -  \nabla f(\vecx) \|^2_2 ] \leq \frac{\sigma^2}{S_1}.
        \end{align*}

        \item the estimator errors $\| \nabla f(\vecx^k) - \vecd(\vecx^{k}) \|^2_2$ satisfy
        \begin{align*}
            \sum_{k=0}^{N-1}\| \nabla f(\vecx^k) - \vecd(\vecx^{k}) \|^2_2 \leq \frac{q \eta^2 L^2 }{S_2}  \sum_{k=0}^{N-1} \| \vecd(\vecx^{k}) \|_2^2  + N \frac{\sigma^2}{S_1};
        \end{align*}
    \end{enumerate}
    then by letting $\eta \leq \frac{1}{L (1 - 2 q L/S_2)}$, we have
    \begin{align*}
        \mathbb{E}[ \|\nabla f(\vecx^{R})\|_2^2] \leq \frac{2(f(x_0)-f^*)}{\eta N} + \frac{\sigma^2}{S_1},
    \end{align*}
     where $R$ is chosen uniformly from $[N]$.
\end{lemma}

\begin{proof}
    We have
    \begin{align*}
        f(\vecx^{k+1}) &\leq f(\vecx^{k}) - \eta \langle \nabla f(\vecx^{k}), \vecd(\vecx^{k}) \rangle+ \frac{\eta^2 \gamma}{2}\|\vecd(\vecx^{k})\|_2^2\\
        &= f(\vecx^{k}) - \frac{\eta}{2} \| \nabla f(\vecx^{k}) \|_2^2 - \frac{\eta(1-\eta L)}{2} \| \vecd(\vecx^{k})\|_2^2 + \frac{\eta}{2} \| \nabla f(\vecx^{k}) - \vecd(\vecx^{k} \|_2^2.
    \end{align*}
     By summing up the above inequalities for $k\in[N]$ and dividing both sides by $N \eta$, we have
    \begin{align*}
        & \frac{\sum_{k=1}^{N} \mathbb{E}[ \|\nabla f(\vecx^{k})\|_2^2]}{N}\\
        \leq{}& 2(\frac{f(\vecx^{1}) - \mathbb{E}[f(\vecx^{N})])}{N \eta } + \frac{1}{N} \sum_{k=1}^{N}\| \nabla f(\vecx^k) - \vecd(\vecx^{k}) \|^2_2 - (1-\eta L) \frac{1}{N}  \sum_{k=1}^{N} \| \vecd(\vecx^{k}) \|_2^2 \\
        \leq{}& \frac{2(f(\vecx^{1}) - f^*)}{N \eta } - (1- \eta L - \frac{q \eta L^2}{S_2}) \frac{1}{N} \sum_{k=1}^{N} \| \vecd(\vecx^{k}) \|_2^2 + \frac{\sigma^2}{S_1}\\
        \leq{}& \frac{2(f(x_0)-f^*)}{\eta N} + \frac{\sigma^2}{S_1}.
    \end{align*}
\end{proof}

\section{Potential Societal Impacts}

In this paper, we propose a novel and efficient subgraph-wise sampling method for the training of GNNs, i.e., LMC.
This work is promising in many practical and important scenarios such as search engine, recommendation systems, biological networks, and molecular property prediction.
Nonetheless, this work may have some potential risks. For example, using this work in search engine and recommendation systems to over-mine the behavior of users may cause undesirable privacy disclosure. 

\end{document}